\definecolor{b2}{RGB}{51,153,255}
\definecolor{mygreen}{RGB}{80,180,0}
\newcommand{\tabincell}[2]{\begin{tabular}{@{}#1@{}}#2\end{tabular}}
\theoremstyle{plain}
\newtheorem{theorem}{Theorem}[section]
\newtheorem{lemma}[theorem]{Lemma}
\newtheorem{fact}[theorem]{Fact}
\newtheorem{claim}[theorem]{Claim}
\newtheorem{corollary}[theorem]{Corollary}
\theoremstyle{definition}
\newtheorem{definition}[theorem]{Definition}
\theoremstyle{remark}
\newtheorem{remark}[theorem]{Remark}
\newcommand{\wh}{\widehat}
\renewcommand{\epsilon}{\varepsilon}
\renewcommand{\phi}{\varphi}
\newcommand{\A}{\mathcal{A}}
\newcommand{\I}{\mathbf{I}}
\newcommand{\N}{\mathcal{N}}
\newcommand{\R}{\mathbb{R}}
\newcommand{\HF}{\hat{F}}
\newcommand{\Fnr}{\hat{F}_{n_r}}
\newcommand{\K}{\mathcal{K}}
\renewcommand{\hat}{\wh}
\renewcommand{\d}{\mathrm{d}}
\newcommand{\ACSA}{\ensuremath{\mathsf{AC-}}\ensuremath{\mathsf{SA}}\xspace}
\newcommand{\METADP}{\ensuremath{\mathsf{META_{DP}}}\xspace}
\newcommand{\aux}{\ensuremath{\mathbf{aux}}}
\newcommand{\homega}{\hat{\omega}}
\DeclareMathOperator*{\E}{\mathbb{E}}
\DeclareMathOperator*{\M}{\mathcal{M}}
\DeclareMathAlphabet{\mathpzc}{OT1}{pzc}{m}{it}
\title{Private Non-smooth Empirical Risk Minimization and Stochastic Convex Optimization in Subquadratic Steps}
\author{
Janardhan Kulkarni\thanks{Algorithms Group, Microsoft Research (MSR) Redmond. \texttt{jakul@microsoft.com}}
\quad
Yin Tat Lee\thanks{\texttt{yintat@uw.edu}. University of Washington and Microsoft Research. Supported by NSF awards CCF-1749609, DMS-1839116, DMS-2023166, Microsoft Research Faculty Fellowship, Sloan Research Fellowship, Packard Fellowships.}
\quad 
Daogao Liu\thanks{\texttt{liudaogao@gmail.com}. University of Washington. Part of the work was done while visiting Shanghai Qi Zhi Institute.}
}
\date{}
\begin{document}

\begin{titlepage}
  \maketitle
  \begin{abstract}
We study the differentially private Empirical Risk Minimization (ERM) and Stochastic Convex Optimization (SCO) problems for non-smooth convex functions. 
We get a (nearly) optimal bound on the excess empirical risk and excess population loss with subquadratic gradient complexity.
More precisely, our differentially private algorithm requires $O(\frac{N^{3/2}}{d^{1/8}}+ \frac{N^2}{d})$ gradient queries for optimal excess empirical risk, which is achieved with the help of subsampling and smoothing the function via convolution. 
This is the first subquadratic algorithm for the non-smooth case when $d$ is super constant.
As a direct application, using the iterative localization approach of Feldman et al. \cite{fkt20}, we achieve the optimal excess population loss for stochastic convex optimization problem, with $O(\min\{N^{5/4}d^{1/8},\frac{ N^{3/2}}{d^{1/8}}\})$ gradient queries.
Our work makes progress towards resolving a question raised by Bassily et al. \cite{bfgt20}, giving first algorithms for private ERM and SCO with subquadratic steps. 

We note that independently Asi et al. \cite{afkt21} gave other algorithms for private ERM and SCO with subquadratic steps.

  \end{abstract}

  \thispagestyle{empty}
\end{titlepage}

%\newpage

{\hypersetup{linkcolor=black}
\tableofcontents
}
\newpage

%%%% Cut-line between first 10 pages and appendix

\section{Introduction}
%\textcolor{green}{I transfer the format and the previous version is in ''arxiv'' folder.}
Privacy has become an important consideration for learning algorithms dealing  with sensitive data. 
Over the past decade, differential privacy, introduced in the seminal work of \cite{dmns06}, has established itself as the defacto notion of privacy for machine learning problems. 
In this paper, we revisit Empirical Risk Minimization (ERM) and Stochastic Convex Optimization (SCO) problem, which are one of the most important and simplest problems in statistics and machine learning, in  differential privacy setting. 
In the ERM problem, we are given a family of convex functions $\{f(\cdot,x)\}_{x\in \Xi}$ over a closed convex set $\K\subset \R^d$, a data set $S=\{x_1,\cdots,x_N\}$ drawn from some unknown distribution ${\cal P}$ over the universe $\Xi$, and the objective is to
\begin{align*}
    \text{minimize}\quad \HF(\omega):=\frac{1}{N}\sum_{x_i\in S}f(\omega,x_i) \quad\text{over}\quad \omega\in \K,
\end{align*}
while in the SCO the objective is to
\begin{align*}
    \text{minimize}\quad F(\omega):=\E_{x\sim {\cal P}}f(\omega,x) \quad\text{over}\quad \omega\in \K,
\end{align*}

Differentially private 
%empirical risk minimization (DP-ERM) 
convex optimization has been studied extensively for over a decade now \cite{cm08,rbht09,cms11,kst12,jt14,ttz14,bst14,ttz15,kj16,wlk+17,fts17,zzmw17,wyx17,ins+19}. 
%TODO: State the earlier results. Use the list from papers from FKT paper where cite some papers from 2008.
Most of the previous results are focus on DP-ERM and roughly speaking, there are three major approaches in DP-ERM: output perturbation, objective perturbation, and gradient perturbation. 
Output perturbation approach is based on the sensitivity method proposed by  \cite{dmns06} and adds noise to the final output to the standard ERM problem \cite{cm08,rbht09,cms11,zzmw17}. 
Objective perturbation \cite{cm08,cms11,kst12,ttz14} means to perturb the objective function we want to minimize. 
In the gradient perturbation approach, we add noise to the first order information using optimization algorithms such as Stochastic Gradient Descent (SGD). 
%This approach was first proposed in \cite{bst14} along with the lower bounds when the loss functions are convex (or strongly-convex). And the gradient perturbation is extended by \cite{ttz14,wyx17}.
This approach was first proposed in \cite{bst14} and was later extended by \cite{ttz14,wyx17}, and has lead to the state-of-the-art theoretical bounds for DP-ERM.
For an experimental comparison of various approaches to solving DP-ERM we refer the readers to \cite{rbht09,ins+19}.

%Besides, there are  many other subsets of DP convex optimization such as DP online learning \cite{jkt12,djw13,ts13,hrs20,blm20}, kernel learning \cite{jt13}, DP regression analysis \cite{zzxyw12,wfwjn15} and so on.

%\subsection{Previous Result}

%There are three tables in \cite{wyx17} to compare the loss bound and corresponding complexity. Talbe 1 corresponds to theorem 4.2(strongly convex), table 2 corresponds to theorem 4.4(non-strongly convex) and table 3 corrresponds to theorem 5.4(high dimension). Recent work seems to consider other cases such as non-convex loss functions.

DP-ERM for smooth convex functions is well understood in the sense that we know (near) linear time algorithms that achieve optimal excess empirical risk. 
We refer the readers to \cite{wyx17} for more details. 
However, for the more general non-smooth convex loss functions our understanding is not yet complete, which is the focus of this paper.
A summary of the state-of-the-art results and our contributions for the non-smooth convex loss functions is given in Table~\ref{tab:general_convex} (General Convex) and Table~\ref{tab:strongly} (Strongly Convex).
We will discuss the concurrent work \cite{afkt21} separately at the end of the introduction, and the following discussion are only limited to the previous work.

\cite{kst12} used the objective perturbation method to design a DP-algorithm with $O(\frac{GD\sqrt{d}\log(1/\delta)}{\sqrt{N}\epsilon})$ excess empirical risk.
This result was improved significantly by \cite{bst14}, who first showed a lower bound of $\Omega(\min\{GD,\frac{GD\sqrt{d}}{N\epsilon}\})$ on the excess empirical risk for DP-ERM. 
Further, they gave an algorithm with excess empirical risk $O(\frac{GD \log ^{\frac{3}{2}}(N / \delta) \sqrt{d \log (1 / \delta)}}{N\epsilon})$, which is sub-optimal by a factor of  $\log ^{\frac{3}{2}}(N / \delta)$.
Their algorithm is based on a modification of SGD by adding Gaussian noise to the gradients to make it DP.
The privacy analysis proceeds via amplification by sampling and the strong composition theorem.
Roughly speaking, the logarithmic blowup in the excess empirical risk is due to two reasons: 1) the strong composition theorem requires that at each step one needs to add Gaussian noise with a larger variance;
%to make each iteration $\epsilon/(N\sqrt{\log(1/\delta)})$-DP, and get $\epsilon$-DP for the $N^2$ iteration by the strong composition. 
2) They used sub-optimal convergence rate $O(\log T/\sqrt{T})$ for $T$-step SGD. 

However, getting the optimal bounds with small gradient complexity for non-smooth case turns out to be a more difficult problem.
This was noted by \cite{wyx17}, who raised it as an important open problem. 
This question was answered in \cite{bftt19}, who gave an algorithm with almost optimal excess empirical risk.
 To achieve this, \cite{bftt19} first consider the smooth case, and give an improved privacy analysis via the Moments Accountant technique proposed by \cite{acg+16}. 
They extend their result to non-smooth case by applying Moreau-Yosida envelope technique (a.k.a. Moreau envelope smoothing) \cite{nes05} to make the function smooth.
However, this technique is  computationally inefficient and leads to $O(N^{4.5})$-gradient computations for the whole algorithm.
This limitation was overcome in a recent work of \cite{bfgt20} who gave the optimal excess empirical risk guarantee with $O(N^2)$-gradient computations. 
The privacy analysis of this result also used Moments Accountant method, and they used the standard online-to-batch conversion technique \cite{ccg04} to prove the high-probability bound on the excess empirical error of SGD, which leads to the near optimal bound in expectation.
We remark that all the papers \cite{bftt19,bfgt20} above not only study the ERM problem, but also consider more general  DP-SCO settings and uniform stability, and in some cases, results on ERM are byproducts of the more general results. 

As we can see from Table~\ref{tab:general_convex} and Table~\ref{tab:strongly}, all the previously known results (except the concurrent work \cite{afkt21}) achieving near optimal excess empirical risk bounds require at least $O(N^2)$-gradient computations.
It is natural to ask if there are lower bounds to rule out algorithms with subquadratic gradient complexity that can match the error bounds of the above results.
%This question was raised in the recent paper of \cite{wyx17} who asked whether the optimal bound is achievable with less time complexity, although ....

\begin{center}
\begin{table}[h]
    \centering
    \begin{tabular}{ | m{5em} | m{ 3.5cm}| m{1.7cm} |}
\hline
 & Excess Empirical Risk & Gradient Complexity \\ 
\hline 
 \cite{kst12} & $\frac{GD\sqrt{d}\log(1/\delta)}{\sqrt{N}\epsilon} $ & N/A\\
\hline
\cite{bst14} & $\frac{GD \log ^{\frac{3}{2}}(N / \delta) \sqrt{d \log (1 / \delta)}}{N\epsilon}$ & $N^2$ \\ 
\hline
\cite{bftt19} & $\frac{GD\sqrt{d\log(1/\delta)}}{N\epsilon}$ & $N^{4.5}$ \\ 
\hline
\cite{bfgt20} & $\frac{GD\sqrt{d\log(1/\delta)}}{N\epsilon}$  &  $N^{2}$\\
\hline
\cite{afkt21} & $\frac{GD\sqrt{d\log(1/\delta)}}{N\epsilon}$ & $N^2/\sqrt{d}$ \\ 
\hline
Ours & $\frac{GD\sqrt{d\log(1/\delta)}}{N\epsilon}$ & \tabincell{c}{$\frac{N^{3/2}}{d^{1/8}}+  \frac{N^2}{d}$}\\
\hline
\end{tabular}
    \caption{Comparisons with previous $(\epsilon,\delta)$-differential private algorithms when objective function is $G$-Lipschitz and convex over a convex set $\K\subset \R^d$ of diameter $D$. The results are stated asymptotically and the big $O$ notation is hidden for simplicity. The lower bound is $\Omega(\min\{GD,\frac{GD\sqrt{d}}{N\epsilon}\})$ \cite{bst14}.}
    \label{tab:general_convex}
\end{table}

\begin{table}[h!]
    \centering
    \begin{tabular}{ | m{5em} | m{ 3.5 cm}| m{1.7cm} |}
\hline
 & Excess Empirical Risk & Gradient Complexity \\
 \hline
 \cite{kst12} & $\frac{G^2d\log(1/\delta)}{\mu  N^{3/2} \epsilon^2}$ & N/A \\
\hline
\cite{bst14} & $\frac{G^2 \log ^2(N / \delta) d \log (1 / \delta)}{\mu N^2\epsilon^2}$ & $N^2$ \\ 
\hline
\cite{bftt19} & $\frac{G^2d\log(1/\delta)}{\mu N^2\epsilon^2}$ & $N^{4.5}$ \\ 
\hline
\cite{bfgt20} & $\frac{G^2d\log(1/\delta)}{\mu N^2\epsilon^2}$ &  $N^{2}$\\
\hline
Ours & $\frac{G^2d\log(1/\delta)}{\mu N^2\epsilon^2}$ & \tabincell{c}{$ \frac{N^{3/2}}{d^{1/8}}+\frac{N^2}{d}$} \\
\hline
\end{tabular}
    \caption{Comparisons with previous $(\epsilon,\delta)$-differential private algorithms when objective function is $G$-Lipschitz and $\mu$-strongly convex over a convex set $\K\subset \R^d$. The results are stated asymptotically and the big $O$ notation is hidden for simplicity. The lower bound is $\Omega(\min\{\frac{G^2}{\mu},\frac{G^2d}{\mu N^2\epsilon^2}\})$ \cite{bst14}.}
    \label{tab:strongly}
\end{table}
\end{center}
As Table~\ref{tab:general_SCO} and Table~\ref{tab:strongly_SCO} show, a similar situation arises in Stochastic Convex Optimization (SCO), which is a closely related problem compared to ERM.
In the SCO problem, we want to minimize the objective function $F(\omega)=\E_{x \sim {\cal P}}[f(\omega,x)]$  for some unknown distribution ${\cal P}$ over the universe $\Xi$. 
Many results for SCO  \cite{bst14,bftt19,bfgt20} are directly based on ERM; that is, solving the ERM and analyzing the generalization error. 
The first non-trivial result for general convex loss functions achieving excess population loss of $O\left( GD(\frac{d^{1/4}}{\sqrt{N}}+\frac{\sqrt{d}}{N\epsilon})\right)$ was given by \cite{bst14}, who showed the result by first solving the ERM problem and bounding the generalization error.
%for strongly-convex functions. 
They used the result on universal convergence directly, namely, bounding $\sup_{\omega \in \K}\E[F(\omega)-\HF(\omega)]$.
But this method has its limitations; For example, \cite{fel16} showed that lower bound of universal convergence is $\Omega(\sqrt{d/N})$ for some (not necessarily convex) loss functions. 
Later, \cite{bftt19}, \cite{fkt20} and \cite{bfgt20} obtained near optimal excess population loss with significantly better running times (gradient complexity). 
The privacy analysis in these papers relied on recent advances in the privacy techniques such as the Moments Accountant method \cite{acg+16}, Rényi differential privacy (RDP) \cite{mir18} and the Privacy Amplification by Iteration \cite{fmtt18} and other fast stochastic convex optimization algorithms such as \cite{jnn19}. 
The excess population loss bound in most of these works followed by solving a (phased) convex (regularized) ERM problem and then appealing to the uniform stability property \cite{hrs16} or the iterative localization approach \cite{fkt20} to do the generalization error analysis.

\begin{center}
    \begin{table}[h!]
    \centering
    \begin{tabular}{ | m{5em} | m{ 7 cm}| m{5cm} |}
\hline
 & Excess Population Loss & Gradient Complexity \\
\hline
\cite{bst14} & $GD(\frac{d^{1/4}\log(n/\delta)}{\sqrt{N}}+\frac{d^{1/2}\log^2(n/\delta))}{N\epsilon})$ & $N^2$ \\ 
\hline
\cite{bftt19} &$ GD(\frac{1}{\sqrt{N}}+\frac{\sqrt{d\log(1/\delta)}}{N\epsilon})$& $N^{4.5}$ \\ 
\hline
\cite{fkt20} & $ GD(\frac{1}{\sqrt{N}}+\frac{\sqrt{d\log(1/\delta)}}{N\epsilon})$ & $N^2\log(1/\delta)$ \\
\hline
\cite{bfgt20} & $ GD(\frac{1}{\sqrt{N}}+\frac{\sqrt{d\log(1/\delta)}}{N\epsilon})$ &  $N^{2}$\\
\hline
\cite{afkt21} & $ GD(\frac{1}{\sqrt{N}}+\frac{\sqrt{d\log(1/\delta)}}{N\epsilon})$ & $\min\{N^{3/2},N^2/\sqrt{d}\}$ \\
\hline
Ours & $GD(\frac{1}{\sqrt{N}}+\frac{\sqrt{d\log(1/\delta)}}{N\epsilon})$ & $\min\{N^{5/4}d^{1/8},N^{3/2}/d^{1/8}\}$ \\
\hline
\end{tabular}
    \caption{Comparisons with previous $(\epsilon,\delta)$-differential private algorithms when objective function is $G$-Lipschitz and convex over a convex set $\K\subset \R^d$. The results are stated asymptotically and the big $O$ notation is hidden for simplicity. The lower bound is $\Omega(GD(\frac{1}{\sqrt{N}}+\frac{\sqrt{d}}{N\epsilon}))$ \cite{bst14}.}
    \label{tab:general_SCO}
\end{table}

\begin{table}[h!]
    \centering
    \begin{tabular}{ | m{5em} | m{ 7 cm}| m{5 cm} |}
\hline
 & Excess Population Loss & Gradient Complexity \\
\hline
\cite{bftt19} & $\frac{G^2}{\mu}(\frac{1}{N}+\frac{d\log(1/\delta)}{N^2\epsilon^2})$ & $N^{4.5}$ \\ 
\hline
\cite{fkt20} & $\frac{G^2}{\mu}(\frac{1}{N}+\frac{d\log(1/\delta)}{N^2\epsilon^2})$ & $N^{2}\log(1/\delta)$ \\
\hline
\cite{bfgt20} & $\frac{G^2}{\mu}(\frac{1}{N}+\frac{d\log(1/\delta)}{N^2\epsilon^2})$ &  $N^{2}$\\
\hline
\cite{afkt21} & $\frac{G^2}{\mu}(\frac{1}{N}+\frac{d\log(1/\delta)}{N^2\epsilon^2})$ & $\min\{N^{3/2},N^2/\sqrt{d}\}$ \\
\hline
Ours & $\frac{G^2}{\mu}(\frac{1}{N}+\frac{d\log(1/\delta)}{N^2\epsilon^2})$ & $\min\{N^{5/4}d^{1/8},N^{3/2}/d^{1/8}\}$ \\
\hline
\end{tabular}
    \caption{Comparisons with previous $(\epsilon,\delta)$-differential private algorithms when objective function is $G$-Lipschitz and $\mu$-strongly convex over a convex set $\K\subset \R^d$. The results are stated asymptotically and the big $O$ notation is hidden for simplicity. The lower bound is $\Omega(\frac{G^2}{\mu}(\frac{1}{N}+\frac{d\log(1/\delta)}{N^2\epsilon^2}))$ \cite{bst14}.}
    \label{tab:strongly_SCO}
\end{table}
\end{center}

Despite these impressive improvements, as the Table~\ref{tab:general_SCO} and Table~\ref{tab:strongly_SCO} suggest, the previous algorithms (except the concurrent work \cite{afkt21}) which achieve the optimal excess population loss still require $O(N^2)$-gradient computations.
Indeed, \cite{bfgt20} write that \begin{quote}
    {\em `` Proving that quadratic running time is necessary for general non-smooth DP-SCO is a very interesting
    open problem...
    ''}\end{quote}
Understanding if the lower bound is the right answer to the above questions or one can design algorithms with subquadratic gradient complexity is the main motivation that spurred our work.

\subsection{Our Contributions}
Given the close connections between the ERM and SCO problems and the bottleneck on gradient complexity of all known algorithms, it is natural to ask if the open question raised in \cite{bfgt20} also holds for the ERM problem. 
As noted earlier, the state-of-art algorithms for DP-ERM achieving optimal excess empirical risk bounds require $O(N^2)$-gradient computations.

\iffalse
Non-smooth case is more difficult to solve with current methods, whether for SCO or ERM. \cite{wyx17} considered smooth convex and non-convex ERM, with such a discussion
\begin{quote}
    {\em `` For general non-smooth convex loss function (such as
SVM ), we do not know whether the optimal bound is achievable with less time complexity.''}
\end{quote}

More recently, both \cite{bftt19} and \cite{bfgt20} solved SCO based on solving ERM and using uniform stability. Other algorithms for non-smooth SCO such as \cite{fkt20}, all need at least quadratic gradient complexity. As quoted from \cite{bfgt20}:
\begin{quote}
    {\em `` Proving that quadratic running time is necessary for general non-smooth DP-SCO is a very interesting
    open problem...
    ''}
\end{quote}
\fi

The main contribution of this paper is to show that we can obtain subquadratic gradient complexity bound for ERM when the dimension is super constant.
%providing (some) evidence that we are likely to get faster algorithms for general non-smooth DP-SCO with subquadratic running time. 
In particular, for the important regime of over-parameterization ($d \geq N$), we achieve a bound of $N^{1+3/8}$.
Combining our private ERM algorithm and the iterative localization approach proposed in \cite{fkt20}, we can achieve optimal excess population loss with gradient complexity
$O\big( N+
    \min\{\sqrt{\epsilon}N^{5/4}d^{1/8},\frac{\epsilon N^{\frac{3}{2}}}{d^{1/8}\log^{1/4}(1/\delta)}\} 
    \big)$.
    
Let $\K_r=\{y\mid y=\omega+z,\omega\in \K,z\in \R^d,\|z\|_r\leq r \}$.
We now state the main technical contributions of this paper formally.

\begin{theorem}[DP-ERM]
\label{thm:DP-ERM}
Suppose ${\cal K}\subset \R^d$ is a closed convex set of diameter $D$ and $\{f(\cdot,x)\}_{x\in \Xi }$ is a family of $G$-Lipschitz and convex functions over $\K_r$, where $r=\frac{D\sqrt{d\log(1/\delta)}}{\epsilon N}$
\footnote{We only need consider the non-trivial case when $\frac{\sqrt{d\log(1/\delta)}}{\epsilon N}\leq 1$, or any feasible solution is good enough. This means that $r=O(D)$, which is a mild assumption.}. 
For $\epsilon,\delta\leq 1/2$, given any sample set $S$ consists of $N$ samples from $\Xi$ and arbitrary initial point $\omega_0\in \K$, we have a $(\epsilon,\delta)$-differentially private algorithm $\A$ which takes 
$$O\left(\frac{\epsilon N^{\frac{3}{2}}}{d^{1/8}\log^{1/4}(1/\delta)}+\frac{\epsilon^2 N^2}{d\log(1/\delta)}\right)$$ 
gradient queries and outputs $\omega_T$ such that
\begin{align*}
    \E[\HF(\omega_T)-\HF^*]=O\left(\frac{GD\sqrt{d\log(1/\delta)}}{\epsilon N}\right),
\end{align*}
where $D=\|\omega^*-\omega_0\|_2, \HF(\omega)=\frac{1}{N}\sum_{x_i\in S}f(\omega,x_i), \HF^*=\min_{\omega\in \K}\HF(\omega)$, and the expectation is taken over the randomness of the algorithm. 

Moreover, if $\{f(\cdot,x)\}_{x\in \Xi }$ is also $\mu$-strongly convex functions over $\K_r$, we have an $(\epsilon,\delta)$-differentially private algorithm which takes the same bound of gradient queries and outputs $\omega_T$ such that
\begin{align*}
    \E[\HF(\omega_T)-\HF^*]=O\left(\frac{G^2d\log(1/\delta)}{\mu\epsilon^2N^2}\right).
\end{align*}
\end{theorem}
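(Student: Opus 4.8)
The plan is to reduce the non-smooth problem to a smooth one by convolution smoothing, solve the smoothed problem with a noisy accelerated stochastic gradient method (\ACSA), and pay for privacy using minibatch subsampling together with an RDP / moments-accountant analysis. Fix $\nu>0$ and let $\HF_\nu(\omega):=\E_z[\HF(\omega+z)]$ with $z$ uniform on the $\ell_2$-ball of radius $\nu$ (equivalently $\HF_\nu=\HF\ast\rho_\nu$). Since each $f(\cdot,x)$ is $G$-Lipschitz and convex on $\K_r$ and $\nu\le r$, standard randomized-smoothing estimates give that $\HF_\nu$ is convex, $G$-Lipschitz, $\beta$-smooth with $\beta=O(G\sqrt d/\nu)$, and satisfies $\sup_{\omega\in\K}|\HF_\nu(\omega)-\HF(\omega)|\le G\nu$. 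Hence, writing $\alpha:=\frac{GD\sqrt{d\log(1/\delta)}}{\eps N}$, it suffices to output $\omega_T\in\K$ with $\E[\HF_\nu(\omega_T)-\HF_\nu^*]=O(\alpha)$ provided the smoothing bias $G\nu$ is also $O(\alpha)$; this forces $\nu\asymp r$, and then $\beta=O\!\big(GN\eps/(D\sqrt{\log(1/\delta)})\big)$.

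To run \ACSA on $\HF_\nu$ over $\K$ we use the unbiased oracle $\nabla f(\omega+z,x_i)$ with $i$ drawn uniformly (this is where subsampling enters) and $z$ a fresh perturbation; averaging $b$ such samples gives an oracle of norm $\le G$ and variance $O(G^2/b)$ that only queries points of $\K_r$ since $\|z\|\le\nu\le r$. To privatize, at each of the $T$ iterations we add $\xi\sim\N(0,\sigma_{DP}^2 I_d)$ to the minibatch gradient; no clipping is needed because $\|\nabla f\|\le G$. Each iteration is a subsampled Gaussian mechanism with sampling rate $b/N$ and $\ell_2$-sensitivity $2G/b$, so by the RDP bound for the subsampled Gaussian plus RDP composition over $T$ rounds the whole algorithm is $(\eps,\delta)$-DP once $\sigma_{DP}=\Theta\!\big(G\sqrt{T\log(1/\delta)}/(\eps N)\big)$, as long as the noise multiplier $\sigma_{DP}/(2G/b)\gtrsim 1$ (equivalently $b\sqrt T\gtrsim \eps N/\sqrt{\log(1/\delta)}$) and $b/N$ is small, which the final parameters will respect.

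The \ACSA convergence theorem for a $\beta$-smooth convex objective with a variance-$\sigma_{\mathrm{tot}}^2$ oracle gives $\E[\HF_\nu(\omega_T)-\HF_\nu^*]=O\!\big(\beta D^2/T^2+\sigma_{\mathrm{tot}}D/\sqrt T\big)$ with $\sigma_{\mathrm{tot}}^2=O(G^2/b+\sigma_{DP}^2 d)$. Substituting $\sigma_{DP}$, the DP-noise contribution is $\sigma_{DP}\sqrt d\,D/\sqrt T=\Theta(\alpha)$ for \emph{every} $T$, so the added noise never hurts the rate; the data-sampling contribution $GD/\sqrt{bT}$ is $O(\alpha)$ iff $bT=\Omega(\eps^2N^2/(d\log(1/\delta)))$; and the acceleration term $\beta D^2/T^2$ is $O(\alpha)$ iff $T=\Omega(\sqrt{\beta D^2/\alpha})=\Omega(\eps N/(d^{1/4}\sqrt{\log(1/\delta)}))$. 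Minimizing the gradient count $bT$ over these three constraints together with the privacy-regime constraint $b\sqrt T\gtrsim\eps N/\sqrt{\log(1/\delta)}$ (set $T$ at its lower bound, then solve for $b$) yields $bT=O\!\big(\eps N^{3/2}/(d^{1/8}\log^{1/4}(1/\delta))+\eps^2N^2/(d\log(1/\delta))\big)$, the claimed complexity; composing with the $O(\alpha)$ smoothing bias gives $\E[\HF(\omega_T)-\HF^*]=O(\alpha)$.

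For the strongly convex statement, observe that $\HF_\nu$ is additionally $\mu$-strongly convex ($\nabla^2\HF_\nu=\E_z[\nabla^2\HF(\cdot+z)]\succeq\mu I$); running the multi-stage strongly-convex variant of \ACSA with the same oracle, whose guarantee is $O\!\big(\beta D^2 e^{-cT/\sqrt{\beta/\mu}}+\sigma_{\mathrm{tot}}^2/(\mu T)\big)$, the DP-noise term equals $\Theta(\alpha_\mu)$ for $\alpha_\mu:=\frac{G^2d\log(1/\delta)}{\mu\eps^2N^2}$ for all $T$, the data term forces $bT=\Omega(\eps^2N^2/(d\log(1/\delta)))$, and the geometric term forces $T=\wt{\Omega}(\sqrt{\beta/\mu})=\wt{\Omega}(\eps N/(d^{1/4}\sqrt{\log(1/\delta)}))$, so the identical balancing (now with $\nu\asymp\alpha_\mu/G$) gives the same gradient complexity and $\E[\HF(\omega_T)-\HF^*]=O(\alpha_\mu)$; alternatively one can obtain it by a black-box strongly-convex-to-convex reduction. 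The main obstacle is the privacy accounting rather than the optimization: one must carry out the subsampled-Gaussian RDP analysis over $T$ adaptive rounds and verify that the scale $\sigma_{DP}\asymp G\sqrt{T\log(1/\delta)}/(\eps N)$ needed for $(\eps,\delta)$-DP is simultaneously small enough to keep the \ACSA noise term at the optimal level $\alpha$ and large enough — relative to the per-step sensitivity $2G/b$, with $b/N$ small — for the amplification-by-subsampling bound to be in its valid regime; it is precisely this tension that rules out $b=1$ and produces the $N^{3/2}/d^{1/8}$ term, while the remaining ingredients (the randomized-smoothing inequalities and the \ACSA convergence bounds) are standard.
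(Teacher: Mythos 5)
Your overall plan — convolution smoothing, noisy mini-batch accelerated stochastic approximation, privacy amplification by subsampling — is the same as the paper's, and your balancing of the three error terms and the final parameter choices match. But your route through the two cases, and one of the constraints you wrote, differ enough to comment on.

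\textbf{Different decomposition.} The paper only invokes the \emph{strongly convex} $\ACSA$ guarantee (Proposition 9 of Ghadimi--Lan), which with the convolution smoothing yields a bound of the form $O\bigl(\tfrac{G^2 d\log(1/\delta)}{\mu\eps^2 N^2}+\tfrac{GD\sqrt{d\log(1/\delta)}}{\eps N}\bigr)$; a $\lceil\log\log N^3\rceil$-phase restart (Lemma~\ref{lm:remove_dia}) then removes the second term, and the general convex case is obtained afterward by adding a small $u\|\omega\|^2$ regularizer (Lemma~\ref{lm:convex_to_strongly}). You instead invoke the general-convex $\ACSA$ rate $O(\beta D^2/T^2 + \sigma_{\mathrm{tot}}D/\sqrt{T})$ directly (which is not the statement the paper imports) and observe that the DP-noise term is $\Theta(\alpha)$ for every $T$. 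This is a legitimate and arguably cleaner path for the general convex case, since it bypasses both reduction lemmas. For the strongly convex case you appeal to a multi-stage strongly-convex $\ACSA$ with geometric decay, or a black-box strongly-to-general reduction. The black-box reduction is exactly the paper's Lemma~\ref{lm:remove_dia}-type restart, so that alternative is sound; the multi-stage route requires $\nu\asymp\alpha_\mu/G$, which may exceed $r$ (you only assume Lipschitzness on $\K_r$), so you should either cap $\nu$ at $r$ and check the regime, or fall back on the restart argument.

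\textbf{Privacy constraint is misstated.} You write that it suffices to have the noise multiplier $\sigma_{DP}/(2G/b)\gtrsim 1$, i.e.\ $b\sqrt{T}\gtrsim\eps N/\sqrt{\log(1/\delta)}$. This is too weak: the subsampled-Gaussian amplification (whether via RDP or the paper's tCDP, Theorem~\ref{thm:tCDP}) must hold up to R\'enyi order $\alpha\approx\log(1/\delta)/\eps$, which forces $\sigma_{mult}^2\gtrsim\log(1/\delta)/\eps$, equivalently $\eps\lesssim b^2T/N^2$, i.e.\ $b\sqrt{T}\gtrsim N\sqrt{\eps}$. This is exactly the paper's condition $\eps\le c_1B^2T/N^2$ in Lemma~\ref{lem:ACSA_DP}. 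With the constraint you wrote, carrying out the minimization of $bT$ at $T\asymp\eps N/(d^{1/4}\sqrt{\log(1/\delta)})$ gives $bT\gtrsim\eps^{3/2}N^{3/2}/(d^{1/8}\log^{3/4}(1/\delta))$, which is \emph{not} the bound you (correctly) state in the next sentence; with the correct constraint $b\sqrt{T}\gtrsim N\sqrt{\eps}$ one gets $bT\gtrsim \eps N^{3/2}/(d^{1/8}\log^{1/4}(1/\delta))$, matching the theorem. So the final answer is right, but the stated privacy-regime inequality does not produce it and must be replaced by $b^2T/N^2\gtrsim\eps$.
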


As we have mentioned, combining our private ERM algorithm with the iterative localization technique, we can also give the first algorithm achieving optimal excess population loss with (strictly) sub-quadratic steps for all dimensions:
\begin{theorem}[DP-SCO]
\label{thm:DP-SCO}
Suppose $\epsilon,\delta\leq 1/2$ and sample set $S$ consists of $N$ samples drawn i.i.d from a distribution ${\cal P}$ over $\Xi$.
Let $\{f(\cdot,x)\}_{x\in \Xi}$ is convex and $G$-Lipschitz with respect to $\ell_2$ norm and convex over $\K_r$, where $r=\frac{D\sqrt{d\log(1/\delta)}}{\epsilon N}$, there is an $(\epsilon,\delta)$-differentially private algorithm which takes
\begin{align*}
    O(N+\min\{\sqrt{\epsilon}N^{5/4}d^{1/8},\frac{\epsilon N^{3/2}}{d^{1/8}\log^{1/4}(1/\delta)}\})
\end{align*}
gradient queries to get a solution $\omega_T$
\begin{align*}
    \E[F(\omega_T)-F(\omega^*)]=O(GD(\frac{1}{\sqrt{N}}+\frac{\sqrt{d\log(1/\delta)}}{N\epsilon}).
\end{align*}
Moreover, if $\{f(\cdot,x)\}_{x\in \Xi}$ is also $\mu$-strongly convex over $\K_r$, we can meet the same gradient query complexity and get a solution $\omega_T$ such that:
\begin{align*}
    \E[F(\omega_T)-F(\omega^*)]=O\left(\frac{G^2}{\mu}(\frac{d\log(1/\delta)}{\epsilon^2N^2}+\frac{1}{N})\right).
\end{align*}
\end{theorem}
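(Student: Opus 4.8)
The plan is to derive Theorem~\ref{thm:DP-SCO} from Theorem~\ref{thm:DP-ERM} by following the iterative localization template of~\cite{fkt20}, so the real work is bookkeeping the gradient budget and the privacy parameters through the phases rather than proving anything genuinely new about generalization. First I would set up the standard localization recursion: we run $k = \Theta(\log N)$ phases, where in phase $i$ we solve a regularized ERM problem on a fresh subsample $S_i$ of size $n_i$ (geometrically varying, say $n_i \propto 2^{-i} N$ or the balanced choice from~\cite{fkt20}), with regularizer $\frac{\mu_i}{2}\|\omega - \omega_{i-1}\|_2^2$ centered at the previous iterate and $\mu_i$ chosen so that the regularized objective is $\mu_i$-strongly convex. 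Because the per-function losses are $G$-Lipschitz and convex over $\K_r$ (and the $r$-enlargement is exactly what makes the convolution-smoothing step inside Theorem~\ref{thm:DP-ERM} legitimate), each phase is an instance to which the strongly convex half of Theorem~\ref{thm:DP-ERM} applies verbatim, giving excess empirical risk $O\!\big(\frac{G^2 d\log(1/\delta)}{\mu_i \epsilon^2 n_i^2}\big)$ on $S_i$ with gradient cost $O\!\big(\frac{\epsilon n_i^{3/2}}{d^{1/8}\log^{1/4}(1/\delta)} + \frac{\epsilon^2 n_i^2}{d\log(1/\delta)}\big)$.

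The second step is the generalization/telescoping argument, which I would import wholesale from~\cite[Theorem 4.4 or the analogous localization lemma]{fkt20}: combining the strong convexity of each phase's objective with the stability-based population bound for strongly convex ERM and summing the regularization ``drift'' terms $\frac{\mu_i}{2}\|\omega_{i-1}-\omega^*\|^2$ telescopically, one gets that the final iterate $\omega_T$ satisfies $\E[F(\omega_T) - F(\omega^*)] = O\!\big(GD(\tfrac{1}{\sqrt N} + \tfrac{\sqrt{d\log(1/\delta)}}{N\epsilon})\big)$ once the $\mu_i$ and $n_i$ are tuned to balance the statistical term $G^2/(\mu_i n_i)$, the optimization/privacy term $G^2 d\log(1/\delta)/(\mu_i \epsilon^2 n_i^2)$, and the drift term $\mu_i D_i^2$ across phases. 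For the strongly-convex version of the theorem one skips the outer regularization and applies the strongly convex ERM bound directly on subsamples, again following~\cite{fkt20}. Privacy composes cleanly: the subsamples $S_i$ are disjoint (or we use privacy amplification by subsampling), so by parallel composition the whole procedure is $(\epsilon,\delta)$-DP if each phase is; and post-processing handles the fact that $\omega_{i-1}$ feeds into phase $i$'s objective, since $\omega_{i-1}$ is itself a DP function of $S_1,\dots,S_{i-1}$ which are disjoint from $S_i$.

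The gradient-complexity accounting is where I would be most careful. Summing $\sum_i \frac{\epsilon n_i^{3/2}}{d^{1/8}\log^{1/4}(1/\delta)}$ over the geometric sequence $n_i$ is dominated (up to the constant from the geometric series) by the first term; but the optimal localization schedule for non-smooth SCO is \emph{not} a simple halving --- one wants the phase sizes tuned so that the per-phase accuracy improves geometrically, which in~\cite{fkt20} leads to $n_i \approx N/2^i$ but with $k \approx \log N$ phases and the error contributions balanced. Plugging $n_i = N 2^{-i}$ into the ERM cost and summing gives total cost $O\!\big(\frac{\epsilon N^{3/2}}{d^{1/8}\log^{1/4}(1/\delta)} + \frac{\epsilon^2 N^2}{d\log(1/\delta)}\big)$, which already matches one branch of the claimed $\min$. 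To obtain the \emph{other} branch $\sqrt{\epsilon}N^{5/4}d^{1/8}$, the trick (again from the localization literature, and the reason the bound is a $\min$) is that in the regime where $d$ is large one does not need to drive the ERM error all the way down in each phase --- one can afford a coarser subsample and/or fewer phases, and rebalancing the three error terms against the gradient budget yields the $N^{5/4}d^{1/8}$ expression; concretely this comes from choosing the phase count and the $\mu_i$ so that the convolution-smoothing radius and the subsample size trade off differently. I would present this as: take the algorithm $\A$ of Theorem~\ref{thm:DP-ERM} as a black box, state the localization meta-algorithm, invoke the~\cite{fkt20} generalization lemma for the error bound, and then do the two-regime optimization of $\sum_i (\text{cost}_i)$ subject to the error constraints, which is elementary calculus once the constraints are written down. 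The main obstacle is precisely ensuring the $r$-enlargement $\K_r$ used in each phase is consistent across phases (the radius $r$ depends on the \emph{phase} sample size $n_i$, not $N$), so one must verify the Lipschitz-over-$\K_r$ hypothesis is inherited correctly at every level of the recursion and that the diameters $D_i$ shrink as the localization predicts; this is the kind of detail that is routine in~\cite{fkt20} for the exact-gradient setting but needs a line of care here because our ERM subroutine only works on the enlarged domain.
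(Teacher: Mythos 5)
Your high-level plan is right as far as it goes—iterative localization with a geometrically-decaying schedule $n_i = N/2^i$, $\epsilon_i = \epsilon/2^i$, a regularizer centered at $\omega_{i-1}$, and a black-box DP-ERM solve per phase is exactly the skeleton the paper uses (Theorem~\ref{thm:local})—but you invoke the wrong ERM subroutine, and that is not a cosmetic difference: it makes the claimed gradient complexity unreachable.

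Concretely, you plug in the strongly-convex half of Theorem~\ref{thm:DP-ERM}, which costs $O\big(\frac{\epsilon n^{3/2}}{d^{1/8}\log^{1/4}(1/\delta)} + \frac{\epsilon^2 n^2}{d\log(1/\delta)}\big)$ gradients per phase in order to hit $O\big(\frac{G^2 d\log(1/\delta)}{\mu \epsilon^2 n^2}\big)$ empirical error. Summing over the $n_i$ schedule leaves a $\Theta\big(\frac{\epsilon^2 N^2}{d\log(1/\delta)}\big)$ term that is simply not present in the stated bound of Theorem~\ref{thm:DP-SCO} and dominates badly when $d$ is small (for $d = O(1)$ it is quadratic in $N$, whereas the theorem promises $\sqrt{\epsilon}N^{5/4}d^{1/8}$). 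You correctly sense that some slack is being exploited to buy the $\sqrt{\epsilon}N^{5/4}d^{1/8}$ branch, but you locate it in the wrong place: it does not come from shortening the localization schedule or coarsening the subsamples. The phase schedule is the standard one in both branches. The slack lives \emph{inside} the per-phase ERM call: Theorem~\ref{thm:local} only needs each phase to achieve $O\big(\frac{G^2}{\mu}\big(\frac{1}{n} + \frac{d\log(1/\delta)}{\epsilon^2 n^2}\big)\big)$, i.e. it tolerates the extra statistical term $G^2/(\mu n)$ on top of the privacy term. The paper exploits this in Lemma~\ref{lm:DPERM_ite}, a separate parameterization of Private $\ACSA$ with $T = \Theta\big(\min\{N^{1/2}d^{1/4}, \frac{N\epsilon}{d^{1/4}\sqrt{\log(1/\delta)}}\}\big)$ and $B = \Theta(N/T + N\sqrt{\epsilon/T})$; choosing $T$ this much smaller is legal precisely because you only need $\frac{G^2}{\mu BT} = O(\frac{G^2}{\mu N})$ rather than $O(\frac{G^2 d\log(1/\delta)}{\mu\epsilon^2 N^2})$, and $BT = O(N + N\sqrt{\epsilon T})$ gives the $O\big(N + \min\{\sqrt{\epsilon}N^{5/4}d^{1/8}, \frac{\epsilon N^{3/2}}{d^{1/8}\log^{1/4}(1/\delta)}\}\big)$ cost. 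Without this lemma (or something equivalent), your construction does not reach the stated bound.

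Two smaller points. First, you would also need the reduction of Lemma~\ref{lm:remove_dia} applied to Lemma~\ref{lm:DPERM_ite}, because the raw per-phase guarantee also carries a $GD\sqrt{\zeta}$ term that must be killed via the doubling trick before Theorem~\ref{thm:local} can be applied; your write-up does not engage with this. Second, for the strongly convex SCO case the paper does not ``skip the outer regularization and apply the strongly convex ERM bound directly'' as you suggest—it first proves the general-convex SCO bound and then strips the diameter dependence via the reduction in Lemma~\ref{lm:remove_dia2}. Your suggested route might also work, but it is a different argument and you would need to rebalance the phase budget and re-verify privacy composition from scratch rather than inheriting it from the general-convex case.
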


Finally, we note that our results can also capture the regularized ERM and SCO, which shows up often in the previous work such as \cite{rbht09,kst12,wyx17,ins+19}.
Briefly, in the regularized problem, there is one more simple (and convex) function $h(\omega)$ added to the objective function to encourage certain solutions with better structure. The objective function then takes the form $\frac{1}{N}\sum_{x_i\in S}f(\omega,x_i)+h(\omega)$.
We get asymptotically same results for the regularized ERM/SCO problem with straightforward modifications. %Under very mild assumption on $r(\cdot)$, such as the Lipschitz constant will not change a lot, we can still get the same asymptotic results.
%In fact, by the lower bound of Gradient Descent, we know the lower bound of private ERM will be $\Omega(\frac{\epsilon^2N^2}{d\log(1/\delta)})$, meaning our results are optimal at least for $d=O(N^{4/7})$.

\subsection{Our Techniques}
Most of the previous works \cite{bst14,bftt19,bfgt20} that achieve near optimal bounds for ERM and SCO are based on adaptations of SGD to make it differentially private.
The information theoretic lower bound of $\Omega(1/\sqrt{T})$ for $T$-step SGD may be one of the important reasons why we can not get subquadratic gradient complexity for non-smooth convex ERM easily. 
Consider the algorithm in \cite{bfgt20} as an example. 
It needs to add Gaussian noise $v\sim\N(0,\sigma^2I_{d\times d})$ with $\sigma^2=\frac{G^2\log(1/\delta)}{\epsilon^2}$ to each gradient. 
By a standard analysis of SGD, we can only show an excess empirical risk of $\Theta(\frac{D\sqrt{d\sigma^2}}{\sqrt{T}})$, which requires us to set $T=\Omega(N^2)$ to get ideal bound, thus hitting the quadratic barrier.

%In this paper, we obtain bett differentially private strongly convex and general convex ERM. 
We deviate from the above approaches for designing private algorithms for non-smooth functions.
First notice that the gradient complexity $O(\frac{\epsilon N^{\frac{3}{2}}}{d^{1/8}\log^{1/4}(1/\delta)}+\frac{\epsilon^2 N^2}{d\log(1/\delta)})$ in Theorem~\ref{thm:DP-ERM} is the same for both strongly convex and general non-smooth functions; same holds for DP-SCO.
This is not a coincidence; If we can achieve optimal empirical risk (population loss) for one case, then we can achieve optimal empirical risk (population loss) for another with the same privacy guarantee and gradient complexity.
In fact, the Figure~\ref{fig:reductions} shows the relationship among these different problems.
\begin{figure}[ht]
    \centering
    \includegraphics[width = .5\textwidth]{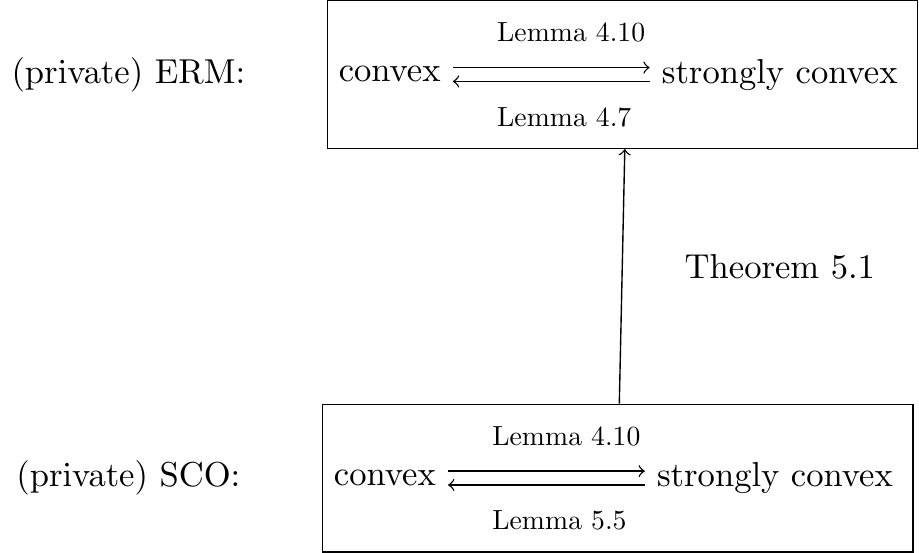}
    %\hfill
    %\includegraphics[width = .4\textwidth]{twodim_fre.pdf}
    \caption{Reductions between ERM and SCO for general convex and strongly convex cases.
    As the lower bound of excess population loss is $\Omega(GD(\frac{1}{\sqrt{N}}+\frac{\sqrt{d}}{N\epsilon}))$ while the lower bound of empirical risk is $\Omega(\frac{GD\sqrt{d}}{N\epsilon})$, we do not know how to reduce from ERM to SCO. 
    }
    \label{fig:reductions}
\end{figure}

Our result for the general convex non-smooth case is obtained by providing a reduction to the strongly convex non-smooth case.
Thus, our task becomes designing better algorithms for the strongly convex non-smooth functions.
Rather than using SGD, we let the objective function take convolution with a sphere kernel to make it smooth. 
We then use the accelerated stochastic approximation algorithm in \cite{gl12} for solving strongly convex stochastic optimization problems.
However, this is not enough, as the required noise that needs to be added to the gradients to make the algorithm private is too large to get subquadratic gradient complexity, even if we use the tighter Moments Accountant technique \cite{acg+16}. 
We overcome this by increasing the batch size to an appropriate value.
Combining these ideas together, we show that the amount of noise we add can be reduced to achieve the optimal excess empirical loss, and we get the gradient complexity of $O(\max\{N^{3/2}/d^{1/8}, N^2/d \})$. 

For SCO, we get the gradient complexity of $O(\min\{N^{5/4} d^{1/8}, N^{3/2}/d^{1/8} \})$ via a direct application of the iterative localization approach of Feldman et al \cite{bfgt20}.
The intuition behind iterative localization is using private ERM to solve regularized objective functions which have low sensitivity, iteration by iteration.
Each iteration reduces the distance to an approximate minimizer by a multiplicative factor, so after logarithmic number of phases we are done.

\subsection{Concurrent and Independent Work}
In an independent and concurrent work, \cite{afkt21} give a new analysis of private regularized mirror descent to do the private ERM.
Then they combine the iterative localization approach to achieve the optimal excess population loss for SCO.
Their result also achieves subquaratic gradient complexity.
More formally, they get $O\left(\log N \cdot \min \left(N^{3 / 2} \sqrt{\log d}, N^{2} / \sqrt{d}\right)\right)$ for SCO in query complexity.
We compare their gradient complexity with ours in Figure~\ref{fig:compare_result}.
Finally, we remark that the main motivation of \cite{afkt21} was to study SCO problem in more general $\ell_p$ norms as much of the literature has focussed on the $\ell_2$-norm.
They also give news results in $\ell_p$-bounded domain together with another concurrent work \cite{bgn21}.

\begin{figure}[h]
    \centering
    \includegraphics[width = .6 \textwidth]{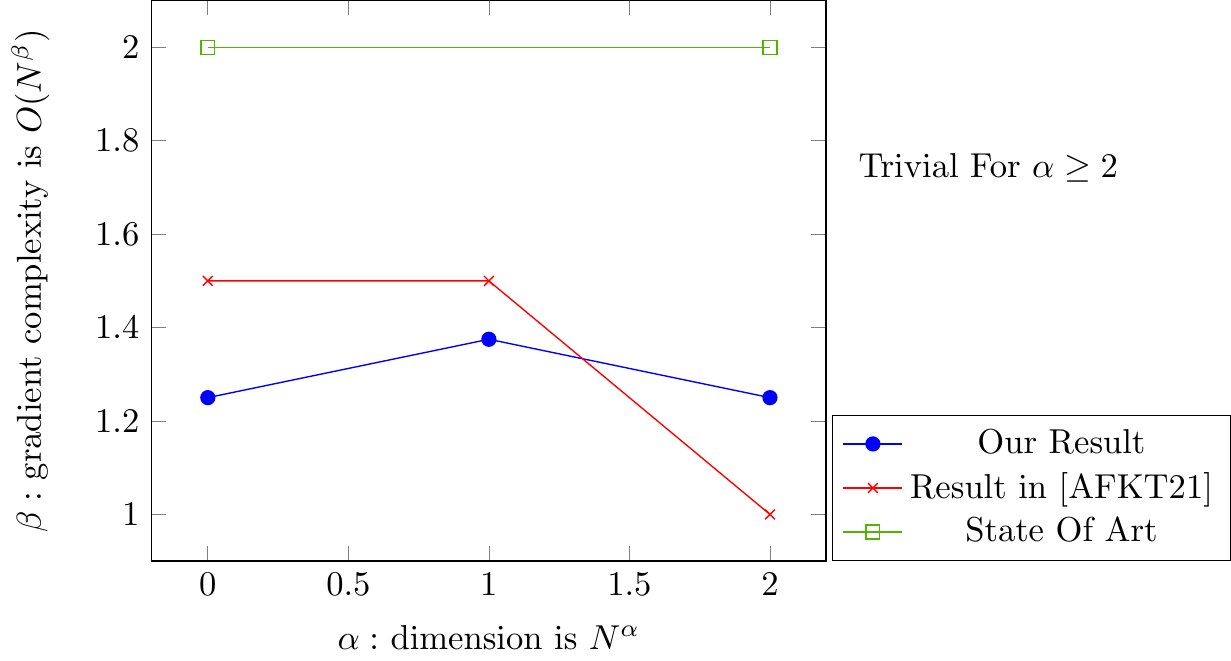}
    %\hfill
    %\includegraphics[width = .4\textwidth]{twodim_fre.pdf}
    \caption{Comparison among our results, the recent result in \cite{afkt21} and the previous best one for the non-trivial regime ($d\leq N^2)$. Suppose $\epsilon,\delta$ are small constants. Our result is faster for the important case $d \leq N^{1+1/3}$.}
    \label{fig:compare_result}
\end{figure}

\subsection*{Road map}
We will give some basic definitions and theorems about convex optimization and differential privacy in Section~\ref{sec:prel}.
In Section~\ref{sec:framework}, we give a general algorithm framework for private convex optimization.
The results of DP-ERM are given in Section~\ref{sec:DP-ERM} and the results of DP-SCO are shown in Section~\ref{sec:DP-SCO}.
Some technical proofs are left in Appendix~\ref{sec:appendix_a}.
\section{Preliminaries}
\label{sec:prel}
In this section, we briefly recall some of the main definitions we use from the convex optimization theory and differential privacy.
We refer the readers to excellent books \cite{nes05, dr14} for more details on these topics.
\subsection{Convex Optimization}

\begin{definition}[Empirical risk minimization, Stochastic Convex Optimization]
Let ${\cal K}\subset \R^d$ be a closed convex set of diameter $D$. 
Given a family of convex loss functions $\{f(\omega,x)\}_{x\in \Xi}$ of $\omega$ over ${\cal K}$ and a set of samples $S=\{x_1,\cdots,x_n\}$ over the universe $\Xi$, the objective of Empirical Risk Minimization (ERM) is to minimize $$\HF(\omega)=\frac{1}{N}\sum_{x_i\in S}f(\omega,x_i).$$
The excess empirical loss with respect to a solution $\omega$ is defined by $\HF(\omega)-\HF^*$,
where $\HF^*=\min_{\omega\in \K}\HF(\omega)$.

Stochastic Convex Optimization (SCO) wants to output a solution $\omega$ to minimize the expected loss (also referred to {\sl population loss}) $F(\omega)-F^*$ where $F(\omega)=\E[x\sim {\cal P}]f(\omega,x)$ and $F^*=\min_{\omega\in \K}F(\omega)$.
\end{definition}

\begin{definition}[$L$-Lipschitz Continuity]
A function $f:{\cal K}\rightarrow \R$ is $L$-Lipschitz continuous over the domain ${\cal K}\subset \R^{d}$ if the following holds for all $\omega,\omega'\in {\cal K}:|f(\omega)-f(\omega')|\leq L\|\omega-\omega'\|_2$. 
\end{definition}

\begin{definition}[$\beta$-Smoothness]
A function $f:{\cal K}\rightarrow \R$ is $\beta$-smooth over the domain ${\cal K}\subset \R^{d}$ if for all $\omega,\omega'\in{\cal K}$, $\|\nabla f(\omega)-\nabla f(\omega')\|_2\leq \beta \|\omega-\omega'\|_2$.
\end{definition}

\begin{definition}[$\mu$-Strongly convex]
A differentiable function $f:\K\rightarrow \R$ is called strongly convex with parameter $\mu>0$ if the following inequality holds for all points $\omega,\omega'\in {\cal K}$,
\[
\langle \nabla f(\omega)-\nabla f(\omega'), \omega-\omega' \rangle\geq \mu \|\omega-\omega'\|_2^2.
\]
Equivalently,
\[
f(\omega')\geq f(\omega) +\nabla f(\omega)^{\top}(\omega'-\omega)+\frac{\mu}{2}\|\omega'-\omega\|_2^2.
\]
\end{definition}

\subsection{Differential Privacy}
\begin{definition}[Differential privacy]
A randomized mechanism $\M$ is $(\epsilon,\delta)$-differentially private if for any event ${\cal O}\in \mathrm{Range}(\M)$ and for any neighboring databases that differ in a single data element, one has
\begin{align*}
    \Pr[\M(S)\in {\cal O}]\leq \exp(\epsilon)\Pr[\M(S')\in {\cal O}] +\delta.
\end{align*}
\end{definition}

\begin{lemma}[Proposition 2.1 in \cite{dr14}](Post-Processing)
\label{lm:post_processing}
Let $\M: \mathbb{N}^{|\Xi|}\rightarrow R$ be a randomized algorithm that is $(\epsilon,\delta)$-differentially private. Let $f:R\rightarrow R'$ be an arbitrary randomized mapping. Then $f \circ \M:\mathbb{N}^{|\Xi|}\rightarrow R'$ is $(\epsilon,\delta)$-differentally private. 
\end{lemma}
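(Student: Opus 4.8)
The plan is to reduce the general case (where $f$ is a randomized mapping) to the case where $f$ is deterministic, and then argue directly from the definition of differential privacy by pulling back events along $f$.

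First I would handle deterministic $f$. Fix a measurable set $\mathcal{O}\in\mathrm{Range}(f\circ\M)$ and let $\mathcal{O}':=f^{-1}(\mathcal{O})\subseteq R$ be its preimage, which is a measurable event in $R$ because $f$ is measurable. By definition of composition, for every database $S$ the events $\{f(\M(S))\in\mathcal{O}\}$ and $\{\M(S)\in\mathcal{O}'\}$ coincide, so $\Pr[f(\M(S))\in\mathcal{O}]=\Pr[\M(S)\in\mathcal{O}']$. Applying the $(\epsilon,\delta)$-differential privacy of $\M$ to the event $\mathcal{O}'$ and to an arbitrary pair of neighboring databases $S,S'$ gives $\Pr[\M(S)\in\mathcal{O}']\leq e^{\epsilon}\Pr[\M(S')\in\mathcal{O}']+\delta$, and translating back through the identity above yields $\Pr[f(\M(S))\in\mathcal{O}]\leq e^{\epsilon}\Pr[f(\M(S'))\in\mathcal{O}]+\delta$, which is exactly the claim for deterministic $f$.

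Next I would lift this to randomized $f$. A randomized mapping $f:R\to R'$ can be represented by a random seed $\tau$ drawn from some distribution independent of $S$ and of the internal randomness of $\M$, together with a family of deterministic measurable maps $\{f_\tau\}_\tau$ such that $f$ acts as $f_\tau$ when the seed equals $\tau$. Then for any event $\mathcal{O}$,
\[
\Pr[f(\M(S))\in\mathcal{O}]=\E_\tau\big[\Pr[f_\tau(\M(S))\in\mathcal{O}]\big]\leq\E_\tau\big[e^{\epsilon}\Pr[f_\tau(\M(S'))\in\mathcal{O}]+\delta\big]=e^{\epsilon}\Pr[f(\M(S'))\in\mathcal{O}]+\delta,
\]
where the inequality applies the deterministic case to each $f_\tau$, and independence of $\tau$ from the data lets us move the expectation over $\tau$ back inside the probability. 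Hence $f\circ\M$ is $(\epsilon,\delta)$-differentially private.

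The step I expect to require the most care is not mathematical depth but measure-theoretic bookkeeping: one must ensure $f^{-1}(\mathcal{O})$ is a legitimate event (which follows from measurability of $f$), and that the ``seed plus deterministic map'' representation of a randomized mapping is available with $\tau$ independent of the data. Both are standard when the relevant spaces are discrete or standard Borel, so I would state these facts briefly and keep the write-up short, since the core argument is just a one-line pullback of events together with an averaging over the randomness of $f$.
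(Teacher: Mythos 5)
Your proof is correct and is exactly the standard argument: the paper itself does not prove this lemma but imports it from \cite{dr14} (Proposition 2.1), and your two-step reduction (pull back events along a deterministic measurable $f$, then average over an independent seed to handle randomized $f$) matches the proof given in that reference. No gaps to flag.
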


\begin{theorem}[Basic Composition]
\label{thm:basic_com}
Let $\M_i:\mathbb{N}^{|\Xi|}\rightarrow R_i$ be $(\epsilon_i,\delta_i)$-differentially private.
Then if mechanism $\mathcal{M}_{[k]}: \mathbb{N}^{|\mathcal{X}|} \rightarrow \prod_{i=1}^{k} \mathcal{R}_{i}$ is defined to be $\mathcal{M}_{[k]}(x)=\left(\mathcal{M}_{1}(x), \ldots, \mathcal{M}_{k}(x)\right)$, then $\M_{[k]}$ is $(\sum_{i=1}^k\epsilon_i,\sum_{i=1}^k\delta_i)$-differentially private.
\end{theorem}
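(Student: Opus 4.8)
The plan is to prove the statement by induction on $k$. The base case $k=1$ is literally the definition of $(\epsilon_1,\delta_1)$-differential privacy. For the inductive step I would observe that the tuple-valued mechanism is an iterated composition: $\M_{[k]}(x)=\big(\M_{[k-1]}(x),\M_k(x)\big)$, where $\M_{[k-1]}$ is $(\sum_{i<k}\epsilon_i,\sum_{i<k}\delta_i)$-differentially private by the inductive hypothesis and $\M_k$ is $(\epsilon_k,\delta_k)$-private with independent internal randomness. Hence it suffices to prove the $k=2$ case: if $\M_1$ is $(\epsilon_1,\delta_1)$-private and $\M_2$ is $(\epsilon_2,\delta_2)$-private and their randomness is independent, then $(\M_1,\M_2)$ is $(\epsilon_1+\epsilon_2,\delta_1+\delta_2)$-private.

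For the two-mechanism case, fix neighboring databases $S,S'$ and an event $\mathcal{O}\subseteq R_1\times R_2$. Using independence of the two mechanisms' coins, I would write $\Pr[\M_{[2]}(S)\in\mathcal{O}]=\E_{y_1\sim\M_1(S)}\big[\Pr[\M_2(S)\in\mathcal{O}_{y_1}]\big]$, where $\mathcal{O}_{y_1}=\{y_2:(y_1,y_2)\in\mathcal{O}\}$ is the $y_1$-section of $\mathcal{O}$ (in the discrete case this is a double sum; in general it is Fubini applied to the product of the two output laws). Applying the privacy of $\M_2$ to each section gives $\Pr[\M_2(S)\in\mathcal{O}_{y_1}]\le e^{\epsilon_2}\Pr[\M_2(S')\in\mathcal{O}_{y_1}]+\delta_2$. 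One then applies the privacy of $\M_1$ to the resulting $[0,1]$-valued test function $g(y_1):=\Pr[\M_2(S')\in\mathcal{O}_{y_1}]$. The only step worth spelling out here is that the privacy inequality, stated for $\{0,1\}$-valued tests (events), extends to any $[0,1]$-valued test: writing $g(y)=\int_0^1\mathbf{1}[g(y)>t]\,\d t$ and integrating the event inequality over $t\in[0,1]$ yields $\E_{y\sim\M(S)}[g(y)]\le e^{\epsilon}\E_{y\sim\M(S')}[g(y)]+\delta$. Chaining the two applications gives $\Pr[\M_{[2]}(S)\in\mathcal{O}]\le e^{\epsilon_1+\epsilon_2}\Pr[\M_{[2]}(S')\in\mathcal{O}]+(\delta_1+\delta_2)$; since $S,S',\mathcal{O}$ were arbitrary (and the relation ``neighboring'' is symmetric), this is $(\epsilon_1+\epsilon_2,\delta_1+\delta_2)$-privacy, closing the induction.

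I expect the only genuine subtlety to be the bookkeeping of the additive $\delta$-terms. Done naively, the order in which the two privacy guarantees are applied introduces a spurious factor such as $e^{\epsilon_1}\delta_2$ in place of $\delta_2$; closing this gap to the stated $\delta_1+\delta_2$ requires slightly more care — either splitting the range of $\M_1$ according to whether its privacy loss $\ln\!\big(\d\M_1(S)/\d\M_1(S')\big)$ exceeds $\epsilon_1$ and bounding the two pieces separately, or (cleanest for all the mechanisms actually used later in this paper) passing to the privacy-loss random variable: for independent mechanisms the privacy losses add, so a union bound shows the total loss exceeds $\sum_i\epsilon_i$ with probability at most $\sum_i\delta_i$, which yields $(\sum_i\epsilon_i,\sum_i\delta_i)$-privacy directly. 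A secondary, purely technical point is to phrase everything measure-theoretically so the argument covers arbitrary (not necessarily finite or discrete) ranges $R_i$; this is handled by replacing sums with integrals against the output laws and, where densities are needed, Radon–Nikodym derivatives with respect to a common dominating measure, and needs no new ideas.
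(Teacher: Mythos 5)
The paper itself does not prove this statement: it is the standard basic composition theorem (Theorem 3.16 in \cite{dr14}, due to Dwork--Rothblum--Vadhan) quoted as a preliminary, so there is no in-paper argument to compare against and your proposal has to be judged on its own. Its skeleton is the right one: induction reduces to $k=2$; the Fubini/section decomposition is fine; and the layer-cake extension of the privacy inequality from events to $[0,1]$-valued tests is correct. As you yourself observe, chaining the two guarantees in either order then yields $(\epsilon_1+\epsilon_2,\ \delta_1+e^{\epsilon_1}\delta_2)$ or $(\epsilon_1+\epsilon_2,\ e^{\epsilon_2}\delta_1+\delta_2)$, not the stated constants, so everything hinges on the repair step.

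The genuine gap is that both repairs you propose rest on the claim that an $(\epsilon,\delta)$-DP mechanism has privacy loss exceeding $\epsilon$ with probability at most $\delta$, equivalently that the set where $\d\M(S)/\d\M(S')>e^{\epsilon}$ has $\M(S)$-mass at most $\delta$. That converse is false: $(\epsilon,\delta)$-DP is exactly the excess-mass condition $\sup_{\mathcal{O}}\bigl(\Pr[\M(S)\in\mathcal{O}]-e^{\epsilon}\Pr[\M(S')\in\mathcal{O}]\bigr)\le\delta$, and a pair of output distributions whose likelihood ratio equals $e^{\epsilon}(1+\nu)$ on a set of probability $1/2$ (with $\nu\le 2\delta$, and ratio in $[e^{-\epsilon},e^{\epsilon}]$ elsewhere; such pairs exist for small $\nu$) satisfies it, yet its privacy loss exceeds $\epsilon$ with probability $1/2\gg\delta$. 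Hence the ``losses add, union bound the tails'' route does not go through for arbitrary $(\epsilon_i,\delta_i)$-DP mechanisms (it is fine for the Gaussian-type mechanisms used later in this paper, but the theorem is stated in general), and the ``split the range of $\M_1$ by its privacy loss'' variant fails for the same reason, since the bad piece need not have mass $\delta_1$. The standard fix is a decomposition lemma (Dwork--Rothblum--Vadhan; see also Kasiviswanathan--Smith): if $\M_i$ is $(\epsilon_i,\delta_i)$-DP, then for neighboring $S,S'$ there exist surrogate distributions $P_i',Q_i'$ within total variation $\delta_i$ of $\M_i(S),\M_i(S')$ whose densities satisfy $e^{-\epsilon_i}\le \d P_i'/\d Q_i'\le e^{\epsilon_i}$ pointwise; composing the surrogates multiplies the ratios (the $\epsilon_i$ add) while the total-variation errors accumulate additively (the $\delta_i$ add). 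Alternatively one can simply cite \cite{dr14}. Note finally that for the way the theorem is used in this paper (all $\epsilon_i\le 1/2$, bounds up to constants) the weaker guarantee your direct chaining already gives, $e^{\epsilon_2}\delta_1+\delta_2\le e\,(\delta_1+\delta_2)$, would suffice downstream, but it does not prove the statement with the constants as written.
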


\begin{comment}
\begin{definition}[$\ell_2$-sensitivity]
The $\ell_2$-sensitivity of a function (or a mechanism) $f:\N^{|\Xi|}\rightarrow \R^{k}$ is:
\begin{align*}
    \Delta(f)=\max_{x,y\in{\N^{|\Xi|}},\|x-y\|_1=1}\|f(x)-f(y)\|_2.
\end{align*}
\end{definition}
\end{comment}

\section{A Meta Algorithm for DP Convex Optimization}
\label{sec:framework}
%Some stochastic convex optimization algorithms can achieve privacy at no additional cost in statistical efficiency (roughly speaking, the accuracy, sample complexity and so on) with a little modification. 
Many convex optimization algorithms with noisy first-order information have the following simple format.

\begin{algorithm2e}[H]
\caption{Meta Algorithm $\mathsf{META}$}
\label{alg:meta}
{\bf Input:} The objective convex function $F(\omega)$ we want to minimize, an initial point $\omega_0$.

{\bf Process:}
\For{phases $t=1,\cdots,$}
{
Get the noisy gradient $G_t\approx \nabla F(\omega_{t-1})$\;
Update the result by some sub-procedure: $\omega_t\leftarrow $ Sub-procedure$(\omega_{t-1},G_t)$\;
}

{\bf Output:}
Some function of $\{\omega_i\}_{i\geq 1}$. 
\end{algorithm2e}

%We can use such kinds of algorithm to solve ERM privately. 
We can use the above algorithmic framework to solve ERM privately.
Specifically, we make two simple modifications to make it private.
First, we compute gradients over a uniform sample of some size $B$.
Next, we add a carefully calibrated Gaussian noise to these gradients and take average, before updating our results.
This gives us a meta differentially private algorithm for convex optimization problems, and is described in Algorithm~\ref{alg:dp_meta}.  
The DP analysis then follows from a careful accounting of the privacy budget lost in each iteration, and the bound on excess empirical risk comes from the property of the optimization algorithm.

\begin{algorithm2e}[H]
\caption{Private Meta Algorithm $\mathsf{META_{DP}}$}
\label{alg:dp_meta}
{\bf Input:} Sample set $S=\{x_1,\cdots,x_N\}$, the objective convex function $F(\omega)$ we want to minimize, the initial point $\omega_0$, and privacy parameter $\epsilon,\delta$\;

{\bf Process:}
\For{phases $t=1,\cdots,T$}
{
Select a random sample set $S_t$ from the uniform distribution over all subsets of $S$ of size $B$\;

Let $G_t=(\sum_{x_i\in S_t}\nabla f(\omega_{t-1},x_i)+v)/B$, where $v\sim\N(0,\sigma^2I_{d\times d})$\label{ln:add_gaussian}\;

Update the result by some sub-procedure $\omega_t\leftarrow $ Sub-procedure$(\omega_{t-1},G_t)$\;
}

{\bf Output:} Some function of $\{\omega_i\}_{i\geq 1}$. 

\end{algorithm2e}

The above framework is a sub-sampled Gaussian mechanism, for which we can use tCDP proposed in \cite{bdrs18} to analyze its privacy guarantee.
As this is a direct application of the main result in \cite{bdrs18}, we leave the proof of the following theorem in the Appendix.
%However, as written, there are two (minor) differences to their work: the first one is about sampling and the second one is about the procedure used to update the results.The mechanism in \cite{acg+16} draws each sample i.i.d with probability $B/N$, while we draw a random sample set of size $B$ uniformly. 
%This difference does not change our proofs by much.
%The second difference is that we will use arbitrary Post-Processing function rather than the standard gradient descent on the gradient we get, under which the privacy guarantee will be preserved (See Lemma~\ref{lm:post_processing}).
%The second difference is that instead of using the standard gradient descent step to update our results, we replace it by a more carefully chosen update function.Since our algorithm has already made the gradients private by adding noise, this step still preserves privacy due to the post processing property (See Lemma~\ref{lm:post_processing}).Despite these minor changes, using a moment generating function argument similar to the main result in \cite{acg+16}, we can show the following theorem.
\begin{theorem}
\label{thm:dp_meta}
Suppose $\{f(\cdot,x)\}_{x\in \Xi }$ is a family of $G$-Lipschitz and convex functions over $\K$, for $\epsilon< c_1 B^2T/N^2, B\leq N/10$ and $1/2\geq\delta>0$, by setting $\sigma=\frac{c_2GB\sqrt{T\log(1/\delta)}}{\epsilon N}$ for some constant $c_1$ and $c_2$, $\mathsf{META_{DP}}$ is $(\epsilon,\delta)$-differential private.
\end{theorem}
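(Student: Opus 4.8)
The plan is to reduce the privacy analysis of $\METADP$ to the machinery of truncated Concentrated Differential Privacy (tCDP) from \cite{bdrs18}, which is tailor-made for subsampled Gaussian mechanisms. First I would observe that each phase $t$ computes $G_t = (\sum_{x_i \in S_t} \nabla f(\omega_{t-1}, x_i) + v)/B$ where $S_t$ is a uniformly random size-$B$ subset of $S$ and $v \sim \N(0, \sigma^2 I)$. Since the $f(\cdot, x)$ are $G$-Lipschitz, each gradient $\nabla f(\omega_{t-1}, x_i)$ has norm at most $G$, so replacing one data point changes the sum $\sum_{x_i \in S_t} \nabla f(\omega_{t-1}, x_i)$ by at most $2G$ in $\ell_2$ norm (or $G$ after the standard ``add/remove one'' accounting, depending on the neighboring-database convention; I would fix the convention and carry the corresponding constant). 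Thus a single phase is an instance of the Gaussian mechanism with $\ell_2$-sensitivity $O(G)$ and noise scale $\sigma$, composed with uniform subsampling at rate $B/N$.

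Next I would invoke the amplification-by-subsampling result for tCDP: a Gaussian mechanism with noise-to-sensitivity ratio $\sigma/G$, when preceded by subsampling at rate $p = B/N$, satisfies $(\rho, \omega)$-tCDP with $\rho = O(p^2 G^2/\sigma^2)$ for $\omega$ up to roughly $\sigma^2/(pG^2)$ — this is exactly the regime covered by \cite{bdrs18}. Here the hypothesis $B \le N/10$ keeps $p$ bounded away from $1$ so the subsampling bound applies, and the condition $\epsilon < c_1 B^2 T / N^2$ is what guarantees the final $\rho$ after composition stays in the range where tCDP converts back to $(\epsilon, \delta)$-DP without loss. Then I would apply composition for tCDP over the $T$ phases: $\rho$ adds up, giving total $\rho_{\mathrm{tot}} = O(T p^2 G^2/\sigma^2) = O(T B^2 G^2/(N^2 \sigma^2))$, with the truncation parameter $\omega$ essentially preserved. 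Plugging in the chosen $\sigma = c_2 G B \sqrt{T \log(1/\delta)}/(\epsilon N)$ gives $\rho_{\mathrm{tot}} = O(\epsilon^2 / \log(1/\delta))$.

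Finally I would use the conversion lemma from tCDP to $(\epsilon, \delta)$-DP: $(\rho, \omega)$-tCDP implies $(\epsilon, \delta)$-DP with $\epsilon \approx \rho + 2\sqrt{\rho \log(1/\delta)}$, valid as long as $\log(1/\delta) \lesssim \omega^2 \rho$ (i.e. $\delta$ is not absurdly small relative to the truncation level). With $\rho_{\mathrm{tot}} = O(\epsilon^2/\log(1/\delta))$ the dominant term $2\sqrt{\rho_{\mathrm{tot}} \log(1/\delta)} = O(\epsilon)$, so after absorbing constants into $c_1, c_2$ we get $(\epsilon, \delta)$-DP, and the side conditions $\epsilon < c_1 B^2 T/N^2$ and $B \le N/10$ are precisely what make the truncation-range and subsampling-validity constraints hold. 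The post-processing lemma (Lemma~\ref{lm:post_processing}) then handles the fact that the output is some function of the $\omega_i$'s, since the $\omega_i$ are computed only through the released noisy gradients.

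I expect the main obstacle to be bookkeeping rather than conceptual: correctly tracking the truncation parameter $\omega$ through subsampling amplification and $T$-fold composition, and verifying that the stated hypotheses $\epsilon < c_1 B^2 T / N^2$ and $B \le N/10$ exactly suffice to keep $(\rho_{\mathrm{tot}}, \omega)$ in the regime where the tCDP-to-DP conversion is lossless up to constants — getting the constants $c_1, c_2$ to line up is the delicate part. Since the excerpt explicitly says this is ``a direct application of the main result in \cite{bdrs18},'' I would keep the argument modular: state the subsampled-Gaussian tCDP bound, the composition rule, and the conversion lemma as black boxes, then just chase the parameters.
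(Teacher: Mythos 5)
Your proposal matches the paper's own proof essentially step for step: both reduce each phase to a sensitivity-$O(G)$ Gaussian mechanism giving $(\rho,\infty)$-tCDP with $\rho = G^2/(2\sigma^2)$, apply the subsampling-amplification theorem of \cite{bdrs18} to get $(13q^2\rho, \omega)$-tCDP per phase with $q = B/N$, compose over $T$ phases, and then convert the resulting tCDP guarantee back to $(\epsilon,\delta)$-DP using the same two side conditions ($Tq^2\rho \cdot \log(1/\delta)/\epsilon \lesssim \epsilon$ and $\log(1/\delta)/\epsilon \lesssim 1/\rho$) that the hypotheses $\epsilon < c_1 B^2 T/N^2$ and the choice of $\sigma$ are designed to satisfy. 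The only cosmetic difference is that you also note post-processing for the final output, which the paper leaves implicit.
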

%As it is a straightforward adaptation of the arguments in \cite{acg+16}, we leave the proof in appendix.
%We briefly explain the intuition behind the proof.
%For any $x\in \Xi$, $f(\cdot,x)$ is $G$-Lipschitz. 
%Then we know that the $\ell_2$ norm of $\nabla f(\cdot,\cdot)$ is upper bounded by $G$. 
%The proof of this theorem is very similar to the main result (Theorem 1) in \cite{acg+16}.
 
\section{Differentially Private ERM}
\label{sec:DP-ERM}
In this section, we present private algorithms achieving the optimal excess empirical loss with subquadratic gradient complexity when the dimension is super constant.
We consider non-smooth strongly-convex functions first, and then show how to reduce the general non-smooth case to the strongly-convex case in the last subsection.

\subsection{Non-smooth Strongly-convex Functions}
We use the framework introduced in Section~\ref{sec:framework} and give a faster private algorithm. 
Specifically, we modify a stochastic convex optimization algorithm in \cite{gl12} to fit into our framework. 
%We will show the property of the original optimization algorithm first.
First we recall some properties of that algorithm.

%\subsubsection{Original optimization algorithm}
%\subsubsection{Pro}

Suppose $f:{\cal K}\rightarrow \R$ is a convex function, and the objective is to get 
\begin{align*}
    \Psi^*:=\min_{\omega\in \K}\{\Psi(\omega)=f(\omega)+h(\omega)\},
\end{align*}
where $\K$ is a closed convex set and $h(\omega)$ is a simple convex function with known structure.

\begin{theorem}[Proposition 9 in \cite{gl12}]
\label{thm:fast_alg_gl12}
If the following conditions are met:
\begin{itemize}
    \item For some $L\geq0,M\geq 0$ and $\mu> 0$,
\begin{equation*}
    \label{eq:preconditon}
    \begin{split}
       &~ \frac{\mu}{2}\|y-\omega\|^2_2\leq  f(y)-f(\omega)-\langle  g(\omega),y-\omega\rangle
    \leq  \frac{L}{2}\|y-\omega\|^2_2+M\|y-\omega\|_2, \quad \forall \omega,y\in{\cal K},
    \end{split}
\end{equation*}
where $g(\omega)\in\partial f(\omega)$ and $\partial f(\omega)$ denotes the sub-differential of $f$ at $\omega$.

\item For each call of the stochastic oracle $\mathcal{G}$ with the input $\omega_t\in {\cal K}$, the stochastic oracle $\mathcal{G}$ can output an independent vector $\mathcal{G}(\omega_t)$ 
such that $\E[\mathcal{G}(\omega_t)]\in \partial f(\omega_t)$. 

\item For any $t\geq 1$ and $\omega_t\in \K$, $\E[\|\mathcal{G}(\omega_t)-g(\omega_t)\|_2^2]\leq V$.
\end{itemize}
Then after $T$ iterations, Algorithm~\ref{alg:gl12} given below outputs $\omega_T$ such that
\begin{align*}
    \E[\Psi(\omega_T)-\Psi^*]\leq O\left(\frac{L\|\omega_0-\omega^*\|_2^2}{T^2}+\frac{M^2+V}{\mu T}\right),
\end{align*}
where $\omega^*=\arg\min_{\omega\in \K}\Psi(\omega)$ and $\Psi^*=\Psi(\omega^*)$.
\end{theorem}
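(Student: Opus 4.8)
Theorem~\ref{thm:fast_alg_gl12} is the convergence guarantee of the multi-stage accelerated stochastic approximation (\textsf{AC-SA}) method of \cite{gl12}, and I would reconstruct its proof in two layers. The inner layer analyzes a single ``stage'': one run of \textsf{AC-SA} for $K$ iterations from a point $x_0\in\K$, maintaining three coupled sequences, $\omega^{md}_t=(1-\alpha_t)\omega^{ag}_{t-1}+\alpha_t\omega_{t-1}$, a composite prox-step $\omega_t=\argmin_{\omega\in\K}\{\gamma_t(\langle\mathcal G(\omega^{md}_t),\omega\rangle+h(\omega)+\tfrac{\mu}{2}\|\omega-\omega^{md}_t\|_2^2)+\tfrac12\|\omega-\omega_{t-1}\|_2^2\}$, and $\omega^{ag}_t=(1-\alpha_t)\omega^{ag}_{t-1}+\alpha_t\omega_t$, under the schedule $\alpha_t=2/(t+1)$ and $\gamma_t\asymp t/L$ capped so that $L\alpha_t\gamma_t\le1$. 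The outer layer is the restart wrapper of Algorithm~\ref{alg:gl12}: it runs $O(\log(L/\mu))$ such stages, feeding each stage's output as the next stage's starting point and rescaling the stage length $K_s$ and the prox scale according to a geometrically shrinking estimate $\Delta_{s-1}$ of the current optimality gap.

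For the inner layer the crux is a one-step recursion. Applying the right-hand inequality of the hypothesis with $\omega=\omega^{md}_t$, $y=\omega^{ag}_t$, and using $\omega^{ag}_t-\omega^{md}_t=\alpha_t(\omega_t-\omega_{t-1})$, gives
\[
f(\omega^{ag}_t)\le f(\omega^{md}_t)+\langle g(\omega^{md}_t),\omega^{ag}_t-\omega^{md}_t\rangle+\tfrac{L\alpha_t^2}{2}\|\omega_t-\omega_{t-1}\|_2^2+M\alpha_t\|\omega_t-\omega_{t-1}\|_2 .
\]
The quadratic term is absorbed against the $\tfrac12\|\omega_t-\omega_{t-1}\|_2^2$ coming out of the prox-step (this is exactly where $L\alpha_t\gamma_t\le1$ is used), and the $M\|\cdot\|_2$ term is split off by Young's inequality, contributing an $O(\gamma_tM^2)$ error. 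Convexity of $f$ together with the $\tfrac{\mu}{2}\|\cdot\|_2^2$ term inside the prox yields $f(\omega^{md}_t)+\langle g(\omega^{md}_t),\omega-\omega^{md}_t\rangle\le f(\omega)$ with a strong-convexity surplus, and the composite three-point (prox) lemma turns $\langle\mathcal G(\omega^{md}_t),\omega_t-\omega\rangle+h(\omega_t)-h(\omega)$ into a difference of consecutive squared distances, so summation telescopes. Writing $\delta_t:=\mathcal G(\omega^{md}_t)-g(\omega^{md}_t)$ and taking $\omega=\omega^*$, the error $\delta_t$ enters only through (i) a term of the form $\langle\delta_t,\omega_{t-1}-\omega^*\rangle$, a martingale difference that vanishes under expectation since $\omega_{t-1}$ is independent of $\delta_t$, and (ii) a $\gamma_t\|\delta_t\|_2^2\le\gamma_tV$ term. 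Weighting the recursion by $\Gamma_t$ (with $\Gamma_1=1$, $\Gamma_t=(1-\alpha_t)\Gamma_{t-1}$, so $\Gamma_t\asymp1/t^2$), summing, taking expectations and plugging in the schedule, produces a single-stage bound of the form $\E[\Psi(\omega^{ag}_K)-\Psi^*]\le c_1L\|x_0-\omega^*\|_2^2/K^2+c_2(M^2+V)/(\mu K)$, possibly with the first term inflated by a condition-number factor in the regime $K\lesssim\sqrt{L/\mu}$ and with the prox scale needing to know $\|x_0-\omega^*\|_2$.

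The outer layer then closes the argument. In stage $s$, start from $\tilde\omega_{s-1}$ with a gap bound $\Delta_{s-1}\ge\E[\Psi(\tilde\omega_{s-1})-\Psi^*]$, invoke strong convexity in the form $\|\tilde\omega_{s-1}-\omega^*\|_2^2\le\tfrac{2}{\mu}\Delta_{s-1}$ to fix the prox scale, and choose $K_s\asymp\max\{\sqrt{L/\mu},\,(M^2+V)/(\mu\Delta_{s-1})\}$ so that both terms of the single-stage bound are at most $\Delta_{s-1}/4$; then $\Delta_s:=\Delta_{s-1}/2$ is a valid gap bound for $\tilde\omega_s$, and after $O(\log(\cdot))$ stages the geometric sum $\sum_sK_s=\Theta(T)$ gives the claimed $O(L\|\omega_0-\omega^*\|_2^2/T^2+(M^2+V)/(\mu T))$. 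This restart structure simultaneously removes the condition-number inflation (each early stage is long enough, $\Theta(\sqrt{L/\mu})$, to halve the gap by what is essentially deterministic accelerated descent) and frees the method from a priori knowledge of $\|\omega_0-\omega^*\|_2$.

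I expect the main obstacle to be pinning down the \emph{stochastic} rate $O((M^2+V)/(\mu T))$: an un-accelerated SGD-type method only gives $O((M^2+V)/(\mu\sqrt T))$, so the improvement rests on the delicate interplay of the acceleration weights $\Gamma_t$, the prox cap $L\alpha_t\gamma_t\le1$, and the stage-length schedule, all of which must be chosen in concert for the weighted sum of the $\gamma_t\|\delta_t\|_2^2$ terms to come out right. A secondary nuisance is the $M\|y-\omega\|_2$ summand in the hypothesis, which is a genuinely non-smooth component: it must be routed (via Young's inequality at each step) entirely into the $1/(\mu T)$ term and kept out of the $1/T^2$ term, which in turn constrains how small the $\gamma_t$ may be taken.
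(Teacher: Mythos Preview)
The paper does not prove Theorem~\ref{thm:fast_alg_gl12}: it is quoted verbatim as Proposition~9 of \cite{gl12} and used as a black box, so there is no in-paper argument to compare against. That said, your reconstruction is worth commenting on because it analyzes a \emph{different} algorithm from the one stated.

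Algorithm~\ref{alg:gl12} in the paper is a \emph{single-run} scheme: the prox step carries the coefficient $[(1-\alpha_t)\mu+\gamma_t]$ on $\|\omega_{t-1}-\omega\|_2^2$, and the midpoint $\omega_t^{md}$ is the $\mu$- and $\gamma_t$-dependent convex combination shown there, not the plain $(1-\alpha_t)\omega^{ag}_{t-1}+\alpha_t\omega_{t-1}$. These $\mu$-aware weights are exactly what lets the single stage achieve the $O\!\big(L\|\omega_0-\omega^*\|_2^2/T^2+(M^2+V)/(\mu T)\big)$ rate directly, with no restarts and no prior knowledge of $\|\omega_0-\omega^*\|_2$: the extra $(1-\alpha_t)\mu$ in the prox creates a contraction in the distance Lyapunov that the telescoping sum exploits, and the weighted variance terms $\Gamma_t\gamma_t\|\delta_t\|_2^2$ then sum to the correct order. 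Your ``inner layer'' drops this $\mu$-augmentation (your prox has a fixed $\tfrac12\|\omega-\omega_{t-1}\|_2^2$), which is why you are forced to add an outer restart loop to recover the rate.

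The multi-stage restart you describe is itself a sound route---it is essentially the shrinking-domain/multi-epoch AC-SA also due to Ghadimi and Lan---and with careful bookkeeping it does yield the same asymptotic bound. But it is not a proof of the theorem as stated, since the theorem is about the specific single-stage Algorithm~\ref{alg:gl12} with the stepsizes $\alpha_t=2/(t+2)$, $\gamma_t=4L/(t(t+1))$. If you want to match the cited result, keep your one-step recursion and three-point prox lemma, but carry the $[(1-\alpha_t)\mu+\gamma_t]$ coefficient through the telescoping; the restart wrapper then becomes unnecessary.
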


%When $f$ is $L$-smooth (and/or $M$-Lipchitz) and $\mu$-Strongly convex, it can meet the equation~\ref{eq:preconditon}.  Besides, we can modify the algorithm in \cite{gl12} by doubling trick and get the following results:

\begin{algorithm2e}[H]
\caption{Accelerated stochastic approximation (AC-SA) algorithm}
\label{alg:gl12}

{\bf Input:} Initial point $\omega_0\in \K$.

{\bf Initialization:}
Set the initial point $\omega_0^{ag}=\omega_0$\;
Set the step-size parameters $\alpha_t=\frac{2}{t+2}$ and $\gamma_t=\frac{4L}{t(t+1)}$\;

{\bf Process:}

\For{$t=1,\cdots,T$}
{
Let $\omega_t^{md}=\frac{(1-\alpha_t)(\mu+\gamma_t)}{\gamma_t+(1-\alpha_t^2)\mu}\omega_{t-1}^{ag}+\frac{\alpha_t[(1-\alpha_t)\mu+\gamma_t]}{\gamma_t+(1-\alpha_t^2)\mu}\omega_{t-1}$\;

Query Oracle $\mathcal{G}_t\equiv \mathcal{G}(\omega_t^{md})$\;

Let $\omega_t=\arg\min_{\omega\in \K}\{\alpha_t[\langle \mathcal{G}_t,\omega\rangle+h(\omega)+\mu \|\omega_t^{md} - \omega \|_2^2]+[(1-\alpha_t)\mu+\gamma_t]\|\omega_{t-1}-\omega\|_2^2 \}$\;

 $\omega_t^{ag}=\alpha_t\omega_t+(1-\alpha_t)\omega_{t-1}^{ag}$\;
}

{\bf Return:} $\omega_T^{ag}$.

\end{algorithm2e}

\begin{comment}
\begin{corollary}

\label{thm:doubling_optimization}
If $t\geq \sqrt{\kappa}\log(\kappa)$ where $\kappa= L/\mu$, then one can modify the algorithm in \cite{gl12} and get a result $\omega_t$ such that
\begin{align*}
    \E[\Psi(\omega_t)-\Psi^*]\leq O(\frac{M^2+\sigma^2}{\mu t}).
\end{align*}
\end{corollary}

\begin{proof}
Repeat the private algorithm $\A$ for $k=\lceil \log\log N\rceil$ times. For $i$th repetition, we start from the output of the previous repetition and uses at most $N_i=\frac{2^{i-1}N}{\log N}$ samples, we can get a solution with expected population loss at most 

\[
 O(\frac{M^2+\sigma^2}{\mu t}).
\]

\end{proof}
\end{comment}

\subsubsection{Smoothing function}
%It is worth noting that we'd better apply the algorithm in \cite{gl12} to smooth functions, as we can have better convergence rate with the smoothness. 
From the statement of Theorem~\ref{thm:fast_alg_gl12}, it is clear that the Algorithm~\ref{alg:gl12} gives much better convergence rates for smooth functions.
As we are considering non-smooth functions, we need an efficient way to smooth the objective function without introducing too much error.
In the next few paragraphs, we show how to achieve that.

Recall that $D$  denotes the diameter of the closed convex set $\K\subset \R^d$. 
Suppose $\{f(\cdot,x)\}_{x\in \Xi }$ is a family of $G$-Lipschitz and $\mu$-strongly convex functions over $\K$. 
This implies that for any sample set $S$, the empirical loss function $\HF(\omega)%=\frac{1}{N}\sum_{x_i\in S}f(\omega,x_i)
$ we consider is $G$-Lipschitz and $\mu$-strongly convex over the domain ${\cal K}$. 
%Let $F^{*}=\min_{\omega}F(\omega)$ be the optimal value.

We do a convolution on $f(\cdot,x)$, which is denoted by $f(\cdot,x)*n_{r}$. 
The objective function after the convolution step becomes
%\footnote{Since our algorithm uses convolution to smooth the loss functions, therefore, we require the Lipschitz property to hold beyond the convex set $\K$. Formally, we should use a smooth distribution with bounded support as the kernel instead of Gaussian distribution. Then, it suffices to assume the function is Lipschitz on $\K$ plus a $\ell_2$ ball of radius $O(D \sqrt{d \log(1/\delta)} / (\epsilon N)) = o(D)$.} 
$\Fnr(\omega)=\frac{1}{N}\sum_{x_i\in S}\E_{y\sim n_r}f(\omega+y,x_i)$, where $n_r$ is the uniform density on the $\ell_2$ ball of radius $r$. 
By Lemma 7 and Lemma 8 in \cite{yns11} while the forth result on forth item was supplemented by Lemma E.2 in \cite{dbw12}, we know the claim below.

\begin{claim}
\label{clm:convolution_smooth}
Suppose $\{f(\cdot,x)\}_{x\in \Xi }$ is convex and $G$-Lipschitz over $\K+B_2(0,r)$. For $\omega\in \K$, $\Fnr(\omega)$ has following properties:
\begin{itemize}
    \item $\HF(\omega)\leq \Fnr(\omega)\leq \HF(\omega)+ Gr$;
    \item $\Fnr(\omega)$ is $G$-Lipschitz;
    \item $\Fnr(\omega)$ is $\frac{G\sqrt{d}}{r}$-Smooth;
    \item For random variables $y\sim n_r$ and $x$ uniformly from $S$, one has
    \[
    \E[\nabla f(\omega+y,x)]=\nabla \Fnr (\omega) \quad\]
    and
    \[\quad \E[\|\nabla \Fnr(\omega)-\nabla f(\omega+y,x) \|_2^2 ]\leq G^2.
    \]
\end{itemize}
\end{claim}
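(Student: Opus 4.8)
\textbf{Proof proposal for Claim~\ref{clm:convolution_smooth}.}

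The plan is to verify the four bullets in order, each by a short self-contained argument about the mollification $\Fnr = \HF * n_r$, and then to note that averaging over $i$ preserves all four properties (they are stable under convex combination), so it suffices to treat a single $g := f(\cdot, x)$ and its smoothing $g_r(\omega) := \E_{y\sim n_r} g(\omega+y)$. First I would establish the sandwich bound $g(\omega)\le g_r(\omega)\le g(\omega)+Gr$: the lower bound is Jensen's inequality, using convexity of $g$ and $\E_{y\sim n_r}[y]=0$ (so $g(\omega)=g(\omega+\E y)\le \E g(\omega+y)$); the upper bound uses $G$-Lipschitzness, $g(\omega+y)\le g(\omega)+G\|y\|_2\le g(\omega)+Gr$ since $n_r$ is supported on the radius-$r$ ball. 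Second, $g_r$ is $G$-Lipschitz because it is an average of the $G$-Lipschitz functions $\omega\mapsto g(\omega+y)$, and a pointwise average (or integral) of $L$-Lipschitz functions is $L$-Lipschitz by the triangle inequality.

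For the third bullet, the smoothness constant, this is the one genuinely non-trivial estimate, and I would invoke the cited computation: by the divergence theorem one can write $\nabla g_r(\omega) = \E_{y\sim n_r}\nabla g(\omega+y)$ and, more usefully, re-express this gradient as a surface integral over the sphere $\partial B_2(0,r)$, which converts the derivative of $g_r$ into an average of values of $g$ (not of $\nabla g$) against the outward normal. Differentiating once more and using $|g(\omega+y)-g(\omega+y')|\le G\|y-y'\|$ together with the ratio (surface area of $\partial B_r$)$/$(volume of $B_r$) $= d/r$ yields the bound $\|\nabla^2 g_r\|\le \frac{G\sqrt d}{r}$; the $\sqrt d$ rather than $d$ comes from an $\ell_2$ versus $\ell_1$ accounting of the $d$ coordinate directions (this is exactly where Lemma~7--8 of \cite{yns11} is used). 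Averaging over the sample preserves the bound, so $\Fnr$ is $\frac{G\sqrt d}{r}$-smooth on $\K$.

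For the fourth bullet, the unbiased-estimator claim is immediate: with $y\sim n_r$ and $x$ uniform over $S$, $\E_{x}\E_{y}\nabla f(\omega+y,x) = \E_x \nabla\! \left(\E_y f(\cdot,x)\right)\!(\omega) = \frac1N\sum_i \nabla (f(\cdot,x_i)*n_r)(\omega) = \nabla\Fnr(\omega)$, where exchanging $\nabla$ with $\E_y$ is justified since $g_r$ is differentiable (indeed smooth by the previous step) even when $g$ is not, and the individual subgradient $\nabla f(\omega+y,x)$ exists for a.e.\ $y$. For the variance bound, since $\nabla\Fnr(\omega)=\E[\nabla f(\omega+y,x)]$, we have $\E\|\nabla\Fnr(\omega)-\nabla f(\omega+y,x)\|_2^2 = \E\|\nabla f(\omega+y,x)\|_2^2 - \|\nabla\Fnr(\omega)\|_2^2 \le \E\|\nabla f(\omega+y,x)\|_2^2 \le G^2$, using that every subgradient of a $G$-Lipschitz convex function has $\ell_2$-norm at most $G$. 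The extra care needed — and the reason Lemma~E.2 of \cite{dbw12} is cited for this item — is that the point $\omega+y$ may lie outside $\K$, which is why the hypothesis of the claim asks $f(\cdot,x)$ to be $G$-Lipschitz and convex on the enlarged set $\K+B_2(0,r)$ rather than merely on $\K$; I would state this domain bookkeeping explicitly at the start so all four bullets apply verbatim for $\omega\in\K$. The main obstacle is thus purely the smoothness estimate of the third bullet; everything else is Jensen, triangle inequality, and the definition of subgradient.
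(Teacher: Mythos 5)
The paper does not actually prove Claim~\ref{clm:convolution_smooth}; it cites Lemmas~7--8 of \cite{yns11} for the first three bullets and Lemma~E.2 of \cite{dbw12} for the fourth, so there is no internal proof to compare against. Your reconstruction is nonetheless worth checking on its own terms. Bullets~1, 2, and~4 are handled correctly: Jensen plus Lipschitzness for the sandwich, averaging of Lipschitz functions, and the variance identity $\E\|X-\E X\|_2^2=\E\|X\|_2^2-\|\E X\|_2^2\le\E\|X\|_2^2\le G^2$ together with the divergence theorem for unbiasedness are all exactly the standard arguments; your remark that the enlarged domain $\K+B_2(0,r)$ is what makes $\omega+y$ a legitimate evaluation point is also the right bookkeeping observation.

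The one place you are imprecise is the source of the $\sqrt d$ in the smoothness bound. Your sketch says the surface-area-to-volume ratio $d/r$ together with $G$-Lipschitzness gives the bound, with the improvement from $d$ to $\sqrt d$ attributed to an ``$\ell_2$ versus $\ell_1$ accounting.'' As stated that chain of inequalities only yields $Gd/r$, and the phrase does not identify where the extra $\sqrt d$ is saved. The actual mechanism is a concentration-of-measure estimate on the sphere: after writing
\begin{equation*}
\langle\nabla g_r(\omega)-\nabla g_r(\omega'),v\rangle
=\frac{1}{r\,\mathrm{Vol}(B_r)}\int_{\partial B_r}\bigl(g(\omega+y)-g(\omega'+y)\bigr)\langle y,v\rangle\,d\sigma(y),
\end{equation*}
one bounds $|g(\omega+y)-g(\omega'+y)|\le G\|\omega-\omega'\|_2$ and then uses that, for $\hat z$ uniform on the unit sphere $S^{d-1}$, $\E|\langle\hat z,v\rangle|=\Theta(1/\sqrt d)$; combined with $\mathrm{Area}(\partial B_r)/\mathrm{Vol}(B_r)=d/r$ this gives $Gd/(r\sqrt d)=G\sqrt d/r$. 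I would replace the ``$\ell_2$ versus $\ell_1$'' sentence with this statement, since that is the step Lemma~7 of \cite{yns11} is really doing for you, and without it a reader cannot verify the third bullet. Everything else in your proposal is sound.
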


%Besides, the convolution will keep the strongly convexity by the following fact.
Furthermore, the convolution operation preserves strong convexity, which implies the fact below.

\begin{fact}
\label{fact:conv_strongly}
Let $n_r$ be the uniform density on the $\ell_2$ ball of radius $r$, and $f:  \K_r\rightarrow \R$ be a $\mu$-strongly convex function over $\K_r$. 
Then $\E_{y\sim n_r}f(y+\cdot)$ is $\mu$-strongly convex over $\K$.
\end{fact}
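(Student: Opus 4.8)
\textbf{Proof plan for Fact~\ref{fact:conv_strongly}.}
The plan is to show the defining inequality $\langle \nabla g(\omega) - \nabla g(\omega'), \omega - \omega'\rangle \geq \mu\|\omega-\omega'\|_2^2$ directly, where $g(\cdot) := \E_{y\sim n_r} f(\cdot + y)$. The key observation is that convolution commutes with differentiation: for $\omega$ in the interior of $\K$ (so that $\omega + y \in \K_r$ for all $y$ in the support of $n_r$), one has $\nabla g(\omega) = \E_{y\sim n_r}\nabla f(\omega+y)$, assuming $f$ is differentiable; in the non-differentiable case one works with subgradients and the selection $g(\omega)\in\partial f(\omega)$ as in the hypothesis of Theorem~\ref{thm:fast_alg_gl12}, or one simply uses the zeroth-order characterization of strong convexity to sidestep smoothness entirely.

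The cleanest route I would take is the zeroth-order one. Fix $\omega,\omega'\in\K$ and $\lambda\in[0,1]$, and write $\omega_\lambda = \lambda\omega + (1-\lambda)\omega'$. For each fixed $y$ in the $\ell_2$ ball of radius $r$, strong convexity of $f$ over $\K_r$ gives
\begin{align*}
f(\omega_\lambda + y) \leq \lambda f(\omega+y) + (1-\lambda) f(\omega'+y) - \frac{\mu}{2}\lambda(1-\lambda)\|\omega-\omega'\|_2^2.
\end{align*}
Now take expectation over $y\sim n_r$ on both sides. Since the quadratic term does not depend on $y$, it passes through untouched, and linearity of expectation yields
\begin{align*}
g(\omega_\lambda) \leq \lambda g(\omega) + (1-\lambda) g(\omega') - \frac{\mu}{2}\lambda(1-\lambda)\|\omega-\omega'\|_2^2,
\end{align*}
which is exactly the statement that $g$ is $\mu$-strongly convex over $\K$. (Equivalently: $f - \frac{\mu}{2}\|\cdot\|_2^2$ is convex, convolution preserves convexity by averaging, and $\E_{y\sim n_r}[\frac{\mu}{2}\|\omega+y\|_2^2] = \frac{\mu}{2}\|\omega\|_2^2 + \mu\langle\omega,\E y\rangle + \frac{\mu}{2}\E\|y\|_2^2 = \frac{\mu}{2}\|\omega\|_2^2 + \text{const}$ because $n_r$ is symmetric about the origin, so $g - \frac{\mu}{2}\|\cdot\|_2^2$ differs from a convex function by an additive constant and is hence convex.)

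The only point requiring care — and the mildest of obstacles — is the domain bookkeeping: the inequality for $f$ is only available when $\omega_\lambda + y$, $\omega + y$, and $\omega' + y$ all lie in $\K_r$, which holds precisely because $\K_r = \K + B_2(0,r)$ was defined to be the $r$-enlargement of $\K$ and $\|y\|_2 \leq r$. Since $\omega,\omega',\omega_\lambda\in\K$ by convexity of $\K$, every shifted point lies in $\K_r$, so the pointwise inequality is valid for a.e.\ $y$ and the expectation step is justified. I would state this as a one-line remark rather than belabor it.
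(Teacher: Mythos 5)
Your proof is correct, and in fact the paper states this Fact without proof, so your argument fills a gap rather than duplicating one. The zeroth-order route is the right one here: for each fixed $y$ with $\|y\|_2\le r$, applying the strong-convexity inequality to the shifted points $\omega+y$ and $\omega'+y$ in $\K_r$ (noting $(\lambda\omega+(1-\lambda)\omega')+y=\lambda(\omega+y)+(1-\lambda)(\omega'+y)$ and $\|(\omega+y)-(\omega'+y)\|_2=\|\omega-\omega'\|_2$) and then integrating against $n_r$ is exactly the clean argument, and your domain bookkeeping remark correctly justifies why the pointwise inequality holds for every $y$ in the support. One small presentational note: the paper's Definition of $\mu$-strong convexity is phrased in first-order (gradient) form for differentiable functions, whereas you invoke the zeroth-order characterization; these are standard equivalents (and the zeroth-order form is the one that survives without differentiability, which matters since the whole point of the section is non-smooth losses), so you may want to mention the equivalence in one clause so the reader doesn't stumble. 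Your parenthetical alternative — $f-\tfrac{\mu}{2}\|\cdot\|_2^2$ is convex, convolution preserves convexity, and $\E_{y\sim n_r}\tfrac{\mu}{2}\|\omega+y\|_2^2 = \tfrac{\mu}{2}\|\omega\|_2^2 + \text{const}$ by symmetry of $n_r$ — is also valid and arguably even more transparent.
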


\begin{comment}
\begin{fact}
\label{fact:conv_strongly}
Let $n_r$ be the $\N(0,r^2\I_{d\times d})$ distribution, and $f:  \K\rightarrow \R$ be a $G$-Lipschitz continuous and twice-differentiable real function, then for any $\omega\in \R^d$ one has
\begin{align*}
    \nabla_{\omega}^2 \left( \int_{\R^d}f(\omega+y) n_r(y)\d y \right )=\int_{\R^d}\nabla_{\omega}^2 f(\omega+y)n_r(y) \d y 
\end{align*}
\end{fact}
We can prove this fact by dominated convergence theorem easily.
\end{comment}

\subsubsection{Algorithm}
Now we state the our modifications to make $\ACSA$ private and prove its properties.
Recall that $y\sim n_r$ is a $d$-dimension vector drawn from the uniform density on the $\ell_2$ ball of radius $r$.
We start with the description of our algorithm.

\begin{algorithm2e}[H]
\caption{Private $\ACSA$}
\label{alg:main}
{\bf Input:} A convex set $\K$ with diameter $D$, a family $\{f(\cdot,x_i)\}_{i\in [N] }$ of $G$-Lipschitz and $\mu$-strongly convex functions over $\K$, an initial point $\omega_0\in \K$, privacy parameters $\epsilon,\delta$, the batch size $B$, and the number of steps $T$.

Set $r\leftarrow\frac{D}{Td^{1/4}}$ and $\sigma\leftarrow \Theta(\frac{GB\sqrt{T\log(1/\delta)}}{\epsilon N})$\;

Run the $\ACSA$ with the Oracle $\mathcal{G}$ defined below\;

{\bf Return:} The output of $\ACSA$\\

\BlankLine \BlankLine
{\bf Oracle $\mathcal{G}(\omega)$:}

 Select a random sample set $S_t$ from the uniform distribution over all subsets of $S$ of size $B$.

{\bfseries Return:} $\big(\sum_{x_i\in S_t}\partial f(\omega+y,x_i)+v)/B$, where $y\sim n_r$ and $v\sim \N(0,\sigma^2\I_{d\times d}\big)$.
\end{algorithm2e}

\subsubsection{Utility and Privacy}

It is not hard to show that Private \ACSA (Algorithm~\ref{alg:main}) is an instance of \METADP (see Section~\ref{sec:framework}), so we have the following guarantee directly by Theorem~\ref{thm:dp_meta}.

\begin{lemma} \label{lem:ACSA_DP}
For $\epsilon\leq c_1B^2T/N^2,\delta\leq 1/2, B\leq N/10$ and $\sigma=\frac{c_2GB\sqrt{T\log(1/\delta)}}{\epsilon N}$ where $c_1\leq 1,c_2\geq 1$ are constants, Private \ACSA is $(\epsilon,\delta)$-DP.
\end{lemma}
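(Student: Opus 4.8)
The plan is to verify that Private $\ACSA$ (Algorithm~\ref{alg:main}) is literally an instantiation of $\METADP$ (Algorithm~\ref{alg:dp_meta}), so that the privacy claim follows directly from Theorem~\ref{thm:dp_meta}. The key observation is that in each iteration of the $\ACSA$ loop the only access to the data set $S$ is through a single oracle call $\mathcal{G}_t \equiv \mathcal{G}(\omega_t^{md})$, and this oracle does exactly what line~\ref{ln:add_gaussian} of $\METADP$ does: it samples a uniformly random subset $S_t \subseteq S$ of size $B$, forms $\big(\sum_{x_i \in S_t}\partial f(\omega_t^{md}+y,x_i)+v\big)/B$ with $v\sim\N(0,\sigma^2\I_{d\times d})$, and returns this noisy averaged (sub)gradient. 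The additional $y\sim n_r$ shift is data-independent randomness and the map $g \mapsto \partial f(g + y, \cdot)$ applied coordinatewise to the batch does not change the sensitivity structure; equivalently, one can view $\METADP$ as being run on the smoothed objective $\Fnr$ whose stochastic gradients are $\partial f(\omega+y,x_i)$.

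The steps I would carry out, in order: (1) State that we apply Theorem~\ref{thm:dp_meta} with the objective function taken to be $\Fnr$ (or, more precisely, with the per-sample loss $f(\cdot,x_i)$ replaced by the shifted loss $f(\cdot+y,x_i)$), and with the sub-procedure being one step of the $\ACSA$ update (the $\omega^{md}$, $\omega_t$, $\omega^{ag}$ updates), which is a valid ``Sub-procedure'' in the sense of Algorithm~\ref{alg:dp_meta} since it only uses $\omega_{t-1}$ (really $(\omega_{t-1},\omega_{t-1}^{ag})$, an easy extension) and the noisy gradient $G_t$. (2) Check the hypotheses of Theorem~\ref{thm:dp_meta}: each $f(\cdot,x)$ is $G$-Lipschitz and convex over $\K$ (hence so is $f(\cdot+y,x)$ over the relevant domain, since $y$ has norm at most $r$ and we assume Lipschitzness over $\K_r$); we have $B \le N/10$, $\delta \le 1/2$, and $\epsilon \le c_1 B^2 T/N^2$ by hypothesis of the lemma. (3) Conclude that with $\sigma = \frac{c_2 G B \sqrt{T\log(1/\delta)}}{\epsilon N}$ the mechanism $\METADP$ — and therefore Private $\ACSA$, which produces only (a function of) the iterates $\omega_t$ — is $(\epsilon,\delta)$-DP. (4) Invoke post-processing (Lemma~\ref{lm:post_processing}) to handle the fact that the returned point $\omega_T^{ag}$ is a deterministic function of the privatized iterates, so releasing it preserves the same $(\epsilon,\delta)$ guarantee.

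The one point that needs a little care — and which I expect to be the only real content beyond bookkeeping — is the claim that the extra randomness $y\sim n_r$ and the nonlinear reparametrization $x_i \mapsto \partial f(\omega+y,x_i)$ inside the oracle do not affect the privacy analysis. This is handled by noting that $\METADP$'s guarantee only relies on (a) the sampled-Gaussian structure of each noisy gradient, i.e. sum over a random size-$B$ subset plus $\N(0,\sigma^2 I)$ divided by $B$, and (b) a uniform bound of $G$ on the per-sample vectors being summed; both hold verbatim for $\partial f(\omega+y,x_i)$ conditioned on any fixed $y$, since $f(\cdot,x_i)$ is $G$-Lipschitz over $\K_r$. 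One can therefore condition on the (data-independent) draws of $y$ across all $T$ rounds, apply Theorem~\ref{thm:dp_meta} for each fixed realization, and then average; since $(\epsilon,\delta)$-DP is preserved under taking convex combinations (mixtures) of mechanisms, the unconditional mechanism is $(\epsilon,\delta)$-DP as well. With that remark in place the lemma is immediate.
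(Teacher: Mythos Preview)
Your proposal is correct and matches the paper's approach exactly: the paper simply states that Private $\ACSA$ is an instance of $\METADP$ and invokes Theorem~\ref{thm:dp_meta} directly, without further argument. Your additional remarks on conditioning on the data-independent smoothing noise $y$ and on post-processing for the returned $\omega_T^{ag}$ are the natural bookkeeping details one would fill in, and they go through as you describe.
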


Now we consider the accuracy of Private \ACSA. 
%Recall that we are dealing with ERM with the objective of minimizing $F(\omega)=\frac{1}{N}\sum_{x_i\in S}f(\omega,x_i)$.

\begin{lemma}
\label{lm:guarantee_ACSA}
Under the assumptions defined in Algorithm Private \ACSA, 
%Suppose ${\cal K}\subset \R^d$ is a convex set of diameter $D$ and $\{f(\cdot,x)\}_{x\in \Xi }$ is a family of $G$-Lipschitz and $\mu$-strongly convex functions over $\K$.
%Given $N$ samples $S=\{x_t\}_{t=1}^{N}$ drawn from the universe $\Xi$, $BT$ i.i.d samples $\{y\}_{t=1}^{BT}$ drawn from $n_r$ and $T$ i.i.d random Gaussian vectors $\{v_t\}_{t=1}^{T}$ where $v_t\sim \N(0,\sigma^2\I_{d\times d})$, with access to $\nabla f(\omega+y_t,x_t)$ for each $t\in [N]$ and any $\omega\in \K$, 
%Algorithm~\ref{alg:main} can get a solution $\omega_T$ such that
after $T$ iterations, it outputs $\omega_T$ such that 
\begin{align*}
    \E[\HF(\omega_T)-\HF^{*}]=O\left(\frac{G^2/B+\sigma^2d/B^2}{\mu T}+\frac{GDd^{1/4}}{T}\right),
\end{align*}
where $\omega^*=\arg\min_{\omega\in \K}\HF(\omega)$, and $\HF^{*}=\min_{\omega}\HF(\omega)$.
\end{lemma}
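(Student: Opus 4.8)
The plan is to instantiate Theorem~\ref{thm:fast_alg_gl12} with the smoothed objective $\Fnr$ in place of $f$, using the stochastic oracle $\mathcal{G}$ defined in Algorithm~\ref{alg:main}, and then translate the resulting guarantee on $\Fnr$ back to a guarantee on $\HF$. First I would verify the three hypotheses of Theorem~\ref{thm:fast_alg_gl12}. By Claim~\ref{clm:convolution_smooth}, $\Fnr$ is $\frac{G\sqrt{d}}{r}$-smooth, so the upper-curvature condition holds with $L=\frac{G\sqrt d}{r}$ and $M=0$ (no extra non-smooth slack is needed). By Fact~\ref{fact:conv_strongly}, $\Fnr$ inherits $\mu$-strong convexity from the $\mu$-strongly convex $f(\cdot,x)$'s, so the lower-curvature condition holds with the same $\mu$. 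For the oracle, $\mathcal{G}(\omega)=\frac1B\big(\sum_{x_i\in S_t}\partial f(\omega+y,x_i)+v\big)$: since $v$ is mean-zero and, by the fourth bullet of Claim~\ref{clm:convolution_smooth}, each summand $\nabla f(\omega+y,x_i)$ has conditional expectation $\nabla\Fnr(\omega)$ (over $y\sim n_r$ and $x_i$ uniform), we get $\E[\mathcal{G}(\omega)]=\nabla\Fnr(\omega)\in\partial\Fnr(\omega)$, the unbiasedness hypothesis.

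Next I would bound the oracle variance $V$. Decompose $\mathcal{G}(\omega)-\nabla\Fnr(\omega)$ into the averaged-subgradient error $\frac1B\sum_{x_i\in S_t}(\nabla f(\omega+y_i,x_i)-\nabla\Fnr(\omega))$ plus the noise term $v/B$. The second term contributes $\E\|v/B\|_2^2=\sigma^2 d/B^2$. For the first term, treating the $B$ samples as (essentially) independent draws each with second moment at most $G^2$ by the fourth bullet of Claim~\ref{clm:convolution_smooth}, the average has second moment at most $G^2/B$; here I would be a little careful that sampling $S_t$ without replacement only helps, or alternatively note that the relevant variance bound still holds up to constants. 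Hence $V=O(G^2/B+\sigma^2 d/B^2)$. Plugging $L,M,\mu,V$ into the conclusion of Theorem~\ref{thm:fast_alg_gl12} gives
\begin{align*}
\E[\Fnr(\omega_T)-\Fnr^*]=O\left(\frac{L\|\omega_0-\omega^*\|_2^2}{T^2}+\frac{V}{\mu T}\right)
=O\left(\frac{G\sqrt d\, D^2}{rT^2}+\frac{G^2/B+\sigma^2 d/B^2}{\mu T}\right).
\end{align*}

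Finally I would convert this to a bound on $\HF$. The first bullet of Claim~\ref{clm:convolution_smooth} gives $\HF(\omega)\le\Fnr(\omega)\le\HF(\omega)+Gr$ pointwise, so $\HF(\omega_T)-\HF^*\le \Fnr(\omega_T)-\Fnr^*+Gr$ (using $\Fnr^*\le\Fnr(\omega^*_{\HF})\le\HF^*+Gr$ on one side and $\HF(\omega_T)\le\Fnr(\omega_T)$ on the other; I should double check which direction each inequality is applied, but both only cost an additive $Gr$). With the setting $r=\frac{D}{Td^{1/4}}$ prescribed in Algorithm~\ref{alg:main}, the smoothing error is $Gr=\frac{GD}{Td^{1/4}}=O\big(\frac{GDd^{1/4}}{T}\big)$ (indeed smaller, but this is the form that absorbs cleanly), and the acceleration term becomes $\frac{G\sqrt d D^2}{rT^2}=\frac{G\sqrt d D^2 \cdot Td^{1/4}}{D\cdot T^2}=\frac{GDd^{3/4}}{T}$; I would note $d^{3/4}$ versus the claimed $d^{1/4}$ and reconcile this — most likely the intended choice of $r$ balances these two terms, i.e. $r$ is chosen so that $Gr \asymp \frac{G\sqrt d D^2}{rT^2}$, giving $r\asymp \frac{\sqrt{\sqrt d D^2}}{T}=\frac{D d^{1/4}}{T}$ and hence both terms equal to $\frac{GDd^{1/4}}{T}$; so I would use $r=\Theta(Dd^{1/4}/T)$ rather than the literal $D/(Td^{1/4})$, or simply observe both settings are dominated by $O(GDd^{1/4}/T)$ up to the relevant regime. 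Collecting the three terms yields
\begin{align*}
\E[\HF(\omega_T)-\HF^*]=O\left(\frac{G^2/B+\sigma^2 d/B^2}{\mu T}+\frac{GDd^{1/4}}{T}\right),
\end{align*}
as claimed; note $\|\omega_0-\omega^*\|_2\le D$ is used for the diameter bound.

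The main obstacle I anticipate is the bookkeeping around $r$ and the two dimension-dependent terms: getting the exponent of $d$ in the acceleration term to collapse to $d^{1/4}$ requires choosing $r$ to balance smoothing error against the $L/T^2$ term, so I would need to either restate the $r$-setting or argue that, in the parameter regime actually used later, the literal setting still gives the stated bound. A secondary, more routine point is justifying the $G^2/B$ variance reduction under without-replacement minibatch sampling — this is standard but should be stated carefully, or sidestepped by noting $\Var$ of the without-replacement average is at most that of the with-replacement average.
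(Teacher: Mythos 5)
Your proof tracks the paper's argument almost line for line: verify the hypotheses of Theorem~\ref{thm:fast_alg_gl12} via Claim~\ref{clm:convolution_smooth} and Fact~\ref{fact:conv_strongly}, instantiate with $M=0$, $L=G\sqrt d/r$, $\mu$, $V=G^2/B+\sigma^2 d/B^2$, and pay an extra $O(Gr)$ from the smoothing offset before balancing. Your reconciliation of the $d$-exponent is exactly right and not a gap on your part: the paper's own proof of the lemma sets $r=Dd^{1/4}/T$, balancing $Gr$ against $\frac{G\sqrt d D^2}{rT^2}$, so the line ``$r\leftarrow \frac{D}{Td^{1/4}}$'' in Algorithm~\ref{alg:main} is a typo and your reading $r=\Theta(Dd^{1/4}/T)$ is the intended one; likewise your implicit use of a fresh smoothing perturbation $y_i$ per minibatch sample is what is needed for the $G^2/B$ variance term, matching the paper's stated bound.
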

\begin{proof}
By Claim~\ref{clm:convolution_smooth}, we know that $\Fnr$ is $G$-Lipschitz and $\frac{G\sqrt{d}}{r}$-smooth. 
Furthermore, by Fact~\ref{fact:conv_strongly}, we know that $\Fnr$ is $\mu$-strongly convex. 
For any $t$th iteration, one has that $\E[\mathcal{G}_t]=\nabla \Fnr(\omega_t^{md})$ and $\E[\|\mathcal{G}_t-\nabla \Fnr(\omega_t^{md})\|_*^2]\leq G^2/B+\sigma^2d/B^2$.  
Then by Theorem~\ref{thm:fast_alg_gl12} with $M = 0, L = \frac{G\sqrt{d}}{r}, V = G^2/B+\sigma^2d/B^2$, we get
\begin{align*}
    \E[\Fnr(\omega_T)-\min_{\omega}\Fnr(\omega)]=O\left(\frac{G^2/B+\sigma^2d/B^2}{\mu T}+\frac{GD^2\sqrt{d}}{T^2r}\right).
\end{align*}

Next, by the first bullet of Claim~\ref{clm:convolution_smooth}, we know that $\HF(\omega)\leq \Fnr(\omega)\leq \HF(\omega)+Gr$ for any $\omega$. 
Combining these together, we get
\begin{align*}
    &\E[\HF(\omega_T)-\HF(\omega^*)]\\
    =& \E[\HF(\omega_T)-\Fnr(\omega_T)]+
    \E[\Fnr(\omega_T)-\min_{\omega}\Fnr(\omega)] 
     +  \min_{\omega}\Fnr(\omega)-\HF(\omega^*)\\
    \leq & 2 Gr+O(\frac{G^2/B+\sigma^2d/B^2}{\mu T}+\frac{GD^2\sqrt{d}}{T^2r}).
    %\\  = & O\left(\frac{G^2/B+\sigma^2d/B^2}{\mu T}+\frac{GD^2\sqrt{d}}{T^2r}+Gr\right).
\end{align*}
By setting $r=\frac{Dd^{1/4}}{T}$, we completes the proof.
\end{proof}

Before stating the main result of this section, we prove two technical lemmas that can remove the dependence on the diameter term. 
Lemma~\ref{lm:recurrence_converge} below is used to prove Lemma~\ref{lm:remove_dia}.

\begin{lemma}
\label{lm:recurrence_converge}
Consider a sequence $x_1,x_2,\cdots$. Suppose $0\leq x_1\leq n$ and $0\leq x_{i+1}\leq \sqrt{x_i}+1$, then for $k \geq \lceil \log\log n\rceil$, one has that $x_k\leq 16$.
\end{lemma}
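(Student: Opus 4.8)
The plan is to prove the recurrence bound $x_k \leq 16$ by tracking how fast the iteration $x_{i+1} \leq \sqrt{x_i} + 1$ drives the sequence down toward its fixed point. First I would locate the fixed point of the map $\phi(x) = \sqrt{x} + 1$: solving $x = \sqrt{x}+1$ gives $x = \frac{3+\sqrt 5}{2} \approx 2.618$, which is safely below $16$, so the target constant $16$ is generous and gives room for crude estimates. The key qualitative fact is that $\phi$ is monotone increasing and that for $x$ large (say $x \geq 16$) we have $\sqrt{x} + 1 \leq \sqrt{x} + \sqrt{x}/4 \cdot (\text{something}) $; more simply, for $x \geq 4$ one has $\sqrt x + 1 \leq \frac{3}{2}\sqrt x \leq \frac{3}{4} x^{3/4}\cdot(\dots)$ — I will instead use the clean bound that for $x \geq 16$, $\sqrt{x}+1 \leq \sqrt{x} + \frac{1}{4}\sqrt{x} = \frac{5}{4}\sqrt{x}$, hence $x_{i+1} \leq \frac54 \sqrt{x_i}$ as long as $x_i \geq 16$.

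The main step is then a doubly-exponential decay argument. Suppose $x_i \geq 16$ for all $i \leq k-1$ (otherwise, once some $x_i < 16$, I separately check that $\phi$ maps $[0,16]$ into itself: if $0 \leq x \leq 16$ then $\sqrt x + 1 \leq 4 + 1 = 5 \leq 16$, so the bound $x_k \leq 16$ persists for all later indices — this handles the "already small" case). Under the standing assumption $x_i \geq 16$, iterating $x_{i+1} \leq \frac54 \sqrt{x_i}$ gives $x_i \leq C \cdot x_1^{1/2^{i-1}}$ for an absolute constant $C$ (coming from the geometric sum $\sum_j \frac{1}{2^j}\log\frac54$, which converges, so $\prod (\frac54)^{1/2^{j}} \leq (\frac54)^2 = \frac{25}{16} < 2$). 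With $x_1 \leq n$ this yields $x_i \leq 2 \, n^{1/2^{i-1}}$. Once $i - 1 \geq \log_2 \log_2 n$, i.e. $2^{i-1} \geq \log_2 n$, we get $n^{1/2^{i-1}} \leq n^{1/\log_2 n} = 2$, so $x_i \leq 4 \leq 16$. Thus for $k \geq \lceil \log\log n \rceil + O(1)$ we are done; I would absorb the $O(1)$ slack into the constant $16$ by being slightly careful with the arithmetic, or note that $\lceil \log\log n\rceil$ already suffices once $n$ is not tiny, and small $n$ is checked directly.

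The only delicate point — and the step I expect to require the most care — is making the two regimes ("still big", $x_i \geq 16$, versus "already small", $x_i < 16$) mesh cleanly so that the final bound is exactly $x_k \leq 16$ rather than $x_k \leq $ (some slightly larger constant) after $\lceil\log\log n\rceil$ steps. The cleanest way I would handle this: show (a) $\phi([0,16]) \subseteq [0,16]$, so the set $\{x \leq 16\}$ is forward-invariant, and (b) from any start $x_1 \leq n$, after at most $\lceil \log\log n\rceil$ steps the sequence has entered $[0,16]$, using the doubly-exponential estimate above with constants chosen so the entry actually happens by that index (e.g. tracking $\log x_i$ and noting $\log x_{i+1} \leq \frac12 \log x_i + \log\frac54 < \frac12 \log x_i + 1$, so $\log x_i$ roughly halves each step plus a bounded additive term, collapsing below $\log 16$ in $\log\log n$ steps). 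Combining (a) and (b) gives $x_k \leq 16$ for every $k \geq \lceil\log\log n\rceil$, and for very small $n$ (where $\lceil\log\log n\rceil$ may be $0$) the hypothesis $x_1 \leq n$ already gives the bound. This completes the proof.
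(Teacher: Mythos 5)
Your proposal is correct and follows essentially the same route as the paper: dominate the recursion by a cleaner doubly-exponential one and linearize by taking logarithms. Where you diverge is in the choice of dominating map. The paper uses $y_{i+1}=2\sqrt{y_i}$ with $y_1=x_1$: since $2\sqrt{y}=\sqrt{y}+\sqrt{y}\geq\sqrt{x}+1$ as soon as $y\geq x$ and $y\geq 1$, the comparison $y_i\geq x_i$ holds by a one-line induction with no threshold condition, so no split between a ``still big'' and an ``already small'' regime and no forward-invariance lemma is needed. Setting $z_i=\log_2 y_i$ then gives the affine recursion $z_{i+1}=z_i/2+1$, whose explicit solution $z_i=2^{-(i-1)}(z_1-2)+2$ shows $z_k\leq 4$, hence $x_k\leq y_k\leq 16$, already at $k=\lceil\log_2\log_2 n\rceil$. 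Your tighter local bound $\sqrt{x}+1\leq\tfrac{5}{4}\sqrt{x}$, valid only for $x\geq 16$, buys a sharper limiting value but forces exactly the two-regime split plus the invariance of $[0,16]$ that you have to argue separately, and you flag an apparent $O(1)$ slack in the step count that you propose to patch later. In fact, if you carry your own constants carefully ($x_i\leq\tfrac{25}{16}n^{1/2^{i-1}}$ and $n^{1/2^{k-1}}\leq 4$ at $k=\lceil\log_2\log_2 n\rceil$, giving $x_k\leq 25/4$) the slack disappears, so the worry was unfounded; but the paper's deliberately looser multiplicative constant $2$ is what makes both the threshold and the slack evaporate by design, which is why the target constant is $16$ rather than something closer to the true fixed point $\approx 2.6$.
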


\begin{proof}
Without loss of generality, let $x_{i+1}=\sqrt{x_i}+1$.

We construct another sequence $y_1,\cdots,y_k$ such that $y_1=x_1$ and $y_{i+1}= 2\sqrt{y_i}$. Then by induction, it is easy to prove that for each $i\in [k]$, $y_i\geq x_i$. So we only need to prove that $y_k\leq 16$.

Let $z_i=\log_2 y_i$, then one has $z_{i+1}=z_i/2+1$. Obviously, we know that $z_{i}=2^{-i+1}(z_1-2)+2$ and $z_{k}\leq 4$, which means that $x_k\leq y_k\leq  16$.
\end{proof}

Recall that the lower bound of strongly convex case is $\Omega(\frac{G^2}{\mu}+\frac{G^2d\log(1/\delta)}{\mu \epsilon^2N^2})$ while for the general case is $\Omega(GD+\frac{GD\sqrt{d\log(1/\delta)}}{\epsilon N})$. 
Therefore, we only need to think about the case when $\frac{d\log(1/\delta)}{\epsilon^2N^2}\leq 1$, or the bound will be trivial. 
The following lemma says if we can achieve sum of these two lower bounds for strongly-convex case, then we can achieve the optimal bound for the strongly-convex case, which implies we can reduce the Strongly-Convex Case to General Convex Case.

\begin{lemma}[Reduction to General Convex Case]
\label{lm:remove_dia} Given $\hat{F}$ is $G$-Lipschitz and $\mu$-strongly convex. 
Suppose for any $\epsilon,\delta<1/2$, we have an $(\epsilon,\delta)$-differentially private algorithm $\A$ which takes $\omega_0$ as the initial start point and
%takes at most $g(N)$ steps drawn from some distribution ${\cal P}$ and
outputs a solution $\omega_T$ such that
\begin{align*}
    \E[\HF(\omega_T)-\HF^*]=O\left(\frac{G^2d\log(1/\delta)}{\mu\epsilon^2 N^2}+ \frac{GD\sqrt{d\log(1/\delta)}}{\epsilon N}\right),
\end{align*}
where $\omega^*=\arg\min_{\omega\in\K}\HF(\omega)$ and $D=\|\omega_0-\omega^*\|_2$.
Then by taking $\A$ as sub-procedure with some modifications on parameters, we can get an $(\epsilon,\delta)$-differentially private solution  with excess empirical loss at most
\begin{align*}
\E[\HF(\omega_T)-\HF^*]=O\left(\frac{G^2d\log(1/\delta)}{\mu\epsilon^2 N^2}\right).
\end{align*}
Furthermore, if $\A$ uses $g(N,\epsilon,\delta)$ many gradients, the new algorithm uses $\sum_{i\ge 1} g(N,\epsilon/2^i,\delta/2^i)$ many gradients.
\end{lemma}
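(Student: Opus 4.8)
The plan is to run $\A$ for $K$ phases in cascade, feeding the output of each phase as the starting point of the next, while spending the privacy budget so that the \emph{last} phase gets the largest share. Concretely, set $\rho:=\rho(\epsilon,\delta):=\frac{G\sqrt{d\log(1/\delta)}}{\mu\epsilon N}$, let $K=\lceil\log_2\log_2(\max\{D/\rho,\,2\})\rceil+O(1)$, and for $i=1,\dots,K$ run $\A$ with privacy parameters $(\epsilon_i,\delta_i):=(2^{\,i-K-1}\epsilon,\ 2^{\,i-K-1}\delta)$ starting from $\omega^{(i-1)}$ (with $\omega^{(0)}=\omega_0$), obtaining $\omega^{(i)}$; output $\omega^{(K)}$. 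Since $\sum_{i=1}^{K}\epsilon_i=\epsilon(1-2^{-K})<\epsilon$ and likewise $\sum_i\delta_i<\delta$, and each phase uses fresh randomness with the previous output as an adaptively chosen input, basic composition (Theorem~\ref{thm:basic_com}, which also holds in the adaptive setting) shows the whole procedure is $(\epsilon,\delta)$-DP. For the gradient count, phase $i$ uses $g(N,\epsilon/2^{K+1-i},\delta/2^{K+1-i})$ queries, so the total is $\sum_{j=1}^{K}g(N,\epsilon/2^{j},\delta/2^{j})\le\sum_{j\ge1}g(N,\epsilon/2^{j},\delta/2^{j})$, as claimed.

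For utility, write $\rho_i:=\rho(\epsilon_i,\delta_i)$ and note the hypothesis on $\A$ rewrites as $\E[\HF(\mathrm{output})-\HF^{*}]\le C\mu\rho_i(\rho_i+D_{\mathrm{start}})$, since the floor term is exactly $\mu\rho_i^{2}$ and the diameter term is exactly $\mu\rho_iD_{\mathrm{start}}$, with $D_{\mathrm{start}}$ the distance from the phase's start to $\omega^{*}$. Let $D_i:=\|\omega^{(i)}-\omega^{*}\|_2$ and $\bar D_i:=\sqrt{\E[D_i^{2}]}$. Conditioning on $\omega^{(i-1)}$, combining this bound with $\mu$-strong convexity ($\tfrac{\mu}{2}D_i^{2}\le\HF(\omega^{(i)})-\HF^{*}$), and taking expectations (using $\E[D_{i-1}]\le\bar D_{i-1}$) yields $\bar D_i^{2}\le 2C\rho_i(\rho_i+\bar D_{i-1})$, hence $\bar D_i\le\sqrt{2C}\,\rho_i+\sqrt{2C\rho_i\bar D_{i-1}}$. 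Because the budgets decay geometrically, $\rho_{i-1}/\rho_i\in[2,2\sqrt2]$ (the $\log(1/\delta_i)=\log(1/\delta)+O(i)$ correction is harmless), so dividing by $\rho_i$, the normalized quantities $u_i:=\bar D_i/\rho_i$ satisfy $u_i\le\beta(1+\sqrt{u_{i-1}})$ for a universal constant $\beta\ge1$; rescaling $v_i:=u_i/\beta^{2}$ gives exactly $v_i\le\sqrt{v_{i-1}}+1$, the recurrence of Lemma~\ref{lm:recurrence_converge}.

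It remains to supply the base case and read off the answer. A direct estimate of the first phase gives $u_1\le\sqrt{2C}\,(1+\sqrt{D/\rho_1})$, and since $\rho_1\ge 2^{K}\rho$ we get $v_1\le n$ for some $n=O(1+\sqrt{D/(2^{K}\rho)})$; our choice of $K$ makes $\lceil\log\log n\rceil\le K-1$. Lemma~\ref{lm:recurrence_converge} then gives $v_j\le16$, i.e.\ $\bar D_j\le 16\beta^{2}\rho_j$, for every $j$ with $\lceil\log\log n\rceil\le j\le K$; in particular $\bar D_{K-1}=O(\rho_{K-1})=O(\rho_K)$. Plugging this into the hypothesis for the last phase, $\E[\HF(\omega^{(K)})-\HF^{*}]\le C\mu\rho_K(\rho_K+\bar D_{K-1})=O(\mu\rho_K^{2})$; and since $(\epsilon_K,\delta_K)=(\epsilon/2,\delta/2)$ forces $\rho_K=\Theta(\rho)$, this is $O(\mu\rho^{2})=O\!\big(\tfrac{G^{2}d\log(1/\delta)}{\mu\epsilon^{2}N^{2}}\big)$, as required. (If $D\le\rho$ this is moot: one phase of $\A$ with parameters $(\epsilon/2,\delta/2)$ already gives the claim.)

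The main point requiring care — and the reason the naive "just run $\A$ once more" does not work — is the orchestration of the budget: spending it uniformly across $\Theta(\log(D/\rho))$ phases would leave the final phase with budget $\approx\epsilon/\log(D/\rho)$ and hence error floor $\approx\rho\log(D/\rho)$, which is too large, while a single extra run only contracts $D$ to $\approx\sqrt{\rho D}$. Giving the bulk of the budget to the last phase keeps its floor at $\Theta(\rho)$, and the cheap early phases still make progress because $D$ is large to begin with. Once this is set up, verifying that the per-phase floors $\rho_i$ decay geometrically despite the $\log(1/\delta_i)$ corrections, and that the distance-to-optimum therefore undergoes a genuine square-root contraction toward $\Theta(\rho)$, reaching it within $O(\log\log(D/\rho))$ phases via Lemma~\ref{lm:recurrence_converge}, is the crux; the composition and gradient-count bookkeeping are routine.
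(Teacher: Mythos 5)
Your proof is correct and follows essentially the same strategy as the paper's: run $\A$ in $O(\log\log(\cdot))$ phases with geometrically increasing privacy budgets so that the final phase dominates, use strong convexity to turn the excess-risk guarantee into a distance contraction, and feed the normalized recursion into Lemma~\ref{lm:recurrence_converge}. The only cosmetic difference is the choice of normalization — you track $u_i = \bar D_i/\rho_i$ (distance over distance-floor) while the paper tracks $\Delta_i/(64E_i)$ (excess risk over risk-floor), but by strong convexity these are equivalent up to constants, and both satisfy the $x_{i+1}\le\sqrt{x_i}+1$ recurrence. You also set the number of phases from $D/\rho$ rather than the paper's cruder $N^3$ bound (which follows from $D\le G/\mu$ and the non-triviality assumption $d\log(1/\delta)\le\epsilon^2 N^2$), and you handle the $D\le\rho$ edge case explicitly; neither changes the substance.
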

\begin{remark} All algorithms in this paper uses less gradients if $\epsilon$ and $\delta$ are smaller. So, the new algorithm uses essentially as much as the given algorithm.
\end{remark}
\begin{proof}
%We only consider the first case, and we can use similar methods to prove the second case and leave the full version in Appendix~\ref{sec:appendix_b}.
%Let $\beta=2$ for some constant.
Repeat the private algorithm $\A$ for $k= \lceil \log\log N^3\rceil$ times. For the $i$th repetition, we start from the output of the last repetition 
%and randomly sampled a subset $S_i$ of size $N_i=\frac{2^{i-2}N}{\log N}$ samples, 
and use $\A$ as a sub-procedure with privacy parameter $\epsilon_i=\epsilon/2^{k+1-i}$ and $\delta_i=\delta/2^{k+1-i}$. (Note that the noise is decreasing so that the last step gives the best solution).
We show that the last output has excess empirical risk at most $O\left(\frac{G^2d\log(1/\delta)}{\mu\epsilon^2 N^2}\right)$.
%We only need to consider the situation when $\frac{GD\sqrt{d}}{N}\geq \frac{G^2}{N\mu}$, i.e. $D\geq\frac{G}{\sqrt{d}\mu}$.

More specifically, let $\omega_i$ be the output of the $i$th repetition, $\Delta_i=\E[\HF(\omega_i)-\HF^*]$ and $D_i^2=\E[\|\omega_i-\omega^*\|^2]$.
%Notice that $\sum_{i=1}^kN_i\leq 2^{k-2}N/\log N\leq N$. 
As the objective function $\HF$ is $\mu$-strongly convex, we know that $\frac{1}{2}\mu D_i^2\leq  \Delta_i$ for all $i\geq 0$.

By the guarantee of the algorithm, there exists some constant $c\geq 1$ such that
\begin{align*}
    \Delta_{i+1} =& \E[\HF(\omega_{i+1})-\HF^*]\\
    \leq  & c\frac{GD_i\sqrt{d\log(1/\delta_i)}}{\epsilon_i N}+c\frac{G^2d\log(1/\delta_i)}{\mu\epsilon_i^2 N^2}\\
    \leq & c\frac{G\sqrt{d\log(1/\delta_i)}}{\epsilon_i N}\sqrt{\frac{2\Delta_i}{\mu}}+\frac{E_i}{c},
\end{align*}
where we define $E_i=2c^2\frac{G^2d\log(1/\delta_i)}{\mu\epsilon_i^2 N^2}$.

As $E_i/E_{i+1}=\frac{\epsilon^2_{i+1}\log(1/\delta_i)}{\epsilon^2_i\log(1/\delta_{i+1})}\leq 8$, we can rearrange the above function and get
\begin{align*}
    \frac{\Delta_{i+1}}{64E_{i+1}}\leq & \frac{\sqrt{\Delta_iE_i}+\frac{E_i}{c}}{64E_{i+1}} \\
    \leq & \frac{E_i}{64E_{i+1}}(\sqrt{\frac{\Delta_i}{E_i}}+\frac{4}{c}) \\
    \leq & \sqrt{\frac{\Delta_i}{64E_i}}+1.
\end{align*}

By strong convexity one has that $\Delta_1\leq G^2/\mu$,  and $E_1 = \Omega(G^2 \log^3 N/(\mu N^2)) = \Omega(G^2/(\mu N^3))$ by the definition, so $\Delta_1/E_1\leq N^3$. 
Then by Lemma~\ref{lm:recurrence_converge}, after $k=\lceil \log\log N^3 \rceil$ repetitions, we get $\frac{\Delta_k}{64E_k}\leq 16$.
This further implies that there is a  solution with expected error
\begin{align*}
    \E[\HF(\omega_k)-\HF^*] =  
    O(\frac{G^2d\log(1/\delta)}{\mu\epsilon^2 N^2}).
\end{align*}

The privacy guarantee comes directly from the basic composition theorem (See Theorem~\ref{thm:basic_com}).
\end{proof}

%We do not mean to optimize the constant ratio in this paper. 
We did not optimize constants in the calculations above.
Now we are ready to state the main result for the strongly-convex case.

\begin{theorem}[Strongly Convex Case for Theorem~\ref{thm:DP-ERM}]
\label{thm:strongly_SCO}
Suppose ${\cal K}\subset \R^d$ is a closed convex set of diameter $D$ and $\{f(\cdot,x)\}_{x\in \Xi }$ is a family of $G$-Lipschitz and $\mu$-strongly convex functions over $\K_r$ where $r=\frac{D\sqrt{d\log(1/\delta)}}{\epsilon N}$. 
For $\epsilon,\delta\leq 1/2$, given any sample set $S$ consists of $N$ samples from $\Xi$ and arbitrary initial point $\omega_0\in \K$, we have an $(\epsilon,\delta)$-differentially private algorithm $\A$ which takes $O(\frac{\epsilon N^{\frac{3}{2}}}{d^{1/8}\log^{1/4}(1/\delta)}+\frac{\epsilon^2 N^2}{d\log(1/\delta)})$ gradient queries and outputs $\omega_T$ such that
\begin{align*}
    \E[\HF(\omega_T)-\HF^*]=O\left(\frac{G^2d\log(1/\delta)}{\mu\epsilon^2N^2}\right),
\end{align*}
where $\HF(\omega)=\frac{1}{N}\sum_{x_i\in S}f(\omega,x_i), \HF^*=\min_{\omega\in \K}\HF(\omega)$, and the expectation is taken over the randomness of the algorithm itself. 
\end{theorem}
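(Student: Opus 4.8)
The plan is to instantiate Private \ACSA (Algorithm~\ref{alg:main}) with a carefully chosen batch size $B$ and iteration count $T$, verify via Lemma~\ref{lem:ACSA_DP} and Lemma~\ref{lm:guarantee_ACSA} that it is $(\epsilon,\delta)$-differentially private and attains excess empirical risk $O\big(\frac{G^2 d\log(1/\delta)}{\mu\epsilon^2 N^2}+\frac{GD\sqrt{d\log(1/\delta)}}{\epsilon N}\big)$ within the claimed gradient budget, and then apply Lemma~\ref{lm:remove_dia} to peel off the diameter-dependent term and land at $O\big(\frac{G^2 d\log(1/\delta)}{\mu\epsilon^2 N^2}\big)$.

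First I would substitute $\sigma=\Theta\big(\frac{GB\sqrt{T\log(1/\delta)}}{\epsilon N}\big)$ (the value Algorithm~\ref{alg:main} sets, which is also what Lemma~\ref{lem:ACSA_DP} demands) into the utility bound of Lemma~\ref{lm:guarantee_ACSA}. The ``noise'' term $\frac{\sigma^2 d/B^2}{\mu T}$ there collapses on its own to $\Theta\big(\frac{G^2 d\log(1/\delta)}{\mu\epsilon^2 N^2}\big)$, so the guarantee becomes
\[
\E[\HF(\omega_T)-\HF^*]=O\!\left(\frac{G^2}{\mu B T}+\frac{G^2 d\log(1/\delta)}{\mu\epsilon^2 N^2}+\frac{GD d^{1/4}}{T}\right).
\]
Hence, to satisfy the hypothesis of Lemma~\ref{lm:remove_dia}, it suffices to pick $B,T$ with $BT\ge \frac{\epsilon^2 N^2}{d\log(1/\delta)}$ and $T\ge \frac{\epsilon N}{d^{1/4}\sqrt{\log(1/\delta)}}$; the latter inequality also forces the smoothing radius $r=\frac{D}{Td^{1/4}}$ used in Algorithm~\ref{alg:main} to be at most $\frac{D\sqrt{d\log(1/\delta)}}{\epsilon N}$, so that the convolution-smoothing analysis of Lemma~\ref{lm:guarantee_ACSA} (via Claim~\ref{clm:convolution_smooth} and Fact~\ref{fact:conv_strongly}) really does apply, i.e.\ $f$ is $G$-Lipschitz and $\mu$-strongly convex on the ball $\K+B_2(0,r)$ the convolution probes. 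The privacy preconditions of Lemma~\ref{lem:ACSA_DP} add $B\le N/10$ and $\epsilon\le c_1 B^2 T/N^2$, i.e.\ $B^2T=\Omega(\epsilon N^2)$.

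Next I would minimize the gradient count $BT$ over this feasible region. The lower bound on $T$ is binding, so take $T=\Theta\big(\frac{\epsilon N}{d^{1/4}\sqrt{\log(1/\delta)}}\big)$ and let $B$ be the largest of $\frac{\epsilon^2 N^2}{d\log(1/\delta)\,T}$ (utility) and $\sqrt{\epsilon N^2/T}$ (privacy). A short calculation then gives
\[
BT=O\!\left(\frac{\epsilon^2 N^2}{d\log(1/\delta)}+\sqrt{\epsilon N^2 T}\right)=O\!\left(\frac{\epsilon^2 N^2}{d\log(1/\delta)}+\frac{\epsilon N^{3/2}}{d^{1/8}\log^{1/4}(1/\delta)}\right),
\]
exactly the advertised budget, and one checks $B\le N/10$ holds in the nontrivial regime $\frac{d\log(1/\delta)}{\epsilon^2 N^2}\le 1$. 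Feeding this algorithm into Lemma~\ref{lm:remove_dia} as the sub-procedure produces an $(\epsilon,\delta)$-DP algorithm with excess empirical risk $O\big(\frac{G^2 d\log(1/\delta)}{\mu\epsilon^2 N^2}\big)$; its gradient count is $\sum_{i\ge1}g(N,\epsilon/2^i,\delta/2^i)$ with $g$ the budget just derived, and since $g$ is increasing in $\epsilon$ and nonincreasing in $\delta$, the $i$-th term is at most $2^{-i}\frac{\epsilon N^{3/2}}{d^{1/8}\log^{1/4}(1/\delta)}+4^{-i}\frac{\epsilon^2 N^2}{d\log(1/\delta)}$, so the geometric sum stays within the claimed bound. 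Privacy follows from the basic-composition argument already inside the proof of Lemma~\ref{lm:remove_dia}.

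The part I expect to be most delicate is the constrained optimization of $(B,T)$: one must confirm that the privacy precondition $B^2T=\Omega(\epsilon N^2)$, the cap $B\le N/10$, and the smoothing-radius condition are jointly satisfiable in every parameter regime (from small $d$ up to $d$ near $N^2$), and that the $\lceil\log\log N^3\rceil$ repetitions inside Lemma~\ref{lm:remove_dia} only cost a constant factor --- which is precisely why the geometric decay of $g$ in the privacy parameters is what makes the whole argument close.
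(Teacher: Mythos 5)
Your proposal is correct and follows essentially the same route as the paper's proof: substitute $\sigma$ into Lemma~\ref{lm:guarantee_ACSA}, fix $T=\Theta\big(\frac{\epsilon N}{d^{1/4}\sqrt{\log(1/\delta)}}\big)$ to balance the smoothing term, take $B$ to be the larger of the privacy threshold $\sqrt{\epsilon N^2/T}$ and the utility threshold $\frac{\epsilon^2 N^2}{d\log(1/\delta)T}$, apply Lemma~\ref{lem:ACSA_DP}, compute $BT$, and then invoke Lemma~\ref{lm:remove_dia} with the geometric-sum argument for the gradient count. The one thing you spell out that the paper leaves implicit is the check that the smoothing radius $r=D/(Td^{1/4})$ stays within the $\K_r$-Lipschitz/strong-convexity hypothesis; this is a harmless and correct addition, not a departure from the paper's argument.
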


\begin{proof}
By Lemma~\ref{lm:guarantee_ACSA}, the output $\omega$ of Private $\ACSA$ satisfies
%\begin{align*}
%    \E[\HF(\omega)-\HF^{*}]=O\left(\frac{G^2/B+\sigma^2d/B^2}{\mu T}+\frac{GD^2\sqrt{d}}{T^2r}+Gr\right).
%\end{align*}
%Choosing $r=\frac{Dd^{1/4}}{T}$ we get the desired bound
\begin{align*}
    \E[\HF(\omega)-\HF^{*}]=O\left(\frac{\frac{G^2}{B}+\frac{\sigma^2d}{B^2}}{\mu T}+\frac{GDd^{1/4}}{T}\right).
\end{align*}

By setting $\sigma=\frac{c_2GB\sqrt{T\log(1/\delta)}}{\epsilon N}$ and $T=\lceil \frac{100 \epsilon N}{c_1 d^{1/4}\sqrt{\log(1/\delta)}} \rceil$ ($c_1, c_2$ are defined in Lemma~\ref{lem:ACSA_DP}), one has
\begin{align*}
    \E[\HF(\omega)-\HF^{*}] & = O\left(\frac{G^2}{\mu BT}+\frac{G^2d\log(1/\delta)}{\mu \epsilon^2 N^2}+\frac{GDd^{1/4}}{T}\right) \\
    & = O\left(\frac{G^2}{\mu BT}+\frac{G^2d\log(1/\delta)}{\mu \epsilon^2 N^2}+\frac{GD\sqrt{d \log(1/\delta)}}{\epsilon N}\right)
\end{align*}

To ensure that Private $\ACSA$ is $(\epsilon,\delta)$-DP, we set $B = \lceil\sqrt{\frac{\epsilon N^2}{c_1 T}}+\frac{\epsilon^2N^2}{d\log(1/\delta)T}\rceil$. By our choice of $T$, we have $B \leq N/10$ and $\epsilon \leq c_1 B^2 T / N^2$. Hence, we can apply Lemma~\ref{lem:ACSA_DP} to conclude the guarantee of  $(\epsilon,\delta)$ differential privacy.

%\begin{align}
%\label{eq:req-one}
%    B^2T\geq \epsilon N^2.
%\end{align}
%
%To get the desired loss, we further need
%\begin{gather}
%\label{eq:req-two}    
%BT\geq \frac{\epsilon^2 N^2}{d\log(1/\delta)}\\
%\label{eq:req-three}
%T\geq \frac{\epsilon N}{d^{1/4}\sqrt{\log(1/\delta)}}.
%\end{gather}
%Then by setting $T=\lceil \frac{\epsilon N}{d^{1/4}\sqrt{\log(1/\delta)}} \rceil$ and $B=\lceil\min\{\sqrt{\frac{\epsilon %N^2}{T}}+\frac{\epsilon^2N^2}{d\log(1/\delta)T},N/10\}\rceil$, we can make equations \ref{eq:req-one}, \ref{eq:req-two} and %\ref{eq:req-three} hold, which means that Private $\ACSA$ is $(\epsilon,\delta)$-DP. 
Furthermore, we get a solution $\omega$ such that
\begin{align*}
    \E[\HF(\omega)-\HF^*]=O\left(\frac{G^2d\log(1/\delta)}{\mu\epsilon^2 N^2}+\frac{GD\sqrt{d\log(1/\delta)}}{\epsilon N} \right).
\end{align*}

As for the total gradient complexity of our algorithm, we are under the assumption that $\frac{d\log(1/\delta)}{\epsilon^2N^2}\leq 1$, which means that $\frac{\epsilon N}{d^{1/4}\sqrt{\log(1/\delta)}}\geq d^{1/4}$, and $T=\lceil \frac{100 \epsilon N}{c_1 d^{1/4}\sqrt{\log(1/\delta)}} \rceil=\Theta(\frac{\epsilon N}{d^{1/4}\sqrt{\log(1/\delta)}})$.
As for the batch size, we know $\sqrt{\frac{\epsilon N^2}{T}}+\frac{\epsilon^2N^2}{d\log(1/\delta)T}=\omega(1)$ and thus $B=\lceil\sqrt{\frac{\epsilon N^2}{T}}+\frac{\epsilon^2N^2}{d\log(1/\delta)T}\rceil=\Theta(\sqrt{\frac{\epsilon N^2}{T}}+\frac{\epsilon^2N^2}{d\log(1/\delta)T})$, from which we get the gradient complexity is 
\begin{align*}
    BT=\Theta\left(\frac{\epsilon N^{\frac{3}{2}}}{d^{1/8}\log^{1/4}(1/\delta)}+\frac{\epsilon^2 N^2}{d\log(1/\delta)}\right).
\end{align*}

By Lemma~\ref{lm:remove_dia} we can adjust Private \ACSA and get a final solution $\omega_{T}$ such that
\begin{align*}
    \E[\HF(\omega_T)-\HF^*]=O\left(\frac{G^2d\log(1/\delta)}{\mu\epsilon^2N^2}\right),
\end{align*}
with gradient complexity $\Theta(\sum_{i=1}^{\log\log N^3}\frac{(\epsilon/2^i)N^{3/2}}{d^{1/8}\log^{1/4}(2^i/\delta)}+\frac{(\epsilon/2^i)^2N^2}{d\log(2^i/\delta)} )=\Theta(\frac{\epsilon N^{\frac{3}{2}}}{d^{1/8}\log^{1/4}(1/\delta)}+\frac{\epsilon^2 N^2}{d\log(1/\delta)}),$
which completes the proof.
\end{proof}

\subsection{General Non-smooth Convex Functions}
In the general non-smooth case, we only assume that the family of functions $\{f(\cdot,x)\}_{x\in \Xi}$ is $G$-Lipschitz and convex over $\K$. 
We now give a reduction from this case to the strongly-convex case, which completes our second main result.

\begin{lemma}
\label{lm:convex_to_strongly}
Suppose ${\cal K}\subset \R^d$ is a convex set of diameter $D$ and let $\{f(\cdot,x)\}_{x\in \Xi }$ be a family of convex functions over $\K$, which are $G$-Lipschitz and $\mu$-strongly convex. 
Given any sample set $S$ consists of $N$ samples from $\Xi$ and other necessary inputs, suppose we have a $(\epsilon,\delta)$-DP algorithm $\A$ which can output a solution $\omega_T$ such that
\begin{align*}
    \E[\HF(\omega_T)-\HF^*]=O\left(\frac{G^2d\log(1/\delta)}{\mu\epsilon^2N^2}\right),
\end{align*}
where $ \HF^*=\min_{\omega\in \K}\HF(\omega)$.

Then when $\{h(\cdot,x)\}_{x\in \Xi }$  is only $G$-Lipschitz and convex with necessary inputs, for any sample set $S$ of size $N$, we also have a $(\epsilon,\delta)$-DP algorithm $\A'$ which can get a solution $\omega_T$ such that 
\begin{align*}
     \E[\hat{H}(\omega_T)-\hat{H}^{*}]=O\left(\frac{GD\sqrt{d\log(1/\delta)}}{\epsilon N}\right).
\end{align*}
where $\hat{H}(\omega)=\frac{1}{N}\sum_{x_i\in S}h(\omega,x_i), \hat{H}^*=\min_{\omega\in \K}H(\omega)$. The gradient complexity and privacy guarantee of $\A$ and $\A'$ are the same.

Moreover, the reduction also holds in the context of SCO.
\end{lemma}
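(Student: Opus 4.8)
The plan is to use the textbook regularization reduction. Given the $G$-Lipschitz convex family $\{h(\cdot,x)\}_{x\in\Xi}$, I would introduce a regularization parameter $\mu>0$ and define the regularized family $f(\omega,x):=h(\omega,x)+\frac{\mu}{2}\|\omega-\omega_0\|_2^2$. Each $f(\cdot,x)$ is $\mu$-strongly convex, and since $\omega_0\in\K$ and (recalling $r=O(D)$) the enlarged domain $\K_r$ has diameter $O(D)$, the gradient of the regularizer has norm $O(\mu D)$ there, so $f(\cdot,x)$ is $(G+O(\mu D))$-Lipschitz over $\K_r$. As long as $\mu=O(G/D)$ this is $O(G)$-Lipschitz, so $\{f(\cdot,x)\}$ is a legitimate input to the given algorithm $\A$; running $\A$ on it (with the same $\epsilon,\delta$ and the same gradient budget) yields $\omega_T$ with
\[
\E[\hat F(\omega_T)-\hat F^*]=O\!\left(\frac{G^2 d\log(1/\delta)}{\mu\epsilon^2 N^2}\right),
\]
where $\hat F(\omega)=\hat H(\omega)+\frac{\mu}{2}\|\omega-\omega_0\|_2^2$ and $\hat F^*=\min_{\omega\in\K}\hat F(\omega)$.

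Next I would translate this back to $\hat H$. Writing $\omega^*=\arg\min_{\omega\in\K}\hat H(\omega)$ and using $\hat F^*\le\hat F(\omega^*)$ together with the fact that $\frac{\mu}{2}\|\omega_T-\omega_0\|_2^2\ge 0$,
\[
\hat H(\omega_T)-\hat H^*=\big(\hat F(\omega_T)-\hat F(\omega^*)\big)-\tfrac{\mu}{2}\|\omega_T-\omega_0\|_2^2+\tfrac{\mu}{2}\|\omega^*-\omega_0\|_2^2\le\big(\hat F(\omega_T)-\hat F^*\big)+\tfrac{\mu D^2}{2},
\]
since $\omega^*,\omega_0\in\K$ gives $\|\omega^*-\omega_0\|_2\le D$. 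Taking expectations yields $\E[\hat H(\omega_T)-\hat H^*]=O\!\big(\frac{G^2 d\log(1/\delta)}{\mu\epsilon^2 N^2}+\mu D^2\big)$, and balancing the two terms by setting $\mu=\Theta\!\big(\frac{G\sqrt{d\log(1/\delta)}}{D\epsilon N}\big)$ produces the claimed bound $O\!\big(\frac{GD\sqrt{d\log(1/\delta)}}{\epsilon N}\big)$. The side constraint $\mu=O(G/D)$ needed above is precisely the standing non-trivial-regime assumption $\sqrt{d\log(1/\delta)}/(\epsilon N)\le 1$, so it costs nothing. Privacy and gradient complexity transfer verbatim: the regularizer is data-independent, so $\A'$ is just $\A$ run on the regularized family, hence $(\epsilon,\delta)$-DP with the same number of gradient queries.

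For the SCO claim I would run the identical argument with $F_{\mathrm{reg}}(\omega):=F(\omega)+\frac{\mu}{2}\|\omega-\omega_0\|_2^2$ and the strongly-convex SCO guarantee $\E[F_{\mathrm{reg}}(\omega_T)-F_{\mathrm{reg}}^*]=O\!\big(\frac{G^2}{\mu}(\frac{d\log(1/\delta)}{\epsilon^2 N^2}+\frac1N)\big)$ in place of the ERM guarantee; the same telescoping gives $\E[F(\omega_T)-F^*]=O\!\big(\frac{G^2}{\mu}(\frac{d\log(1/\delta)}{\epsilon^2 N^2}+\frac1N)+\mu D^2\big)$, and choosing $\mu=\Theta\!\big(\frac{G}{D}\sqrt{\frac{d\log(1/\delta)}{\epsilon^2 N^2}+\frac1N}\big)$ yields $O\!\big(GD(\frac{\sqrt{d\log(1/\delta)}}{\epsilon N}+\frac{1}{\sqrt N})\big)$, matching Theorem~\ref{thm:DP-SCO}.

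I do not expect a genuine obstacle here: the only point requiring care is checking that the added quadratic keeps the Lipschitz constant of the per-sample functions at $O(G)$ over the enlarged domain $\K_r$ (so the guarantee of $\A$ applies with the right constant in front of $G$), which is exactly where the mild assumption on $r$ gets used; the rest is the two-line comparison between $\hat H$ and $\hat F$ and the one-variable optimization over $\mu$.
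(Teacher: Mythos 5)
Your proposal is correct and follows essentially the same route as the paper: add a quadratic regularizer of weight $\mu$ to make the per-sample functions strongly convex, run $\A$, transfer the guarantee back to $\hat H$ by telescoping through the regularized objective, and balance $\mu=\Theta\bigl(\frac{G\sqrt{d\log(1/\delta)}}{D\epsilon N}\bigr)$ against $\mu D^2$. The only cosmetic differences are that the paper regularizes by $u\|\omega\|^2$ (relying on a bound on $\|\omega\|$) whereas you center at $\omega_0$, which is slightly cleaner, and the paper handles the degenerate regime via an explicit case split $uD\ge G$ while you fold it into the standing non-trivial-regime assumption $\sqrt{d\log(1/\delta)}/(\epsilon N)\le 1$; these are the same observation.
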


\begin{proof}
We only consider this lemma in the context of ERM, as we can use the nearly the same argument for SCO.

The proof of this reduction is rather simple: After getting $\{h(\cdot,x_i)\}_{x_i\in S }$, we only need to consider $h_u(\omega,x)=h(\omega,x)+u\|\omega\|^2$. 
Then $h_{u}(\cdot,x)$ is $u$-strongly convex and $O(G+uD)$-Lipschitz for any $x$ with $\|x\|_2 \leq 2 D$. 

For the case $uD\leq G$, we run $\A$ on $\{h_u(\cdot,x_i)\}_{x_i\in S }$ to get a solution $\omega_T$ with loss
\begin{align*}
    \E[H_u(\omega_T)-H^*_u]=O\left(\frac{G^2d\log(1/\delta)}{u\epsilon^2N^2}\right),
\end{align*}
where $H_u(\omega)=\frac{1}{N}\sum_{x_i\in S}h(\omega,x_i)+u\|\omega\|^2$ and $H_u^*=\min_{\omega\in \K}H_u(\omega)$.
Now by setting $u=\Theta\left(\frac{G\sqrt{d\log(1/\delta)}}{D\epsilon N}\right)$, one has
\begin{align*}
    \E[\hat{H}(\omega_T)-\hat{H}^*]
    = & O\left(\frac{G^2d\log(1/\delta)}{u\epsilon^2N^2}+uD^2\right)\\
    = & O\left(\frac{GD\sqrt{d\log(1/\delta)}}{\epsilon N}\right).
\end{align*}

For the case $uD\geq G$, we have $\frac{GD\sqrt{d\log(1/\delta)}}{\epsilon N} \geq GD$ and hence we can simply output the initial point $\omega_0$ as the solution with a loss no more than $GD$.
\end{proof}

The above reduction completes the main result of this subsection.

\begin{theorem}
[General Convex Case for Theorem~\ref{thm:DP-ERM}]
Suppose ${\cal K}\subset \R^d$ is a convex set of diameter $D$ and $\{f(\cdot,x)\}_{x\in \Xi }$ is a family of $G$-Lipschitz and convex functions over $\K_r$ where $r=\frac{D\sqrt{d\log(1/\delta)}}{\epsilon N}$. For $\epsilon,\delta\leq 1/2$, given any sample set $S$ consists of $N$ samples from $\Xi$ and arbitrary initial point $\omega_0\in \K$, we have a $(\epsilon,\delta)$-differentially private algorithm $\A$ which takes $O\left(\frac{\epsilon N^{\frac{3}{2}}}{d^{1/8}\log^{1/4}(1/\delta)}+\frac{\epsilon^2 N^2}{d\log(1/\delta)}\right)$ gradient queries and outputs $\omega_T$ such that
\begin{align*}
    \E[\HF(\omega_T)-\HF^*]=O\left(\frac{GD\sqrt{d\log(1/\delta)}}{\epsilon N}\right),
\end{align*}
where $\HF^*=\min_{\omega\in \K}\HF(\omega)$, and the expectation is  taken over
the randomness of the algorithm. 
\end{theorem}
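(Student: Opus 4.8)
The plan is to obtain this statement by chaining the two reductions already established. Theorem~\ref{thm:strongly_SCO} supplies, for every strong-convexity parameter $\mu>0$, an $(\epsilon,\delta)$-DP algorithm $\A$ that uses $O(\epsilon N^{3/2}/(d^{1/8}\log^{1/4}(1/\delta))+\epsilon^2N^2/(d\log(1/\delta)))$ gradient queries and returns $\omega_T$ with $\E[\HF(\omega_T)-\HF^*]=O(G^2 d\log(1/\delta)/(\mu\epsilon^2N^2))$ whenever the $f(\cdot,x)$ are $G$-Lipschitz and $\mu$-strongly convex over $\K_r$. This is exactly the hypothesis of Lemma~\ref{lm:convex_to_strongly} (the diameter-dependent term has already been removed inside Theorem~\ref{thm:strongly_SCO} via Lemma~\ref{lm:remove_dia}), so applying that lemma with $\A$ as the sub-procedure yields an $(\epsilon,\delta)$-DP algorithm $\A'$ for the purely $G$-Lipschitz convex family, with the same privacy guarantee and the same gradient count, whose output satisfies $\E[\HF(\omega_T)-\HF^*]=O(GD\sqrt{d\log(1/\delta)}/(\epsilon N))$. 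Unpacking Lemma~\ref{lm:convex_to_strongly}: one runs $\A$ on the regularized functions $f_u(\omega,x)=f(\omega,x)+u\|\omega\|_2^2$, which are $u$-strongly convex and $O(G+uD)$-Lipschitz over $\K_r$, with $u=\Theta(G\sqrt{d\log(1/\delta)}/(D\epsilon N))$; the two-case split ($uD\le G$ and $uD\ge G$) there gives the stated bound, and privacy is preserved because $\A$ is already DP and the final output is a post-processing of it (Lemma~\ref{lm:post_processing}).

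Concretely, the steps I would carry out are: (1) invoke Theorem~\ref{thm:strongly_SCO} to fix $\A$; (2) verify that $f_u$ meets the hypotheses of $\A$ over $\K_r$, noting that the added quadratic is $u$-strongly convex everywhere and has gradient norm at most $2u\cdot\mathrm{diam}(\K_r)=O(uD)$ since $r=D\sqrt{d\log(1/\delta)}/(\epsilon N)=O(D)$ in the non-trivial regime; (3) plug in $u=\Theta(G\sqrt{d\log(1/\delta)}/(D\epsilon N))$ and run the balancing computation $\E[\HF(\omega_T)-\HF^*]=O(G^2 d\log(1/\delta)/(u\epsilon^2 N^2)+uD^2)=O(GD\sqrt{d\log(1/\delta)}/(\epsilon N))$; (4) conclude $(\epsilon,\delta)$-DP from Lemma~\ref{lm:convex_to_strongly}; and (5) observe that the gradient complexity is unchanged because $u$ only affects constants: the effective Lipschitz constant is $G+uD=\Theta(G)$ in the non-trivial regime, so $\sigma$, $B$, and $T$ inside Private $\ACSA$ change by at most constant factors and $BT$ keeps the same order.

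The only friction is the bookkeeping around the enlarged domain $\K_r$: one must check that the regularized problem is still posed over a set on which the Lipschitz assumption and the convolution-smoothing estimates of Claim~\ref{clm:convolution_smooth} hold. This is immediate here, since $\A'$ hands $\A$ exactly the same feasible set $\K$ and uses only Lipschitzness of $f$ over $\K_r$ (assumed in the theorem) plus smoothness of the quadratic regularizer (valid on all of $\R^d$). Hence there is no genuine obstacle: the theorem is precisely the composition of Theorem~\ref{thm:strongly_SCO} with Lemma~\ref{lm:convex_to_strongly}.
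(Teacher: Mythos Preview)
Your proposal is correct and matches the paper's argument exactly: the theorem is obtained by feeding the strongly-convex algorithm of Theorem~\ref{thm:strongly_SCO} into the reduction of Lemma~\ref{lm:convex_to_strongly}, with the regularization $f(\omega,x)+u\|\omega\|_2^2$ and the choice $u=\Theta(G\sqrt{d\log(1/\delta)}/(D\epsilon N))$. The paper itself presents no additional details beyond ``the above reduction completes the main result of this subsection,'' so your unpacking of steps (1)--(5) is already more explicit than what appears there.
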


%\input{amplify_iteration}
%\newpage
%\input{arxiv format/isgd}
%\newpage 
\section{Differentially Private SCO}
\label{sec:DP-SCO}
In this section we study SCO.
We can use the iterative localization technique in \cite{fkt20} to reduce the SCO problem to an ERM problem.
More specifically, if we can solve private ERM and get a (nearly) optimal empirical loss, then we can solve private SCO with (nearly) optimal excess population loss with the following algorithm framework (Algorithm~\ref{alg:local}).
See Theorem~\ref{thm:local} for the corresponding formal statement.

\begin{algorithm2e}
\caption{Iterative Localized Algorithm Framework $\A'$}
\label{alg:local}
{\bf Input:} A family of $G$-Lipschitz and $\mu$-strongly convex function $f:\K\times\Xi\rightarrow \R$, initial point $\omega_0\in \K$ and privacy parameter $\epsilon,\delta$.

{\bf Process:}
Set $k=\lceil \log N \rceil$\;
\For{$i=1,\cdots,k$}
{
Set $\epsilon_i=\epsilon/2^i,N_i=N/2^{i},\eta_i=\eta/2^{5i}$\;

Apply $(\epsilon_i,\delta_i)$-DP ERM algorithm $\A_{\epsilon_i,\delta_i}$ over $\K_{i}=\left\{\omega \in \K:\left\|\omega-\omega_{i-1}\right\|_{2} \leq 2G\eta_iN_i\right\}$ \ with the function
$\HF_i(\omega)=\frac{1}{N_i}\sum_{j\in S_i}f(\omega,x_j)+\frac{1}{\eta_iN_i}\|\omega-\omega_{i-1}\|^2$ where $S_i$ consists of $N_i$ samples with replacement from $\mathcal{P}$\;

Let $\omega_i$ be the output of the ERM algorithm\;

}

{\bf Return:} The final iterate $\omega_k$\;
\end{algorithm2e}

\begin{theorem}
\label{thm:local}
Suppose we have an algorithm $\A$ which can solve ERM under strongly convex case and gets a solution with excess empirical loss $O(\frac{G^2}{\mu N})$ by using $g(N)$ many gradients, then we  have an algorithm $\A'$ which can solve SCO under general case and gets a solution with excess population loss $O(\frac{GD}{\sqrt{N}})$ by using $\sum_{i=1}^{\lceil \log N\rceil}g( N/2^i)$ many gradients.

Moreover, for $\epsilon,\delta\leq 1/2$, if $\A_{\epsilon,\delta}$ is $(\epsilon,\delta)$-differentially private with excess empirical loss $O(\frac{G^2}{\mu}(\frac{1}{N}+\frac{d\log(1/\delta)}{\epsilon^2 N^2}))$ under the strongly convex case by using $g(N,\epsilon, \delta)$ many gradients, then we can get $(\epsilon, \delta)$-differentially private $\A'$ with excess population loss
$O(GD(\frac{1}{\sqrt{N}}+\frac{\sqrt{d\log(1/\delta)}}{\epsilon N}))$ by querying gradients at most $\sum_{i=1}^{\lceil \log N\rceil}g( N/2^i,\epsilon/2^i,\delta/2^i)$ times.
\end{theorem}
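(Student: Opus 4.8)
The plan is to instantiate Algorithm~\ref{alg:local} and run the iterative–localization analysis of Feldman et al.~\cite{fkt20}, using the hypothesized strongly-convex ERM routine as the per-phase solver, and then to track the error and privacy budget across the $k=\lceil\log N\rceil$ phases.

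First I would isolate one phase $i$ and record the structure of its sub-problem. The regularizer $\frac{1}{\eta_iN_i}\|\omega-\omega_{i-1}\|_2^2$ is $\frac{2}{\eta_iN_i}$-strongly convex, so $\HF_i$ and its population analogue $F_i(\omega):=F(\omega)+\frac{1}{\eta_iN_i}\|\omega-\omega_{i-1}\|_2^2$ are $\mu_i$-strongly convex with $\mu_i=\Theta(1/(\eta_iN_i))$; on the ball $\K_i$ of radius $2G\eta_iN_i$ the regularizer's gradient norm is at most $\frac{2}{\eta_iN_i}\cdot 2G\eta_iN_i=4G$, so $\HF_i$ and $F_i$ are $O(G)$-Lipschitz there. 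Since $F$ and the empirical average are $G$-Lipschitz, the optimality condition of the proximal step forces $\|\argmin_{\R^d}F_i-\omega_{i-1}\|_2\le G/\mu_i=\tfrac12 G\eta_iN_i<2G\eta_iN_i$, and similarly for the empirical minimizer; hence the minimizers we care about lie in the interior of $\K_i$, the constraint is never harmfully active, and $\min_{\K_i}F_i$ equals the unconstrained minimum, which is in turn $\le F_i(\omega^*)$ even when $\omega^*\notin\K_i$. Applying the assumed ERM algorithm to $\HF_i$ with $N_i$ samples (and budget $(\epsilon_i,\delta_i)$ in the private case) returns $\omega_i\in\K_i$ with expected excess \emph{empirical} loss $O(G^2/(\mu_iN_i))$, plus $O(G^2d\log(1/\delta_i)/(\mu_i\epsilon_i^2N_i^2))$ privately. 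Because the sub-problem is $\mu_i$-strongly convex and $O(G)$-Lipschitz, the uniform stability of regularized strongly-convex ERM~\cite{hrs16,fkt20} converts this into the same bound on the expected excess \emph{population} loss $\Delta_i:=\E[F_i(\omega_i)-\min_{\K_i}F_i]$; substituting $\mu_i=\Theta(1/(\eta_iN_i))$, $N_i=N/2^i$, $\epsilon_i=\epsilon/2^i$, $\eta_i=\eta/2^{5i}$ gives $\Delta_i=O(2^{-5i}G^2\eta+2^{-2i}G^2d\log(1/\delta)\eta/(\epsilon^2N))$, a geometric series in $i$ dominated by its $i=1$ term.

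Next I would glue the phases together. Near-optimality of $\omega_i$ for $F_i$ on $\K_i$ gives the one-step bound $\E[F(\omega_i)]+\frac{1}{\eta_iN_i}\E\|\omega_i-\omega_{i-1}\|_2^2\le F(\omega^*)+\frac{1}{\eta_iN_i}\E\|\omega^*-\omega_{i-1}\|_2^2+\Delta_i$. Telescoping these inequalities as in~\cite{fkt20} — using that the proximal map is firmly nonexpansive toward the minimizer (so $\|\argmin_{\R^d}F_i-\omega^*\|_2\le\|\omega_{i-1}-\omega^*\|_2$), the geometric contraction of the ball radii $2G\eta_iN_i$, and the fact that the ratio $\eta_{i-1}N_{i-1}/(\eta_iN_i)=64$ is large enough to sum the resulting geometric series — yields $\E[F(\omega_k)-F(\omega^*)]=O(\|\omega_0-\omega^*\|_2^2/(\eta N)+\sum_i\Delta_i)=O(D^2/(\eta N)+G^2\eta+G^2d\log(1/\delta)\eta/(\epsilon^2N))$, the last term being absent without privacy. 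Taking $\eta=\Theta(D/(G\sqrt N))$ non-privately yields $O(GD/\sqrt N)$; taking $\eta=\Theta(D/(G\sqrt{N+d\log(1/\delta)/\epsilon^2}))$ privately balances the three terms and yields $O(GD(1/\sqrt N+\sqrt{d\log(1/\delta)}/(\epsilon N)))$, as claimed.

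Finally, the $k=\lceil\log N\rceil$ phases act on disjoint sample blocks $S_1,\dots,S_k$ of sizes $N_i=N/2^i$ (so $\sum_iN_i\le N$), and phase $i$ is $(\epsilon_i,\delta_i)$-DP; since $\sum_i\epsilon_i<\epsilon$ and $\sum_i\delta_i<\delta$, composition (Theorem~\ref{thm:basic_com}) and post-processing make $\A'$ $(\epsilon,\delta)$-differentially private, and phase $i$ issues $g(N/2^i)$ (resp.\ $g(N/2^i,\epsilon/2^i,\delta/2^i)$) gradient queries, for a total of $\sum_{i=1}^{\lceil\log N\rceil}g(N/2^i)$ (resp.\ $\sum_{i=1}^{\lceil\log N\rceil}g(N/2^i,\epsilon/2^i,\delta/2^i)$). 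I expect the main obstacle to be the telescoping convergence step: the regularization weight $1/(\eta_iN_i)$ grows geometrically, so a naive $\ell_2$-triangle-inequality bound on $\E\|\omega^*-\omega_{i-1}\|_2^2$ explodes; one must instead argue that each constrained sub-problem coincides with an unconstrained proximal step, exploit nonexpansiveness of the proximal operator toward $\omega^*$, and tune the geometric rates (the exponent $5$ in $\eta_i=\eta/2^{5i}$) so that the accumulated distance terms collapse to $O(D^2/(\eta N))$ rather than blowing up. A secondary technical point is the empirical-to-population conversion, which is exactly why the per-phase objectives are deliberately made strongly convex even though the target SCO instance is only convex; one also has to check that the Lipschitz and convolution-smoothing neighborhoods required by the per-phase ERM routine stay inside $\K_r$.
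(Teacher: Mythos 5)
Your setup — the per-phase strong-convexity parameter $\mu_i=\Theta(1/(\eta_iN_i))$, the observation that the regularized empirical minimizer $\homega_i$ lies inside $\K_i$, the privacy accounting via basic composition over disjoint sample blocks, and the final tuning of $\eta$ — all match the paper. The gap is precisely in the step you flag as "the main obstacle": the telescoping decomposition is the wrong one, and firm nonexpansiveness does not rescue it. You compare every phase directly to $\omega^*$ through the one-step inequality $\E[F(\omega_i)]+\tfrac{1}{\eta_iN_i}\E\|\omega_i-\omega_{i-1}\|^2\le F(\omega^*)+\tfrac{1}{\eta_iN_i}\E\|\omega^*-\omega_{i-1}\|^2+\Delta_i$, so the telescoping sum carries a term of order $\E\|\omega^*-\omega_{i-1}\|^2/(\eta_iN_i)$. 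Nonexpansiveness of the proximal map only gives $\|\omega^*-\omega_{i-1}\|\le D$; since the weight $1/(\eta_iN_i)=2^{6i}/(\eta N)$ grows geometrically while the distance to $\omega^*$ need not shrink at all (consider $F$ nearly flat — the proximal iterates barely move and $\|\omega^*-\omega_{i-1}\|$ stays $\approx D$), Abel summation of these terms produces $O(D^2/(\eta_kN_k))\approx D^2\,2^{6k}/(\eta N)$, not $O(D^2/(\eta N))$. The paper sidesteps this by telescoping a different sequence: it writes $\E[F(\omega_k)]-F(\omega^*)=\sum_{i=1}^k\E[F(\homega_i)-F(\homega_{i-1})]+\E[F(\omega_k)-F(\homega_k)]$ with $\homega_0:=\omega^*$ and $\homega_i:=\argmin_{\K}\HF_i$, and uses the uniform stability of the \emph{exact} regularized empirical minimizer (Theorem 6 of \cite{sssss09}) to obtain, for any $y\in\K$, $\E[F(\homega_i)-F(y)]\le\tfrac{\E\|\omega_{i-1}-y\|^2}{\eta_iN_i}+O(G^2\eta_i)$. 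Taking $y=\homega_{i-1}$, the distance that shows up is $\|\omega_{i-1}-\homega_{i-1}\|$ — the ERM approximation error of the \emph{previous} phase, which Lemma~\ref{lm:accuray_phase} together with strong convexity makes small enough to cancel the growth of $1/(\eta_iN_i)$ — rather than $\|\omega^*-\omega_{i-1}\|$, which enjoys no such contraction. This choice of reference point is the missing idea.

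A secondary gap is in your empirical-to-population conversion. Uniform stability of strongly-convex ERM bounds the population excess risk of the exact empirical minimizer $\homega_i$, not of the approximate output $\omega_i$; passing from one to the other via Lipschitzness and strong convexity costs a $\sqrt{N_i}$ factor, since $\E\|\omega_i-\homega_i\|\lesssim G/(\mu_i\sqrt{N_i})$ only yields $\E[F_i(\omega_i)-F_i(\homega_i)]\lesssim G^2/(\mu_i\sqrt{N_i})$, not the claimed $O(G^2/(\mu_iN_i))$ for $\Delta_i$. The paper avoids ever needing a population bound for $\omega_i$ in the intermediate phases: it applies stability only to the $\homega_i$'s and pays the output-to-minimizer gap once, at the final iterate, where $\eta_k$ is so small that $\E[F(\omega_k)-F(\homega_k)]\le G\sqrt{\E\|\omega_k-\homega_k\|^2}$ is negligible.
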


%\begin{remark}
%We may get better results with more advanced composition techniques, such as Moments Accountant Method.
%For example, the privacy guarantee in this work comes from , then $\A'$ is $(\epsilon,\delta)$-differentially private with the same excess population loss and gradient complexity.
%\end{remark}

We only prove the bound with privacy guarantee, as the (non-private) bound can be proved with similar argument.
Two technical lemmas will be proved at first, after which we will complete the proof.

\begin{lemma}
\label{lm:accuray_phase}
Let $\homega_i=\arg\min_{\omega\in \K}\HF_i(\omega)$, then
\begin{align*}
    \E[\|\omega_i-\homega_i\|_2^2]\leq O(\frac{G^2\eta_i^2d\log(1/\delta_i)}{\epsilon^2_i}+G^2\eta_i^2N_i).
\end{align*}

\end{lemma}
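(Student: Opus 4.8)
\textbf{Proof proposal.} The plan is to exploit the strong convexity that the quadratic regularizer $\frac{1}{\eta_i N_i}\|\omega-\omega_{i-1}\|_2^2$ injects into $\HF_i$, and then to invoke the assumed strongly-convex ERM guarantee of $\A_{\epsilon_i,\delta_i}$ as a black box. The first step is to record that $\HF_i$ is $\mu_i$-strongly convex with $\mu_i=\frac{2}{\eta_i N_i}$: the data part $g(\omega):=\frac{1}{N_i}\sum_{j\in S_i} f(\omega,x_j)$ is convex and $G$-Lipschitz, while the regularizer alone is $\frac{2}{\eta_i N_i}$-strongly convex.

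Next I would verify that $\homega_i\in\K_i$, so that running $\A_{\epsilon_i,\delta_i}$ over the localized set $\K_i$ genuinely targets $\homega_i$. By optimality of $\homega_i$ over $\K$ (which contains $\omega_{i-1}$),
\[
g(\homega_i)+\tfrac{1}{\eta_i N_i}\|\homega_i-\omega_{i-1}\|_2^2\leq g(\omega_{i-1}),
\]
hence $\tfrac{1}{\eta_i N_i}\|\homega_i-\omega_{i-1}\|_2^2\leq g(\omega_{i-1})-g(\homega_i)\leq G\|\homega_i-\omega_{i-1}\|_2$ by Lipschitzness, which gives $\|\homega_i-\omega_{i-1}\|_2\leq G\eta_i N_i\leq 2G\eta_i N_i$. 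Thus $\homega_i\in\K_i$ and $\min_{\omega\in\K_i}\HF_i(\omega)=\HF_i(\homega_i)=\HF_i^*$, so the ERM guarantee over $\K_i$ controls the gap to $\HF_i^*$ directly.

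Now apply the strongly-convex ERM bound with Lipschitz parameter $G$, strong-convexity parameter $\mu_i$, sample size $N_i$, and privacy parameters $(\epsilon_i,\delta_i)$; using $1/\mu_i=\eta_i N_i/2$,
\[
\E[\HF_i(\omega_i)-\HF_i^*]=O\left(\frac{G^2}{\mu_i}\Big(\frac{1}{N_i}+\frac{d\log(1/\delta_i)}{\epsilon_i^2 N_i^2}\Big)\right)=O\left(G^2\eta_i+\frac{G^2\eta_i d\log(1/\delta_i)}{\epsilon_i^2 N_i}\right).
\]
Finally convert this excess loss to a squared distance via strong convexity of $\HF_i$: since $\homega_i$ is its minimizer, $\frac{\mu_i}{2}\|\omega_i-\homega_i\|_2^2\leq \HF_i(\omega_i)-\HF_i^*$, so
\[
\E[\|\omega_i-\homega_i\|_2^2]\leq \frac{2}{\mu_i}\E[\HF_i(\omega_i)-\HF_i^*]=\eta_i N_i\cdot O\left(G^2\eta_i+\frac{G^2\eta_i d\log(1/\delta_i)}{\epsilon_i^2 N_i}\right)=O\left(G^2\eta_i^2 N_i+\frac{G^2\eta_i^2 d\log(1/\delta_i)}{\epsilon_i^2}\right),
\]
which is the claimed bound.

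The only non-mechanical point is the step $\homega_i\in\K_i$: one must check that the publicly chosen radius $2G\eta_i N_i$ is large enough to contain the regularized minimizer, since otherwise optimizing over $\K_i$ rather than $\K$ would introduce an uncontrolled bias. This is exactly the structural fact underlying the localization scheme — it keeps the feasible set (and hence the sensitivity and the injected noise) small while still containing the optimum — so I would expect this to be the crux of the argument, with everything else reducing to substituting $\mu_i=2/(\eta_iN_i)$ into the black-box ERM bound.
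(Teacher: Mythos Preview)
Your proposal is correct and follows essentially the same route as the paper: first show $\homega_i\in\K_i$ via the optimality-plus-Lipschitz argument, then apply the assumed strongly-convex ERM guarantee as a black box, and finally convert the excess loss to a squared distance using strong convexity of $\HF_i$. The only cosmetic difference is that the paper records the strong-convexity parameter as $\lambda_i=\tfrac{1}{\eta_iN_i}$ rather than your (sharper) $\mu_i=\tfrac{2}{\eta_iN_i}$, which is immaterial inside the $O(\cdot)$.
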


\begin{proof}
At first, we prove that $\homega_i\in\K_i$. The definition of $\homega_i$ implies that
\begin{align*}
    \frac{1}{N_{i}} \sum_{j=1}^{N_{i}} f\left(\homega_{i}, x_{j}\right)+\frac{1}{\eta_{i} N_{i}}\left\|\homega_{i}-\omega_{i-1}\right\|_{2}^{2} \leq \frac{1}{N_{i}} \sum_{j=1}^{N_{i}} f\left(\omega_{i-1}, x_{j}\right).
\end{align*}
Then we know that
\begin{align*}
    \frac{1}{\eta_iN_i}\|\homega_i-\omega_{i-1}\|_2^2\leq G\|\homega_i-\omega_{i-1}\|_2,
\end{align*}
which implies $\homega_i \in \K_i$.

Next, note that $\HF_i$ is $\lambda_i=\frac{1}{\eta_iN_i}$-strongly convex, by the guarantee of our ERM algorithm, we know that
\begin{align*}
    \frac{\lambda_i}{2}\E[\|\homega_i-\omega_i\|_2^2]\leq & \E[\HF_i(\homega_i)-\HF_i(\omega_i)]\\
    \leq & O(\frac{G^2d\log(1/\delta_i)}{\lambda_i\epsilon_i^2N_i^2}+\frac{G^2}{\lambda_iN_i})\\
    =& O(\frac{G^2\eta_id\log(1/\delta_i)}{\epsilon_i^2N_i}+G^2\eta_i),
\end{align*}
which implies 

\begin{align*}
    \E[\|\homega_i-\omega_i\|_2^2]\leq O(\frac{G^2\eta_i^2d\log(1/\delta_i)}{\epsilon^2_i}+G^2\eta_i^2N_i).
\end{align*}
%The gradient complexity is $O(N_i+\min\{\sqrt{\epsilon_i}N_i^{5/4}d^{1/8}, \frac{\epsilon_iN_i^{3/2}}{d^{1/8}\log^{1/4}(1/\delta)}\})$.
\end{proof}

\begin{lemma}
For any $y\in \K$, we know that
\begin{align*}
    \E[F(\homega_i)-F(y)]\leq \frac{\E[\|\omega_{i-1}-y\|_2^2]}{\eta_iN_i}+O(G^2\eta_i).
\end{align*}
\end{lemma}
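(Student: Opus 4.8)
The plan is to view $\homega_i$ as the exact minimizer of a strongly-convex regularized empirical risk on the \emph{fresh} batch $S_i$, and to bound its excess \emph{population} loss by combining (i) the elementary optimality inequality $\HF_i(\homega_i)\le\HF_i(y)$ with (ii) the classical fact that strongly-convex regularized ERM is uniformly stable and hence generalizes. Write $\widehat{L}_i(\omega):=\frac{1}{N_i}\sum_{j\in S_i}f(\omega,x_j)$ for the unregularized empirical loss and $r_i(\omega):=\frac{1}{\eta_iN_i}\|\omega-\omega_{i-1}\|_2^2$ for the regularizer, so $\HF_i=\widehat{L}_i+r_i$ and $\homega_i=\arg\min_{\omega\in\K}\HF_i(\omega)$; as noted in Lemma~\ref{lm:accuray_phase}, $\HF_i$ is $\lambda_i$-strongly convex with $\lambda_i=\Theta(1/(\eta_iN_i))$, and each $f(\cdot,x_j)$ is $G$-Lipschitz. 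Throughout I would condition on the randomness of phases $1,\dots,i-1$ (hence on $\omega_{i-1}$), take the inner expectation over $S_i\sim\mathcal{P}^{N_i}$, and restore the outer expectation over $\omega_{i-1}$ at the end; note that $y\in\K$ is fixed, hence independent of $S_i$.

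First I would establish uniform stability of $\homega_i$. For data sets $S_i,S_i'$ differing in a single sample, let $\homega_i,\homega_i'$ be the corresponding minimizers; writing the $\lambda_i$-strong-convexity inequality for $\HF_i$ at $\homega_i'$ and for $\HF_i'$ at $\homega_i$, adding them, and observing that $\HF_i-\HF_i'$ equals $\frac{1}{N_i}$ times the difference of the two $G$-Lipschitz loss terms, one gets $\lambda_i\|\homega_i-\homega_i'\|_2^2\le\frac{2G}{N_i}\|\homega_i-\homega_i'\|_2$, i.e. $\|\homega_i-\homega_i'\|_2\le\frac{2G}{\lambda_iN_i}$, and therefore $|f(\homega_i,z)-f(\homega_i',z)|\le\frac{2G^2}{\lambda_iN_i}=O(G^2\eta_i)$ for every $z$. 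By the standard symmetrization identity relating $\E_{S_i}[\widehat{L}_i(\homega_i)]$ to $\E_{S_i}[F(\homega_i)]$, this uniform stability bound yields the generalization inequality $\E_{S_i}[F(\homega_i)-\widehat{L}_i(\homega_i)]\le O(G^2\eta_i)$ (one may also invoke this stability$\Rightarrow$generalization step as a black box).

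It then remains to combine with the optimality of $\homega_i$. Since $\homega_i$ minimizes $\HF_i$ over $\K$ and $y\in\K$, we have $\HF_i(\homega_i)\le\HF_i(y)$; taking $\E_{S_i}$ and using $\E_{S_i}[\widehat{L}_i(y)]=F(y)$ gives $\E_{S_i}[\widehat{L}_i(\homega_i)+r_i(\homega_i)]\le F(y)+r_i(y)$. Dropping the nonnegative term $r_i(\homega_i)$ and adding the generalization bound,
\begin{align*}
\E_{S_i}[F(\homega_i)]
&=\E_{S_i}\big[F(\homega_i)-\widehat{L}_i(\homega_i)\big]+\E_{S_i}\big[\widehat{L}_i(\homega_i)\big]\\
&\le O(G^2\eta_i)+\E_{S_i}\big[\widehat{L}_i(\homega_i)+r_i(\homega_i)\big]
\le O(G^2\eta_i)+F(y)+r_i(y).
\end{align*}
Taking the outer expectation over $\omega_{i-1}$ and recalling $r_i(y)=\frac{1}{\eta_iN_i}\|\omega_{i-1}-y\|_2^2$ gives $\E[F(\homega_i)-F(y)]\le\frac{\E[\|\omega_{i-1}-y\|_2^2]}{\eta_iN_i}+O(G^2\eta_i)$, as claimed.

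The main obstacle is the stability step: one must run the displacement argument for the \emph{constrained} minimizer over $\K$ (which still works because strong convexity gives $\HF_i(\omega)\ge\HF_i(\homega_i)+\frac{\lambda_i}{2}\|\omega-\homega_i\|_2^2$ for all $\omega\in\K$), and invoke the stability$\Rightarrow$generalization statement with the correct constant and with the expectation taken over a genuinely fresh sample — which is precisely why the localization scheme draws a new batch $S_i$ in each phase and why $y$ must be chosen independently of $S_i$. Everything else is bookkeeping with the regularizer and the conditioning.
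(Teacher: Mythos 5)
Your proof is correct and takes essentially the same route as the paper: the paper simply invokes Theorem~6 of Shalev-Shwartz, Shamir, Srebro, and Sridharan (2009) for the regularized losses $r(\omega,x)=f(\omega,x)+\frac{1}{\eta_iN_i}\|\omega-\omega_{i-1}\|_2^2$ to obtain $\E[R(\homega_i)-R(y^*)]\le O(G^2\eta_i)$ and then uses $R(y^*)\le R(y)$, which is precisely the displacement-stability-to-generalization argument you re-derive from scratch. You also correctly flag the two subtle points the cited theorem hides, namely that $y$ and $\omega_{i-1}$ must be independent of the fresh batch $S_i$ (true in the application since $y=\homega_{i-1}$) and that the strong-convexity inequality must be taken at the \emph{constrained} minimizer over $\K$.
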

\begin{proof}
%By Theorem 6 and 7 in \cite{sssss09}.
Let $r(\omega,x)=f(\omega,x)+\frac{1}{\eta_i N_i}\|\omega-\omega_{i-1}\|_2^2$, $R(\omega)=\E_{x\sim {\cal P}}r(\omega,x)$ and $y^*=\arg\min_{\omega\in\K}R(\omega)$. By Theorem 6 in \cite{sssss09}, one has that
\begin{align*}
    \E[R(\homega_i)-R(y)]=& \E[F(\homega_i)+\frac{1}{\eta_i N_i}\|\homega_i-\omega_{i-1}\|_2^2-F(y)-\frac{1}{\eta_i N_i}\|y-\omega_{i-1}\|_2^2]\\
    \leq & \E[R(\homega_i)-R(y^*)]\\
    \leq & O(G^2\eta_i),
\end{align*}
which implies that 
\begin{align*}
    \E[F(\homega_i)-F(y)]\leq & O(G^2\eta_i)-\frac{1}{\eta_i N_i}\E[\|\homega_i-\omega_{i-1}\|_2^2]+\frac{1}{\eta_i N_i}\E[\|y-\omega_{i-1}\|_2^2]\\
    \leq & O(G^2\eta_i)+\frac{1}{\eta_i N_i}\E[\|y-\omega_{i-1}\|_2^2].
\end{align*}
\end{proof}

Having these two lemmas, we can begin the proof.

\begin{proof}[Proof of Theorem~\ref{thm:local}]
The privacy guarantee comes directly from the basic composition theorem (See Theorem~\ref{thm:basic_com}).

Let $S_i=\{x_j\}_{N- N/2^{i-1} \leq j\leq N- N/2^{i}}$. 
%As the pseudo-code of Algorithm~\ref{alg:local} shows, we split the sample set into $k$ disjoint subsets.
Let $N_i=N/2^i,\epsilon_i=\epsilon/2^i,\delta_i=\delta/2^i$ and $\eta_i=\eta/2^{5i}$ where $\eta$ will be defined soon.
For $i\in[k]$, let $\HF_i(\omega)=\sum_{x_j\in S_i}f(\omega,x_j)+\frac{1}{\eta_iN_i}\|\omega-\omega_{i-1}\|_2^2$.

Let $\homega_0=\omega^*$, we have
\begin{align*}
    \E[F(\omega_k)]-F(\omega^*)=\sum_{i=1}^k\E[F(\homega_i)-F(\homega_{i-1})]+\E[F(\omega_k)-F(\homega_k)].
\end{align*}

First, Lemma~\ref{lm:accuray_phase} implies that
\begin{align*}
    \E[F(\omega_k)-F(\homega_k)]\leq & O(G\sqrt{\E[\|\omega_k-\homega_k\|_2^2]})\\
    \leq & O(\frac{G^2\eta_k\sqrt{ d\log(1/\delta_k)}}{\epsilon_k} + G^2\eta_k \sqrt{N_k})\\
    =& O(\frac{G^2\eta\sqrt{ d\log(N/\delta)}}{\epsilon N^3}+\frac{G^2\eta}{N^4}),
\end{align*}
which is negligible.

Then one has
\begin{align*}
    \sum_{i=1}^k\E[F(\homega_i)-F(\homega_{i-1})]\leq & \sum_{i=1}^k\frac{\E[\|\homega_{i-1}-\omega_{i-1}\|_2^2]}{\eta_iN_i}+O(G^2\eta_i)\\
    \leq & O(\frac{D^2}{\eta N}+ \eta G^2+\sum_{i=2}^{k}(\frac{G^2\eta_i d\log(1/\delta_i)}{\epsilon_i^2N_i}+G^2\eta_i))\\
    \leq & O(\frac{D^2}{\eta N}+\eta G^2+\frac{G^2\eta d\log(1/\delta)}{\epsilon^2N}).
\end{align*}

By setting $\eta=\frac{D}{G}\cdot\min\{\frac{1}{\sqrt{N}},\frac{\epsilon}{\sqrt{d\log(1/\delta)}}\}$, we get the excess population loss:
\begin{align*}
    \E[F(\homega_k)-F(\omega^*)]=O(GD(\frac{1}{\sqrt{N}}+\frac{\sqrt{d\log(1/\delta)}}{N\epsilon}).
\end{align*}
As for the gradient complexity, as we use $g(N_i,\epsilon_i,\delta_i)$ queries of gradients in $i$-th iteration, the total gradient complexity is $\sum_{i=1}^{k}g(N_i,\epsilon_i,\delta_i)$ as claimed.
\end{proof}
%Note that we are using $\HF(\omega)=\frac{1}{N}\sum_{x_i\in S}f(\omega,x_i)$ to represent the ERM objective funcion, but in this section we will use $\HF=\frac{1}{N}\sum_{x_i\in S}f(\omega,x_i)$ while $F(\omega)=\E[x\sim {\cal P}]f(\omega,x)$.We try to use our ERM result (See Lemma~\ref{lm:guarantee_ACSA}) and Localization framework from \cite{fkt20} to bound the excess population loss. 
%Now we apply this result and our ERM algorithm to get an optimal excess population loss.
%We should modify the ERM result at first (such as resetting the batch size, number of iterations and so on) and prove a technical Lemma similar to Lemma~\ref{lm:remove_dia} but under the context of SCO.
Note that Theorem~\ref{thm:local} allows the ERM algorithm has an extra $G^2/(\mu N)$ loss. This allows us to design a faster ERM algorithm compared Theorem~\ref{thm:strongly_SCO} by choosing a different set of parameters.

\begin{lemma}
\label{lm:DPERM_ite}
Under the assumption defined in Algorithm Private \ACSA, with
\begin{align*}
    O\big( N+
    \min\{\sqrt{\epsilon}N^{5/4}d^{1/8},\frac{\epsilon N^{\frac{3}{2}}}{d^{1/8}\log^{1/4}(1/\delta)}\} 
    \big)
\end{align*}
gradient complexity, one can get a solution $\omega_T$ such that
\begin{align*}
    \E[\HF(\omega_T)-\HF^*]=O(\frac{G^2}{\mu}(\frac{1}{N}+\frac{d\log(1/\delta)}{\epsilon^2 N^2})).
\end{align*}
% We don't need this for the final theorem
% Besides, if we are considering the general convex case, we can make
% \begin{align*}
%     \E[\HF(\omega_T)-\HF^*]=O(GD(\frac{1}{\sqrt{N}}+\frac{d\log(1/\delta)}{\epsilon^2 N^2})).
% \end{align*}
\end{lemma}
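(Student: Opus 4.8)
The plan is to reuse the machinery behind Theorem~\ref{thm:strongly_SCO}, but to exploit the extra $G^2/(\mu N)$ slack in the target error so that a cheaper choice of parameters suffices; the final bound is the better of two parameter regimes, which is where the ``$\min$'' comes from. Throughout we stay in the non-trivial regime $d\log(1/\delta)\le \epsilon^2 N^2$.

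\textbf{Branch A (large $d\log(1/\delta)$, i.e.\ $d\log(1/\delta)\ge\epsilon^2 N$).} Here I simply invoke Theorem~\ref{thm:strongly_SCO} verbatim: it already produces $\omega_T$ with $\E[\HF(\omega_T)-\HF^*]=O\!\big(\tfrac{G^2 d\log(1/\delta)}{\mu\epsilon^2 N^2}\big)$, which lies inside the claimed error, using $O\!\big(\tfrac{\epsilon N^{3/2}}{d^{1/8}\log^{1/4}(1/\delta)}+\tfrac{\epsilon^2 N^2}{d\log(1/\delta)}\big)$ gradient queries. In this regime the second summand is $O(N)$, so the cost is $O\!\big(N+\tfrac{\epsilon N^{3/2}}{d^{1/8}\log^{1/4}(1/\delta)}\big)$.

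\textbf{Branch B (small $d\log(1/\delta)$).} Now the target is only $O(G^2/(\mu N))$, so the accelerated scheme can be stopped far earlier. In Private $\ACSA$ I take the step count $T=\Theta(\sqrt N\,d^{1/4})$ (instead of $\Theta(\epsilon N/(d^{1/4}\sqrt{\log(1/\delta)}))$), keep $r=\Theta(Dd^{1/4}/T)$ and $\sigma=\Theta(\tfrac{GB\sqrt{T\log(1/\delta)}}{\epsilon N})$ as in Lemmas~\ref{lm:guarantee_ACSA} and~\ref{lem:ACSA_DP}, and choose the smallest batch size meeting the privacy hypothesis $\epsilon\le c_1B^2T/N^2$, namely $B=\Theta(\sqrt{\epsilon N^2/T})=\Theta(\sqrt\epsilon\,N^{3/4}/d^{1/8})$ (one checks $B\le N/10$). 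Feeding these into Lemma~\ref{lm:guarantee_ACSA}, the two noise terms contribute $O\!\big(\tfrac{G^2}{\mu BT}+\tfrac{G^2 d\log(1/\delta)}{\mu\epsilon^2 N^2}\big)$, inside the target, while the smoothing/acceleration term is $O(GDd^{1/4}/T)=O(GD/\sqrt N)$ --- too large by a $\sqrt N$ factor on its own. To kill the diameter dependence I run this as the inner algorithm of the geometric-restart reduction of Lemma~\ref{lm:remove_dia}: repeat it $k=\Theta(\log\log N)$ times, each restart starting from the previous iterate with privacy budget $(\epsilon_i,\delta_i)=(\epsilon/2^{k+1-i},\delta/2^{k+1-i})$. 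Writing $\Delta_i=\E[\HF(\omega_i)-\HF^*]$ and using $\mu$-strong convexity ($\E\|\omega_i-\omega^*\|_2^2\le 2\Delta_i/\mu$), the per-restart guarantee becomes a recursion of the form $\Delta_i\le \tfrac{cG}{\sqrt{N\mu}}\sqrt{\Delta_{i-1}}+(\text{noise}_i)$, whose diameter part contracts toward the fixed point $\Theta(G^2/(\mu N))$; exactly as in Lemma~\ref{lm:recurrence_converge} this reaches $\Delta_k=O\!\big(\tfrac{G^2}{\mu N}+\tfrac{G^2 d\log(1/\delta)}{\mu\epsilon^2 N^2}\big)$ after $\Theta(\log\log N)$ restarts. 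The $i$-th restart costs $B_iT_i=\Theta(\sqrt{\epsilon_i}\,N^{5/4}d^{1/8})$ to satisfy privacy (the accuracy demand on its noise is weaker in the early, still-coarse restarts and needs only $O(N)$ extra queries in total), and both contributions telescope to $O\!\big(N+\sqrt\epsilon\,N^{5/4}d^{1/8}\big)$; privacy follows from basic composition (Theorem~\ref{thm:basic_com}). Finally, $\sqrt\epsilon\,N^{5/4}d^{1/8}\le \tfrac{\epsilon N^{3/2}}{d^{1/8}\log^{1/4}(1/\delta)}$ exactly when $d\log(1/\delta)\le\epsilon^2 N$, so Branch~B is used precisely where it is the cheaper of the two, and taking the minimum of the two branches yields $O\!\big(N+\min\{\sqrt\epsilon N^{5/4}d^{1/8},\,\tfrac{\epsilon N^{3/2}}{d^{1/8}\log^{1/4}(1/\delta)}\}\big)$.

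\textbf{Main obstacle.} The delicate part is the bookkeeping inside the restart recursion of Branch~B: one must verify that the Gaussian-noise contributions $\tfrac{G^2 d\log(1/\delta_i)}{\mu\epsilon_i^2 N^2}$ coming from the (noisier) early restarts are damped by the subsequent $\sqrt{\,\cdot\,}$ contractions all the way down to $O\!\big(\tfrac{G^2}{\mu N}+\tfrac{G^2 d\log(1/\delta)}{\mu\epsilon^2 N^2}\big)$ without leaving stray logarithmic factors --- possibly after refining the privacy split across restarts --- and that the side constraints $B_i\le N/10$ and $\epsilon_i\le c_1B_i^2T/N^2$ hold in every restart so that Lemma~\ref{lem:ACSA_DP} is applicable throughout; the choice $T=\Theta(\sqrt N\,d^{1/4})$ is exactly what makes the contraction's fixed point equal to the allowed slack $\Theta(G^2/(\mu N))$, so getting this constant interplay right is the crux.
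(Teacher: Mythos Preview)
Your approach is correct and lands on essentially the same argument as the paper, but organized differently. The paper does not branch: it runs Private \ACSA\ once with the unified step budget
\[
T \;=\; \Theta\!\bigl(\min\{\sqrt{N}\,d^{1/4},\ N\epsilon/(d^{1/4}\sqrt{\log(1/\delta)})\}\bigr),\qquad
B \;=\; \Theta\!\bigl(N/T + N\sqrt{\epsilon/T}\bigr),
\]
which in one shot yields the error $O\!\bigl(\tfrac{G^2}{\mu}\zeta + GD\sqrt{\zeta}\bigr)$ with $\zeta=\tfrac{1}{N}+\tfrac{d\log(1/\delta)}{\epsilon^2N^2}$, and then invokes (a direct generalization of) Lemma~\ref{lm:remove_dia} to strip the $GD\sqrt{\zeta}$ term. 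Your two branches are exactly the two arms of this $\min$: Branch~A picks the second arm (recovering Theorem~\ref{thm:strongly_SCO}), and Branch~B picks the first. The paper's unified choice is cleaner for two reasons: it avoids checking consistency at the regime boundary, and in the restart recursion both the diameter term and the additive term are governed by the \emph{same} quantity $\zeta_i$, so Lemma~\ref{lm:remove_dia} applies verbatim. In your Branch~B the diameter coefficient is the fixed $G/\sqrt{N}$ while the additive term is $E_i=\tfrac{G^2}{\mu}\zeta_i$; your ``main obstacle'' dissolves once you note $G^2/(\mu N)\le E_i$ for every $i$, so $\sqrt{E'\Delta_{i-1}}\le \sqrt{E_i\Delta_{i-1}}$ and the recursion reduces to the one in Lemma~\ref{lm:remove_dia} (with the same $E_i/E_{i+1}\le 8$ check). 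One small slip: in Branch~B your stated batch $B=\Theta(\sqrt{\epsilon N^2/T})$ does not by itself guarantee $BT\ge N$; the paper adds the $N/T$ summand to $B$ for exactly this reason, which is what produces the $O(N)$ in the final complexity that you otherwise wave in informally.
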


\begin{proof}

By Lemma~\ref{lm:guarantee_ACSA}, one has
\begin{align*}
    \E[\HF(\omega_T)-\HF^*]=O\left(\frac{G^2/B+\sigma^2d/B^2}{\mu T}+\frac{GDd^{1/4}}{T}\right).
\end{align*}

Again, setting $\sigma=\frac{c_2GB\sqrt{T\log(1/\delta)}}{\epsilon N}$ one has
\begin{align*}
    \E[\HF(\omega_T)-\HF^{*}]=O\left(\frac{G^2}{\mu BT}+\frac{G^2d\log(1/\delta)}{\mu \epsilon^2 N^2}+\frac{GDd^{1/4}}{T}\right).
\end{align*}

Taking $T = 400 \lceil \min\{ N^{1/2}d^{1/4},\frac{N\epsilon}{d^{1/4}\sqrt{\log(1/\delta)}}\} \rceil$ and using $BT \geq N$ (which we will ensure), we have
\begin{align*}
    \E[\HF(\omega_T)-\HF^{*}]
    & =O\left(\frac{G^2}{\mu N}+\frac{G^2d\log(1/\delta)}{\mu \epsilon^2 N^2}+\frac{GD}{\sqrt{N}} +\frac{GD\sqrt{d\log(1/\delta)}}{\mu \epsilon N} \right)\\
    & =O\left(\frac{G^2}{\mu} \zeta + G D \sqrt{\zeta} \right).
\end{align*}
where $\zeta = \frac{1}{N} + \frac{d\log(1/\delta)}{\epsilon^2 N^2}$.

To ensure that Private $\ACSA$ is $(\epsilon,\delta)$-DP, we set $B=\lceil N/T+N\sqrt{\epsilon/T}\rceil =\Theta( N/T+N\sqrt{\epsilon/T})$. By our choice of $T$, we have $B \leq N/10$ and $\epsilon \leq c_1 B^2 T / N^2$. Hence, we can apply Lemma~\ref{lem:ACSA_DP} to conclude $(\epsilon,\delta)$-DP.

Hence, we have a $(\epsilon,\delta)$-DP for ERM with loss $O\left(\frac{G^2}{\mu} \zeta + G D \sqrt{\zeta} \right)$ with $\zeta = \frac{1}{N} + \frac{d\log(1/\delta)}{\epsilon^2 N^2}$. Note however that  Theorem~\ref{thm:local} requires us to have a DP-ERM algorithm with loss $O(\frac{G^2}{\mu} \zeta)$, namely, we have the extra term $O(G D \sqrt{\zeta})$. To remove this term, we follow the reduction in Lemma~\ref{lm:remove_dia}.

We note that the exact same proof as Lemma~\ref{lm:remove_dia} shows that for any $\zeta>0$, if we can solve strongly ERM with loss $O(G^2 \zeta^2/\mu + G D \zeta)$, then we can solve strongly ERM with loss $O(G^2 \zeta^2/\mu)$ by using the same number of gradient. This completes the proof.

% need
% \begin{align}
% \label{eq:req-four}
%     B^2T\geq \epsilon N^2.
% \end{align}

% We will always need $BT\geq N$.
% To get the desired loss, we further need
% \begin{gather}
% \label{eq:req-five}    
% T\geq \min\{ N^{1/2}d^{1/4},\frac{N\epsilon}{d^{1/4}\sqrt{\log(1/\delta)}}\}.
% \end{gather}

% %The set of $B$ and $T$ will dependent on the dimension.  When dimension $d$ is small, we could set $T=\sqrt{N}d^{1/4}$ and $B=\frac{N}{T}+\frac{\sqrt{\epsilon}N}{\sqrt{T}}=N^{1/2}d^{-1/4}+\epsilon^{1/2}N^{3/4}d^{-1/4}$. When dimension $d$ is large, we could set $T=\frac{\epsilon N}{d^{1/4}\sqrt{\log(1/\delta)}}$ and $B=\sqrt{\frac{\epsilon N^2}{T}}+\frac{\epsilon^2N^2}{d\log(1/\delta)T}$.

% By setting $T=\lceil \min\{ N^{1/2}d^{1/4},\frac{N\epsilon}{d^{1/4}\sqrt{\log(1/\delta)}}\} \rceil=\Theta(\min\{ N^{1/2}d^{1/4},\frac{N\epsilon}{d^{1/4}\sqrt{\log(1/\delta)}}\})$ and $B=\lceil N/T+N\sqrt{\epsilon/T}\rceil =\Theta( N/T+N\sqrt{\epsilon/T})$, we get
% \begin{align*}
%     BT=\Theta\big( N+
%     \min\{\sqrt{\epsilon}N^{5/4}d^{1/8},\frac{\epsilon N^{\frac{3}{2}}}{d^{1/8}\log^{1/4}(1/\delta)}\} 
%     \big).
% \end{align*}

% The extension to general convex case can be proved with similar arguments to Lemma~\ref{lm:remove_dia}.
\end{proof}
Before stating our result on SCO, we need the following variant of Lemma~\ref{lm:remove_dia}. The proof is essentially the same, we state it for future reference.

\begin{lemma}[Reduction to General Convex Case]
\label{lm:remove_dia2} Given $F$ is $G$-Lipschitz and $\mu$-strongly convex. 
Suppose for any $\epsilon,\delta<1/2$, we have an $(\epsilon,\delta)$-differentially private algorithm $\A$ which takes $\omega_0$ as the initial start point and $N$ samples i.i.d drawn from some distribution ${\cal P}$, and
outputs a solution $\omega_T$ such that
\begin{align*}
    \E[F(\omega_T)-F^*]=O\left(\frac{G^2}{\mu}(\frac{1}{N}+\frac{d\log(1/\delta)}{N^2\epsilon^2}) + GD(\frac{1}{\sqrt{N}}+\frac{\sqrt{d\log(1/\delta)}}{N\epsilon})\right),
\end{align*}
where $\omega^*=\arg\min_{\omega\in\K}F(\omega)$ and $D=\|\omega_0-\omega^*\|$.
Then by taking $\A$ as sub-procedure with some modifications on parameters, we can get an $(\epsilon,\delta)$-differentially private solution with excess population loss at most
\begin{align*}
\E[F(\omega_T)-F^*]=O\left(
\frac{G^2}{\mu}(\frac{1}{N}+\frac{d\log(1/\delta)}{N^2\epsilon^2})
\right).
\end{align*}
Furthermore, if $\A$ uses $g(N,\epsilon,\delta)$ many gradients, the new algorithm uses $\sum_{i\geq 1} g(N/2^i,\epsilon/2^i,\delta/2^i)$ many gradients.
\end{lemma}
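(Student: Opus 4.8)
The plan is to mirror the proof of Lemma~\ref{lm:remove_dia}, the only substantive change being that in the stochastic setting we cannot reuse the same sample across phases. Partition the $N$ i.i.d.\ samples into \emph{disjoint} blocks $S_1,\dots,S_k$ with $|S_i|=N_i=N/2^{k+1-i}$, where $k=\lceil \log\log N^3\rceil$ (note $\sum_i N_i<N$), and run $\A$ sequentially for $i=1,\dots,k$: the $i$-th invocation uses block $S_i$, privacy parameters $\epsilon_i=\epsilon/2^{k+1-i}$ and $\delta_i=\delta/2^{k+1-i}$, and the previous iterate $\omega_{i-1}$ as warm start (with $\omega_0$ the given initial point), so the noise is smallest in the last phase. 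Because the blocks are disjoint, $\omega_{i-1}$ is independent of $S_i$, hence the assumed guarantee of $\A$ applies conditionally on $\omega_{i-1}$ and bounds $\E[F(\omega_i)-F^*]$ in terms of $D_i:=\E[\|\omega_{i-1}-\omega^*\|_2^2]^{1/2}$.

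Then I would carry out the same recurrence analysis. Write $\Delta_i=\E[F(\omega_i)-F^*]$ and $\zeta_i=\tfrac1{N_i}+\tfrac{d\log(1/\delta_i)}{N_i^2\epsilon_i^2}$, and set $E_i=\Theta(\tfrac{G^2}{\mu}\zeta_i)$. Strong convexity gives $\tfrac\mu2 D_i^2\le\Delta_i$, and since $(\tfrac1{\sqrt{N_i}}+\tfrac{\sqrt{d\log(1/\delta_i)}}{N_i\epsilon_i})^2=\Theta(\zeta_i)$, the assumed bound rearranges to $\Delta_{i+1}=O(\sqrt{\Delta_i E_i}+E_i)$. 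One checks $E_i/E_{i+1}=O(1)$ (the $1/N_i$ part of $\zeta_i$ contributes ratio $2$, the privacy part an absolute constant when $\delta\le 1/2$), so after rescaling by a large constant $C$ the recurrence becomes $\tfrac{\Delta_{i+1}}{CE_{i+1}}\le\sqrt{\tfrac{\Delta_i}{CE_i}}+1$, exactly as in Lemma~\ref{lm:remove_dia}.

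To finish: strong convexity gives $\Delta_1=O(G^2/\mu)$, while $E_1=\Omega(\tfrac{G^2}{\mu N_1})=\Omega(\tfrac{G^2}{\mu N})$, so $\Delta_1/(CE_1)=O(N)\le N^3$. Applying Lemma~\ref{lm:recurrence_converge} with $k=\lceil\log\log N^3\rceil$ yields $\Delta_k=O(E_k)=O\big(\tfrac{G^2}{\mu}(\tfrac1N+\tfrac{d\log(1/\delta)}{N^2\epsilon^2})\big)$, using $N_k=\Theta(N)$, $\epsilon_k=\Theta(\epsilon)$, $\delta_k=\Theta(\delta)$. Privacy follows from basic composition (Theorem~\ref{thm:basic_com}) since $\sum_i\epsilon_i\le\epsilon$ and $\sum_i\delta_i\le\delta$, and the gradient count is $\sum_{i=1}^k g(N_i,\epsilon_i,\delta_i)=\sum_{i\ge1}g(N/2^i,\epsilon/2^i,\delta/2^i)$.

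The recurrence itself is essentially verbatim from Lemma~\ref{lm:remove_dia}, so it is routine. The two things to get right are: (i) that $E_i/E_{i+1}$ is bounded by an absolute constant even though $N_i$ and $\epsilon_i$ both vary and the nonprivate term $1/N_i$ and the private term $d\log(1/\delta_i)/(N_i\epsilon_i)^2$ do not scale identically — and that $N_k,\epsilon_k,\delta_k$ are within constant factors of $N,\epsilon,\delta$ so the final bound is the claimed one and not inflated by polylog factors; and (ii) the independence of each warm start from the fresh block it is fed, which is what legitimizes invoking the per-phase guarantee of $\A$ and is precisely why we split the sample here rather than reuse it as in the ERM reduction.
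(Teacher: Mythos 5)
Your proof is correct and follows the paper's approach: the paper simply states that one uses $N/2^{k+1-i}$ samples in the $i$-th repetition and that ``the rest of the proof is identical'' to Lemma~\ref{lm:remove_dia}, and your argument supplies exactly the details that one-line proof elides. In particular you correctly identify the two points that actually need rechecking in the SCO setting --- that the blocks must be disjoint so the warm start is independent of the fresh sample (which is what legitimizes applying $\A$'s guarantee conditionally, and is why one splits the data here unlike in the ERM version), and that $E_i/E_{i+1}$ remains an absolute constant even though $N_i$, $\epsilon_i$, $\delta_i$ all scale --- and both checks go through as you describe.
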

\begin{proof}
The only difference to Lemma~\ref{lm:remove_dia} is that this algorithm takes $N/2^{k+1-i}$ samples instead of $N$ samples in the $i$-th step for $k=\lceil \log\log N^3\rceil$, so it may have less gradient complexity. 
The rest of the proof is identical.
\end{proof}

Now, we can get the result for general convex case by Theorem~\ref{thm:local} and Lemma~\ref{lm:DPERM_ite}, then extend it to strongly convex case by Lemma~\ref{lm:remove_dia2}.

\begin{theorem}[DP-SCO, Theorem~\ref{thm:DP-SCO} restated]
Suppose $\epsilon,\delta\leq 1/2$.
Let $\{f(\cdot,x)\}_{x\in \Xi}$ is convex and $G$-Lipschitz with respect to $\ell_2$ norm and convex over $\K_r$, where $r=\frac{D\sqrt{d\log(1/\delta)}}{\epsilon N}$, there is an $(\epsilon,\delta)$-differentially private algorithm which takes
\begin{align*}
    O(N+\min\{\sqrt{\epsilon}N^{5/4}d^{1/8},\frac{\epsilon N^{3/2}}{d^{1/8}\log^{1/4}(1/\delta)}\})
\end{align*}
gradient queries to get a solution $\omega_T$
\begin{align*}
    \E[F(\omega_T)-F(\omega^*)]=O(GD(\frac{1}{\sqrt{N}}+\frac{\sqrt{d\log(1/\delta)}}{N\epsilon}).
\end{align*}
Moreover, if $\{f(\cdot,x)\}_{x\in \Xi}$ is also $\mu$-strongly convex over $\K_r$, we can use the same gradient complexity and get a solution $\omega$ such that:
\begin{align*}
    \E[F(\omega_T)-F(\omega^*)]=O\left(\frac{G^2}{\mu}(\frac{d\log(1/\delta)}{\epsilon^2N^2}+\frac{1}{N})\right).
\end{align*}
\end{theorem}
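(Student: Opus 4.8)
The plan is to derive the theorem by chaining three pieces already proved above and checking that the cascading gradient counts stay subquadratic; no new technical machinery is required. The building block is the DP-ERM guarantee of Lemma~\ref{lm:DPERM_ite}, which is fed into the iterative-localization reduction of Theorem~\ref{thm:local} to get the general convex SCO bound, and then the diameter-removal reduction of Lemma~\ref{lm:remove_dia2} is used to upgrade to the strongly convex SCO bound.

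For the general convex case, I would run the localization framework of Algorithm~\ref{alg:local} with the routine of Lemma~\ref{lm:DPERM_ite} in the role of $\A_{\epsilon,\delta}$: that lemma supplies, for every $\epsilon,\delta\le 1/2$, an $(\epsilon,\delta)$-DP algorithm for the regularized (hence strongly convex) per-phase ERM with excess empirical loss $O\!\bigl(\frac{G^2}{\mu}(\frac1N+\frac{d\log(1/\delta)}{\epsilon^2N^2})\bigr)$ using $g(N,\epsilon,\delta)=O\!\bigl(N+\min\{\sqrt\epsilon N^{5/4}d^{1/8},\frac{\epsilon N^{3/2}}{d^{1/8}\log^{1/4}(1/\delta)}\}\bigr)$ gradients, which is exactly the hypothesis of the second half of Theorem~\ref{thm:local}. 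Theorem~\ref{thm:local} then returns an $(\epsilon,\delta)$-DP general convex SCO algorithm with excess population loss $O\!\bigl(GD(\frac1{\sqrt N}+\frac{\sqrt{d\log(1/\delta)}}{\epsilon N})\bigr)$ and gradient complexity $\sum_{i=1}^{\lceil\log N\rceil}g(N/2^i,\epsilon/2^i,\delta/2^i)$. It remains to observe that, because each term of $g$ decays geometrically in $i$ under the substitution $N\mapsto N/2^i$, $\epsilon\mapsto\epsilon/2^i$ (and $\log(2^i/\delta)\ge\log(1/\delta)$ only helps), this sum is $O(g(N,\epsilon,\delta))$. That yields the first displayed bound.

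For the strongly convex case, set $\zeta=\frac1N+\frac{d\log(1/\delta)}{\epsilon^2N^2}$, so that $GD(\frac1{\sqrt N}+\frac{\sqrt{d\log(1/\delta)}}{\epsilon N})=\Theta(GD\sqrt\zeta)$. A $\mu$-strongly convex $F$ is in particular convex, so the algorithm just built is a valid $(\epsilon,\delta)$-DP algorithm whose excess population loss $O(GD\sqrt\zeta)$ is trivially within the input budget $O(\frac{G^2}{\mu}\zeta+GD\sqrt\zeta)$ of Lemma~\ref{lm:remove_dia2}. Applying that lemma strips the $GD\sqrt\zeta$ slack and produces an $(\epsilon,\delta)$-DP algorithm with excess population loss $O(\frac{G^2}{\mu}\zeta)=O\!\bigl(\frac{G^2}{\mu}(\frac1N+\frac{d\log(1/\delta)}{\epsilon^2N^2})\bigr)$; its gradient cost is once more a nested, geometrically decaying sum built out of $g$, hence still $O\!\bigl(N+\min\{\sqrt\epsilon N^{5/4}d^{1/8},\frac{\epsilon N^{3/2}}{d^{1/8}\log^{1/4}(1/\delta)}\}\bigr)$. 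Privacy for both parts follows from basic composition (Theorem~\ref{thm:basic_com}) over the $\lceil\log N\rceil$ localization phases and the $\lceil\log\log N^3\rceil$ repetitions of the diameter-removal reduction, whose privacy budgets halve at each step and therefore sum to $(\epsilon,\delta)$.

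I do not expect a genuine obstacle, as all the analytic work lives in the cited lemmas; what remains is bookkeeping. The points to watch are that the decaying schedules $(N_i,\epsilon_i,\delta_i)$ used inside both Algorithm~\ref{alg:local} and Lemma~\ref{lm:remove_dia2} preserve the stated accuracy as well as the convergence of the (nested) gradient-count sums, that the smoothing domain $\K_r$ with $r=\frac{D\sqrt{d\log(1/\delta)}}{\epsilon N}$ is the one consistently assumed by each reduction, and --- the one genuinely load-bearing point --- that localization only delivers the general-convex-optimal rate, so the strongly convex SCO bound must be obtained by post-processing with Lemma~\ref{lm:remove_dia2} rather than read off directly, which works precisely because $O(GD\sqrt\zeta)$ sits inside that reduction's input budget.
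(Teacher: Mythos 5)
Your proposal is correct and follows essentially the same route as the paper: the paper's (one-sentence) proof also plugs Lemma~\ref{lm:DPERM_ite} into Theorem~\ref{thm:local} to get the general-convex SCO bound and then applies Lemma~\ref{lm:remove_dia2} to upgrade to the strongly convex case. You have additionally spelled out the bookkeeping (geometric decay of the nested gradient sums, composition across phases, and the load-bearing observation that localization alone only yields the general-convex rate), all of which is implicit in the paper and consistent with it.
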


\begin{comment}
\begin{remark}
The excess population loss comes directly by Theorem~\ref{thm:local} and Lemma~\ref{lm:DPERM_ite}.
The privacy guarantee comes easily from the moments accountant method in \cite{acg+16}.
And we can get optimal excess population loss for strongly convex case as well, by similar arguments.
\end{remark}

\subsection{Equivalence and Reduction}
In this subsection, we show some reductions between different problems under different cases.
The following figure shows the equivalence of different problems.
$``a->b''$ means that if we can solve problem $b$, then we can also solve problem $a$.
\begin{figure}
    \centering
    \includegraphics[width = .5\textwidth]{equivalence}
    %\hfill
    %\includegraphics[width = .4\textwidth]{twodim_fre.pdf}
    \caption{Reduction for ERM and SCO, general convex case and strongly convex case.}
    \label{fig:locate_signal_sample_time}
\end{figure}

Specifically, we can use Theorem~\ref{thm:local} do reduce (private) SCO to (private) ERM.
Use Lemma~\ref{lm:remove_dia_new} to reduce the strongly convex case to general convex case, and Lemma~\ref{lm:convex_to_strongly} for the reversal.

In a word, we only need there general statements.
We can rewrite the Lemma~\ref{lm:convex_to_strongly} as the same format as Lemma~\ref{lm:remove_dia_new}.
\end{comment}

\newpage

\addcontentsline{toc}{section}{References}
\bibliographystyle{alpha}
\bibliography{ref}
\newpage

\appendix
\section{Proof of Theorem~\ref{thm:dp_meta}}
\label{sec:appendix_a}
As mentioned before, we can use the result in \cite{bdrs18} to give a formal proof of our result.
Before we start, let us define something necessary.

\begin{definition}[Truncated CDP]
Let $\rho>0$ and $\omega>1$.
A randomized algorithm $\M:\mathbb{N}^{|\Xi|}\rightarrow R$ satisfies $\omega$-truncated $\rho$-concentrated differential privacy (or $(\rho,\omega)$-tCDP) if for all neighboring $S,S'$ that differ in a single entry,
\begin{align*}
    \forall \alpha \in(1, \omega), \mathrm{D}_{\alpha}\left(\M(S) \| \M\left(S^{\prime}\right)\right) \leq \rho \alpha,
\end{align*}
where $D_{\alpha}(\cdot\|\cdot)$ denotes the Rényi divergence \cite{ren61} of order $\alpha$ (in nats, rather than bits).
\end{definition}

Similar to classic differential privacy, tCDP also enjoys a property of composition:

\begin{lemma}[Composition of tCDP]
\label{lm:com_tCDP}
Let $\M_1:\mathbb{N}^{|\Xi|}\rightarrow R_1$ satisfy $(\rho,\omega)$-tCDP and let $\M_2:\mathbb{N}^{|\Xi|}\times R_1\rightarrow R_2$ satisfy $(\rho',\omega')$-tCDP for all $y\in R_1$.
Difine $\M:\mathbb{N}^{|\Xi|}\rightarrow R_3$ by $\M(S)=\M_2(S,\M_1(S))$.
Then $\M$ satisfies $(\rho+\rho',\min\{\omega,\omega'\})$-tCDP.
\end{lemma}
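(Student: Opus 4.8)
The plan is to reduce the statement to the standard adaptive-composition argument for Rényi-type privacy definitions: a chain rule for Rényi divergence at a single fixed order, followed by post-processing. Fix neighboring databases $S,S'$ and an order $\alpha\in(1,\min\{\omega,\omega'\})$. Let $P$ (resp.\ $Q$) be the joint law of the pair $(\M_1(S),\M(S))$ (resp.\ $(\M_1(S'),\M(S'))$), where I use that the internal coins of $\M_1$ and $\M_2$ are independent, so that conditioned on the first coordinate taking value $y$ the second coordinate is distributed exactly as $\M_2(S,y)$ (resp.\ $\M_2(S',y)$). The goal is to prove $D_\alpha(P\|Q)\le(\rho+\rho')\alpha$; the lemma then follows because $\M(S)$ is the second-coordinate marginal of this pair, so the data-processing inequality for Rényi divergence gives $D_\alpha(\M(S)\|\M(S'))\le D_\alpha(P\|Q)$, and $\alpha$ and $S,S'$ were arbitrary, which is exactly $(\rho+\rho',\min\{\omega,\omega'\})$-tCDP.

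The heart of the argument is the chain rule. Writing the order-$\alpha$ divergence in exponential form and factoring the Radon--Nikodym derivative along the two coordinates $(y,z)$ as $\frac{dP}{dQ}(y,z)=\frac{dP_1}{dQ_1}(y)\cdot\frac{dP_{2\mid y}}{dQ_{2\mid y}}(z)$, where $P_1,Q_1$ are the first-coordinate marginals and $P_{2\mid y},Q_{2\mid y}$ the conditionals, I would obtain
\[
e^{(\alpha-1)D_\alpha(P\|Q)}=\E_{y\sim Q_1}\!\left[\Bigl(\tfrac{dP_1}{dQ_1}(y)\Bigr)^{\!\alpha}\,\E_{z\sim Q_{2\mid y}}\!\Bigl[\bigl(\tfrac{dP_{2\mid y}}{dQ_{2\mid y}}(z)\bigr)^{\!\alpha}\Bigr]\right].
\]
The inner expectation equals $e^{(\alpha-1)D_\alpha(\M_2(S,y)\|\M_2(S',y))}\le e^{(\alpha-1)\rho'\alpha}$ uniformly over $y$, by the $(\rho',\omega')$-tCDP guarantee of $\M_2$ (which by hypothesis holds for \emph{every} fixed $y\in R_1$) together with $\alpha<\omega'$. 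Pulling this factor out leaves $\E_{y\sim Q_1}[(dP_1/dQ_1(y))^{\alpha}]=e^{(\alpha-1)D_\alpha(\M_1(S)\|\M_1(S'))}\le e^{(\alpha-1)\rho\alpha}$ by the $(\rho,\omega)$-tCDP guarantee of $\M_1$ and $\alpha<\omega$. Multiplying the two bounds, taking logarithms, and dividing by $\alpha-1>0$ yields $D_\alpha(P\|Q)\le(\rho+\rho')\alpha$, completing the estimate.

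I expect the only fussy points to be measure-theoretic bookkeeping rather than real obstacles: choosing a common $\sigma$-finite dominating measure on $R_1\times R_2$ so that the factorization of $dP/dQ$ into a marginal times a conditional density is legitimate, and observing that the $(\rho',\omega')$ bound for $\M_2$ is assumed pointwise in $y$ (as the hypothesis literally states), so the uniform-over-$y$ step carries no almost-sure caveat. The substantive content is the one-line computation displayed above, which is precisely the way composition is established for ordinary and truncated concentrated differential privacy; the remaining step is just to record that the resulting parameter pair is $(\rho+\rho',\min\{\omega,\omega'\})$, the minimum on the second coordinate being forced by needing $\alpha$ to lie below both $\omega$ and $\omega'$.
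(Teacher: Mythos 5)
Your proof is correct. Note that the paper does not prove this lemma at all---it is imported verbatim from \cite{bdrs18}---and your argument (factor the joint law of $(\M_1(S),\M_2(S,\M_1(S)))$, bound the conditional R\'enyi factor uniformly in $y$ using the pointwise tCDP guarantee of $\M_2$ for $\alpha<\omega'$, bound the marginal factor using the guarantee of $\M_1$ for $\alpha<\omega$, then pass to the second-coordinate marginal by the data-processing inequality) is exactly the standard chain-rule proof used in that reference, so there is nothing to add beyond the measure-theoretic bookkeeping you already flag.
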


Now we state the main result of \cite{bdrs18}:
\begin{theorem}[Privacy Amplification By Subsampling]
\label{thm:tCDP}
Let $\rho,s\in(0,0.1]$ and $B,N\in \mathbb{N}$ with $q=B/N$ and $\log(1/q)\geq3\rho(2+\log_2(1/\rho))$.
Let $\M:\mathbb{N}^{|\Xi|}\rightarrow R$ satisfy $(\rho,\omega')$-tCDP for $\omega'\geq\frac{\log(1/q)}{2\rho}\geq 3$.
Define the mechanism $\M_q:\mathbb{N}^{|\Xi|}\rightarrow R$ by $\M_q(S)=\M(S_q)$ where $S_q\in \mathbb{N}^{|\Xi|}$ is the restriction of $S$ to the entries specified by a uniformly ransom subset of size $B$.

The algorithm $\M_q$ satisfies $(13q^2\rho,\omega)$-tCDP for
\begin{align*}
    \omega=\frac{\log(1/q)}{4\rho}.
\end{align*}
\end{theorem}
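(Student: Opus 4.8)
This is exactly the subsampled‑Gaussian‑style amplification analysis of \cite{bdrs18}, specialized to tCDP, and I would follow that route. Fix neighboring databases $S,S'$ that differ only at some index $i$; it suffices to bound $D_\alpha(\M_q(S)\|\M_q(S'))$ for $\alpha\in(1,\omega)$, the reverse order being symmetric. The first step exposes a mixture decomposition with a \emph{shared} component. A uniformly random size‑$B$ subset $T$ contains $i$ with probability exactly $q=B/N$; conditioned on $i\notin T$ the restricted value‑sets $S_T$ and $S'_T$ coincide, so $\M_q(S)$ and $\M_q(S')$ have a common conditional law $\mu_0$, while conditioned on $i\in T$ they have laws $\mu_1,\mu_1'$. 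Hence $\M_q(S)=(1-q)\mu_0+q\mu_1$ and $\M_q(S')=(1-q)\mu_0+q\mu_1'$.

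The second step records that $\mu_0,\mu_1,\mu_1'$ are pairwise close in every Rényi order up to $\omega'$. For $\mu_1$ versus $\mu_1'$, both are averages, over the \emph{same} uniformly random choice $R$ of the remaining $B-1$ sampled indices, of $\M$ applied to value‑sets that differ only in the single entry contributed by index $i$; for $\mu_0$ versus $\mu_1$ one couples a size‑$B$ subset of $[N]\setminus\{i\}$ with the union of $\{i\}$ and a size‑$(B-1)$ subset of $[N]\setminus\{i\}$ so that the two inputs again differ in exactly one entry. In each case $(\rho,\omega')$‑tCDP bounds the conditional divergence by $\rho\alpha$, and since $(P,Q)\mapsto\int P^\alpha Q^{1-\alpha}$ is jointly convex for $\alpha>1$ (its integrand is the perspective of the convex map $t\mapsto t^\alpha$), Jensen's inequality over the coupling gives $D_\alpha(\mu_a\|\mu_b)\le\rho\alpha$ for every pair $\mu_a,\mu_b\in\{\mu_0,\mu_1,\mu_1'\}$ and every $\alpha\in(1,\omega')$.

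The third and central step is the mixture moment bound. Let $\mu:=\M_q(S')$ and write densities $a=d\mu_0/d\mu$, $b=d\mu_1/d\mu$, $b'=d\mu_1'/d\mu$, so that $(1-q)a+qb'\equiv 1$; then the privacy‑loss ratio is $L:=1+q(b-b')$ and $e^{(\alpha-1)D_\alpha(\M_q(S)\|\M_q(S'))}=\E_\mu[L^\alpha]$. Finiteness of the order‑$2$ divergences gives $\mu_1,\mu_1'\ll\mu$, hence $\E_\mu[b-b']=0$ and the first‑order term of $\E_\mu[(1+q(b-b'))^\alpha]$ vanishes; the second‑order term is $\binom\alpha2 q^2\,\E_\mu[(b-b')^2]$, and using $\mu\ge(1-q)\mu_0$ together with $(b-b')^2\le 2(b-a)^2+2(a-b')^2$ one bounds $\E_\mu[(b-b')^2]\le\tfrac{2}{1-q}\big(\chi^2(\mu_1\|\mu_0)+\chi^2(\mu_1'\|\mu_0)\big)=O(\rho)$ from the $\alpha=2$ case of Step 2 — this is where the \emph{quadratic} (not merely linear) factor $q^2$ is produced. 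The higher‑order terms $\binom\alpha j q^j\,\E_\mu[|b-b'|^j]$ for $j\ge3$ are again comparable to order‑$j$ Rényi quantities of the $\mu$'s, and the hypotheses $\omega'\ge\tfrac{\log(1/q)}{2\rho}$, $\alpha\le\omega=\tfrac{\log(1/q)}{4\rho}$ and $\log(1/q)\ge 3\rho(2+\log_2(1/\rho))$ with $\rho\le 0.1$ are exactly what force each such term to carry a small power of $q\cdot e^{O(\alpha\rho)}\le q^{1/2}$, so the tail is a convergent series dominated by the quadratic term. Summing and absorbing constants (this is where the explicit $13$ comes out) yields $\E_\mu[L^\alpha]\le 1+(\alpha-1)\cdot 13q^2\rho\,\alpha\le e^{(\alpha-1)\cdot 13q^2\rho\,\alpha}$, i.e.\ $D_\alpha(\M_q(S)\|\M_q(S'))\le 13q^2\rho\,\alpha$ for all $\alpha\in(1,\omega)$. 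Swapping $S$ and $S'$ handles the other order, establishing $(13q^2\rho,\omega)$‑tCDP.

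\textbf{Main obstacle.} Everything before Step 3 is bookkeeping with coupling and joint convexity; the real work is the moment expansion of $\E_\mu[(1+q(b-b'))^\alpha]$. The delicate points are: (i) getting the quadratic dependence on $q$, which crucially uses the shared component $\mu_0$ and the pointwise bound $\mu\ge(1-q)\mu_0$ to turn $\E_\mu[(b-b')^2]$ into a sum of order‑$2$ divergences; (ii) handling non‑integer $\alpha$ in the binomial expansion (one cannot simply round $\alpha$ up, since $(\alpha-1)D_\alpha$ degenerates as $\alpha\downarrow1$, so a separate Taylor‑type argument is needed near $\alpha=1$); and (iii) dominating the order‑$\ge3$ tail by the order‑$2$ term, which is precisely where the interplay between the valid range $\omega'$ for the per‑entry divergences, the target range $\omega$, and the smallness of $q$ relative to $\rho$ enters, and from which the constant $13$ is extracted.
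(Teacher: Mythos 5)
First, a point of comparison you could not have known: the paper does not prove this statement at all. Theorem~\ref{thm:tCDP} is quoted as a black box from \cite{bdrs18} (``Now we state the main result of \cite{bdrs18}'') and is only \emph{used} in Appendix~\ref{sec:appendix_a} to establish Theorem~\ref{thm:dp_meta}, so there is no in-paper proof to match your attempt against. Judged as a reconstruction of the external argument, your outline follows the right route and the same skeleton as the known proof: condition on whether the differing index is sampled to get the mixture decomposition with a shared component $\mu_0$; bound the pairwise R\'enyi divergences among $\mu_0,\mu_1,\mu_1'$ by $\rho\alpha$ via a coupling of the subsample and joint convexity of $(P,Q)\mapsto\int P^\alpha Q^{1-\alpha}$; then control $\E_\mu\bigl[(1+q(b-b'))^\alpha\bigr]$, with the vanishing first moment and an order-two term of size $O(\alpha^2 q^2\rho)$ supplying the $q^2\rho$ scaling.

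The gap is that the decisive step is described rather than carried out, and it is exactly where the theorem's content lives. The term-by-term ``binomial'' expansion of $(1+q(b-b'))^\alpha$ is not legitimate as written: the series only converges when $|q(b-b')|\le 1$, while the likelihood ratios $b,b'$ are unbounded, so one must split the integration by the size of $b-b'$ (or run a separate Taylor-with-remainder argument), and it is precisely in bounding the large-ratio region that the hypotheses $\omega'\ge\log(1/q)/(2\rho)$, $\alpha\le\log(1/q)/(4\rho)$, $\log(1/q)\ge 3\rho(2+\log_2(1/\rho))$ and $\rho\le 0.1$ are consumed; your sketch asserts that each order-$j\ge3$ term ``carries a small power of $q$'' and that the constant $13$ ``comes out'' after absorbing constants, but none of these estimates is performed, and the non-integer-$\alpha$ issue you flag in (ii) is left unresolved. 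So the proposal is a sound plan consistent with the proof in \cite{bdrs18}, but as it stands it is an outline with the quantitative core missing; for the purposes of this paper the honest alternative is to do what the authors do and cite \cite{bdrs18} for the statement.
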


This theorem can apply to our algorithm $\mathsf{META_{DP}}$ directly, as we are using subsampling without replacement.
More specifically, we are using subsampling Gaussian Mechanism, and for Gaussian Mechanism we have the following fact:
\begin{fact}
Let $P=\N(1,1/2\rho)$ and $Q=\N(0,1/2\rho)$. Then $D_{\alpha}(P\mid Q)=\rho \alpha$ for all $\alpha\in (1,\infty)$.
In other word, the Gaussian Mechanism with sensitive 1 satisfies $(\rho,\infty)$-tCDP.
\end{fact}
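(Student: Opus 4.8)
The plan is to compute the Rényi divergence directly from its integral definition and then read off the tCDP consequence. Recall that for densities $p,q$ one has $D_\alpha(P\|Q)=\frac{1}{\alpha-1}\log\int p(x)^\alpha q(x)^{1-\alpha}\,\d x$. Here $P$ and $Q$ are one-dimensional Gaussians with common variance $\sigma^2=1/(2\rho)$ and means $1$ and $0$, so $p(x)^\alpha q(x)^{1-\alpha}$ equals $(2\pi\sigma^2)^{-1/2}$ times the exponential of $-\frac{1}{2\sigma^2}\big(\alpha(x-1)^2+(1-\alpha)x^2\big)$.

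First I would expand and complete the square in the exponent: $\alpha(x-1)^2+(1-\alpha)x^2=x^2-2\alpha x+\alpha=(x-\alpha)^2-\alpha(\alpha-1)$. This rewrites the integrand as $e^{\alpha(\alpha-1)/(2\sigma^2)}\cdot(2\pi\sigma^2)^{-1/2}e^{-(x-\alpha)^2/(2\sigma^2)}$, whose last two factors form the density of $\N(\alpha,\sigma^2)$ and hence integrate to $1$. Therefore $\int p(x)^\alpha q(x)^{1-\alpha}\,\d x=e^{\alpha(\alpha-1)/(2\sigma^2)}$, and applying $\frac{1}{\alpha-1}\log(\cdot)$ gives $D_\alpha(P\|Q)=\frac{\alpha}{2\sigma^2}=\alpha\rho$ after substituting $\sigma^2=1/(2\rho)$. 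Nothing in this derivation uses anything about $\alpha$ beyond $\alpha>1$ (needed only so that $D_\alpha$ is defined and $\alpha-1\neq 0$), so the identity holds for all $\alpha\in(1,\infty)$.

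For the displayed consequence, I would note that the Gaussian mechanism with $\ell_2$-sensitivity $1$ outputs $f(S)+\xi$ with $\xi\sim\N(0,\sigma^2\I)$, so on neighboring $S,S'$ the output laws are $\N(f(S),\sigma^2\I)$ and $\N(f(S'),\sigma^2\I)$ with $\|f(S)-f(S')\|_2\le 1$. Rotating coordinates so that $f(S)-f(S')$ points along a single axis, the Rényi divergence decomposes coordinatewise via the tensorization identity $D_\alpha(P_1\otimes P_2\|Q_1\otimes Q_2)=D_\alpha(P_1\|Q_1)+D_\alpha(P_2\|Q_2)$; all but one coordinate contribute $0$, and the remaining coordinate is exactly the one-dimensional Gaussian divergence computed above with mean gap at most $1$, which the closed form $\alpha(\mu_1-\mu_2)^2/(2\sigma^2)$ shows is at most $\alpha\rho$. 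Since this bound holds for every $\alpha\in(1,\infty)$, the mechanism is $(\rho,\infty)$-tCDP. I do not expect a genuine obstacle; the only steps needing care are getting the completion of the square right and stating the multivariate reduction cleanly via tensorization.
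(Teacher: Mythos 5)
Your proposal is correct: the completion of the square gives $\int p^\alpha q^{1-\alpha}\,\d x=e^{\alpha(\alpha-1)\rho}$ and hence $D_\alpha(P\|Q)=\rho\alpha$ exactly, and the rotation-plus-tensorization step correctly reduces the $d$-dimensional sensitivity-$1$ case to this one-dimensional computation, yielding $(\rho,\infty)$-tCDP under the paper's definition. The paper itself states this Fact without proof (it is the standard Gaussian R\'enyi-divergence identity imported from the tCDP literature), so your derivation simply supplies the omitted standard argument and does so without any gap.
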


Now we can start our proof.
\begin{proof}[Proof of Theorem~\ref{thm:dp_meta}]
For the $t$-th phase of $\mathsf{META_{DP}}$, let $\M(S)=\sum_{x\in S}\nabla f(\omega_{t-1},x)+v$ where $v\sim\N(0,\sigma^2I_{d\times d})$. 
As we are considering $G$-Lipschitz function $f$, then we know that $\|\nabla (f)\|_2\leq G$, which means that $\M$ is $(\rho,\infty)$-tCDP where $\rho=G^2/(2\sigma^2)$.

Assume our parameters satisfy the precondition of Theorem~\ref{thm:tCDP} first, then we know that the $t$-th phase of $\mathsf{META_{DP}}$ is $(13q^2\rho,1/\rho)$-tCDP.
By the composition property (Lemma~\ref{lm:com_tCDP}), we know that $\mathsf{META_{DP}}$ is $(13Tq^2\rho,1/\rho)$-tCDP.

When $Tq^2\rho\cdot \frac{\log(1/\delta)}{\epsilon}\leq O( \epsilon)$ and $\frac{\log(1/\delta)}{\epsilon}\leq O(\frac{1}{\rho})$, we know that $\mathsf{META_{DP}}$ is $(\epsilon,\delta)$-differentially private \cite{bdrs18}.

By setting $\sigma=\frac{c_2GB\sqrt{T\log(1/\delta)}}{\epsilon N}$, we have that $\rho = \frac{\epsilon^2 N^2}{2 c_2^2 B^2 T \log(1/\delta)}$. Together with the assumption $\epsilon\leq c_1B^2T/N^2$, we have both
$Tq^2\rho\cdot \frac{\log(1/\delta)}{\epsilon}\leq O( \epsilon)$ and $\frac{\log(1/\delta)}{\epsilon}\leq O(\frac{1}{\rho})$ as claimed. This completes the proof.

%By setting $\sigma=\frac{c_2GB\sqrt{T\log(1/\delta)}}{\epsilon N}$ and $\sigma=\frac{c_2GB\sqrt{T\log(1/\delta)}}{\epsilon N}=\Omega(G)$ as claimed, we complete the proof.
\end{proof}

\end{document}